\documentclass[pdflatex,sn-mathphys-num]{sn-jnl}

\usepackage{graphicx}%
\usepackage{multirow}%
\usepackage{amsmath,amssymb,amsfonts}%
\usepackage{amsthm}%
\usepackage{mathrsfs}%
\usepackage[title]{appendix}%
\usepackage{xcolor}%
\usepackage{textcomp}%
\usepackage{manyfoot}%
\usepackage{booktabs}%
\usepackage{tikz}
\usepackage{pgfplots}

\pgfplotsset{compat=1.16}
\usetikzlibrary{arrows.meta,decorations.pathmorphing,backgrounds,positioning,calc,patterns,fit}

\definecolor{holo}{RGB}{30,110,190}
\definecolor{antiholo}{RGB}{200,80,30}
\definecolor{probeC}{RGB}{40,140,90}
\definecolor{basinC}{RGB}{90,150,200}
\definecolor{badC}{RGB}{200,120,120}
\definecolor{axisgray}{RGB}{110,110,110}
\definecolor{hebbC}{RGB}{120,60,160}

\newcommand{\stagestrip}[2]{%
  \foreach \i/\txt in {1/{Energy \& drive},2/{Descent},3/{Nudge \& probe},4/{Lock-in}}{%
    \ifnum\i=#1\relax
      \node[draw=holo, fill=holo!15, rounded corners=2pt, font=\scriptsize\bfseries, minimum width=2.7cm, minimum height=0.42cm, inner sep=1pt] at ({(\i-2.5)*2.95}, #2) {\txt};
    \else
      \node[draw=axisgray!60, text=axisgray, rounded corners=2pt, font=\scriptsize, minimum width=2.7cm, minimum height=0.42cm, inner sep=1pt] at ({(\i-2.5)*2.95}, #2) {\txt};
    \fi
  }%
  \foreach \i in {1,2,3}{\draw[->, axisgray!60] ({(\i-2.5)*2.95+1.35}, #2) -- ({(\i-1.5)*2.95-1.35}, #2);}%
}

\newcommand{\R}{\mathbb{R}}
\newcommand{\C}{\mathbb{C}}
\newcommand{\T}{\mathbb{T}}
\newcommand{\im}{\mathrm{i}}
\newcommand{\dd}{\mathrm{d}}
\newcommand{\dt}{\Delta t}
\newcommand{\unitproject}{\mathrm{u}}
\DeclareMathOperator{\Real}{Re}
\DeclareMathOperator{\Imag}{Im}

\newcommand{\softproj}{\mathrm{u}_\varepsilon}
\newcommand{\ff}{\mathrm{ff}}
\newcommand{\radius}{\varrho}
\newcommand{\ip}[2]{\left\langle #1, #2 \right\rangle}

\theoremstyle{thmstyleone}%
\newtheorem{theorem}{Theorem}
\newtheorem{proposition}[theorem]{Proposition}%

\theoremstyle{thmstyletwo}%
\newtheorem{remark}{Remark}%

\theoremstyle{thmstylethree}%

\theoremstyle{thmstyleone}%
\newtheorem{lemma}[theorem]{Lemma}
\newtheorem{corollary}[theorem]{Corollary}

\begin{document}

\title[Lock-in EP: An In-Situ Training Algorithm for Oscillatory Hardware]{Lock-in EP: An In-Situ Training Algorithm for Oscillatory Hardware}


\author[1]{\fnm{Sowjanya} \sur{Tammali}}

\author*[2]{\fnm{Wilkie} \sur{Olin-Ammentorp}}\email{wolinammentorp@anl.gov}

\affil[1]{\orgdiv{Computer Science}, \orgname{Missouri University of Science and Technology}, \orgaddress{\street{500 West 15th Street}, \city{Rolla}, \postcode{65409}, \state{MO}, \country{USA}}}

\affil*[2]{\orgdiv{Mathematics \& Computer Science Division}, \orgname{Argonne National Laboratory}, \orgaddress{\street{9700 S. Cass Avenue}, \city{Lemont}, \postcode{60439}, \state{IL}, \country{USA}}}


\abstract{Analog hardware platforms offer the potential to reduce energy consumption over digital architectures, but in order to succeed, large-scale analog systems must also be able to operate with or recover from the variability of their components. Towards this goal, we derive and demonstrate the lock-in equilibrium propagation (LIEP) training method. LIEP provides local gradient information for each component in an oscillatory network without separate forward and backward sweeps, potentially allowing for \textit{in-situ} learning capabilities on analog oscillatory hardware platforms. We demonstrate that LIEP can be used both for \textit{ab-initio} training as well as recovering performance when pre-trained parameters are perturbed. We show that LIEP can be formulated as a three-factor update rule, and suggest that although the method is currently only validated on shallow networks, alternate architectures may allow it to extend to deep and large-scale networks addressing complex tasks.}

\keywords{oscillatory computing, training methods, equilibrium propagation, neuromorphic computing}



\maketitle

\section{Introduction}\label{sec1}

While innovation in artificial intelligence (AI) over the past decades has yielded novel tools and algorithms which can be adopted in many fields, the exploding popularity of these tools has also led to concerns over commensurate growth in the energy consumption of computing, particularly large-scale data centers \cite{siddikEnvironmentalFootprintData2021, economiststaffDataCentresImproved2024}. Analog computing methods have long been posited as a way to reduce the energetic cost of computational methods, including AI systems \cite{momeniTrainingPhysicalNeural2025, mannocciInmemoryComputingEmerging2023}. However, while this path still holds significant promise, major challenges remain before large-scale analog systems integrating novel computational primitives can compete with the digital systems deployed at-scale today.

One of these challenges is inherent in the shift from digital to analog computation: while digital systems overcome the variability of electronic devices by driving them to stable limits which represent digital states, analog systems must find ways to overcome or even utilize the variabilities present in the dynamic range used for their computations. Furthermore, while analog systems may utilize infinitesimally small amounts of power to carry out certain computations, a digital read-out of these results requires power, area, and time \cite{adc_survey}. As a result, the scale-up of many analog computing systems such as crossbars has encountered the issue that while analog computational kernels can be extremely efficient, the inefficient process of reading out and communicating these results can effectively cancel any efficiency gains \cite{xiaTechnologicalExplorationRRAM2016, amirsoleimaniMemoryVectorMatrix2020, aguirreHardwareImplementationMemristorbased2024}. As a result, we propose that the development of large-scale analog systems which can disrupt the highly successful space of digital logic must be able to both overcome or utilize the variability of its components, and avoid costly high-resolution conversions between analog and digital domains.

Previously, we have proposed that spiking networks which operate via the exchange of phases provides a rich set of dynamics that can be used to implement an enormous variety of algorithms --- including neural networks --- via sets of interacting, frequency-locked oscillators \cite{olin-ammentorpHyperdimensionalComputingProvides2023}. In this computational system, the utilization of timed spikes to represent phases allows for outputs to be decoded using simple 1-bit thresholding detectors measuring relative arrival times, rather than expensive multi-bit voltage or current encoded signals \cite{chenAnalysisDesignEnergy2016, caoDevelopmentTropicalAlgebraic2023}. However, even if such a phase-based system is capable of computing entirely via the exchange and modulation of analog signals, the challenge of establishing how these systems can overcome real-world analog variabilities remains. 

In this work, we present a concrete step forward to addressing this issue by deriving and demonstrating an \textit{in-situ} training algorithm which can be used to optimize a set of parameters in an oscillatory network given an objective function. This approach is based off the well-known equilibrium propagation algorithm, but is modified to match the primitives available for computation in an oscillatory system \cite{scellierEquilibriumPropagationBridging2017, laborieuxScalingEquilibriumPropagation2021}. This method, based on the injection of pre-defined sinusoidal signals into an oscillatory network, creates frequency sidebands which encode gradient information locally at each oscillator in the network. By decoding this information, updates can be applied locally at each oscillator without the need for a separate backpropagation step. As this method resembles the operating principles of a lock-in amplifier, this variation is termed ``lock-in equilibrium propagation'' (LIEP). 

Following the derivation of this method, we provide empirical results which demonstrate that LIEP can train shallow neural networks implemented via oscillatory systems, and provides updates which approximate standard backpropagation. We note that obtaining these desired results is dependent on several operating parameters controlling the frequency of the perturbation and oscillators, and characterize the resulting ``adiabatic'' operating regime necessary for LIEP to perform. 

We next characterize the performance of LIEP in a use case where we believe it will provide maximum utility: fine-tuning or ``repairing'' networks on analog hardware in which parameters drift from their desired values, have asymmetric updates, or are ``stuck'' in high or low states --- common failures among large analog memories \cite{kannanModelingDetectionDiagnosis2015, chenFaultModelingTesting2015}. We show that LIEP is effective at recovering the performance of networks which have been moderately perturbed from their ideal, digitally trained parameters, and establish the boundaries past which LIEP is no longer effective at recovering performance. 

Finally, we conclude with a discussion on an alternate, three-rule update formulation of LIEP which could potentially be implemented on neuromorphic platforms. We note that our experiments take place under the conditions of shallow networks, but suggest that previously established strategies for enabling the training of deep and attentional neural networks for oscillatory hardware may be effectively adopted for training with the LIEP algorithm, and leave a detailed exploration of this direction to future work. 

\section{Derivation}

As a basic understanding of equilibrium propagation (EP) is necessary in order to understand LIEP, we first provide a brief introduction of the method and its key differences from standard backpropagation. Readers familiar with EP may proceed directly to the derivation of LIEP (Section \ref{sec:LIEP}).

\subsection{Equilibrium Propagation}

Standard neural networks are ``feedforward'' systems: given a set of inputs and an internal state, a series of linear and nonlinear operations are used to calculate the output of the network. As a result, the output of layers deeper in the network have no effect on earlier layers; they are causally unconnected. Therefore in order to carry out credit assignment --- that is, to understand how to adjust the network's parameters to improve performance given its current output --- the feedforward network must be ``unrolled'' backwards to allow cause and effect to be reversed \cite{rumelhartLearningRepresentationsBackpropagating1986}. While extremely effective and relatively simple, as backpropagation is essentially a specialized case of the chain rule of differentiation, this process requires separate computational steps and full memory of the state, inputs, and outputs of the network. For this and other reasons, backpropagation is not generally accepted as a biologically realistic model of learning \cite{ororbiaBrainInspiredMachineIntelligence2023, songCanBrainBackpropagation}.

Equilibrium propagation (EP) was developed as one alternative method which can be used to train neural networks \cite{scellierEquilibriumPropagationBridging2017}. However, standard feedforward networks cannot be directly applied to EP; the core principle of EP is to define an energy function on a network of components in which influences are bidirectional. To provide a concrete example, let us consider a neural network with a state $s$ containing its neuronal activations, and a set of parameters $\theta$ containing weights $W$. A Hopfield-like scalar energy function \cite{hopfieldNeuralNetworksPhysical1982} for this network is defined as a function of the parameters and state:
\begin{equation}
E(\theta, s) = \frac{1}{2}\sum_{\ell} \|s_\ell\|^2 \;-\; \sum_{\ell} \rho(s_\ell)^\top W_\ell\, \rho(s_{\ell+1})
\end{equation}
Where the first term is a norm on its activations and the second term symmetrically couples the outputs of the layer $\ell$ and $\ell+1$ after they have been passed through the activation function $\rho$. These energy terms are summed across layers to provide a single, scalar energy function $E(\theta, s)$ for the entire network. Unlike a feedforward network, we cannot simply solve this state $s$ in one pass, as the symmetric connections between layers allow influences from one neuron to reach forward and backward across the entire network. Instead to find a state $s$ of the network given a set of inputs ``clamped'' at the first layer $\ell_0$, we iteratively solve an energy minimization equation through a simulated time:
\begin{equation}
\frac{ds}{dt} = -\frac{\partial E}{\partial s}(\theta, s)
\end{equation}
Given that the network satisfies Lyapunov conditions, it will eventually descend to a fixed point $s_0^*(\theta)$ which is defined as the ``free equilibrium.'' This energy descent allows us to find the state of the network given a set of inputs. While this descent appears laborious and unnecessary given the simple, single-step operation of feedforward networks, the crucial detail of this method is that it is assumed that a physical computing platform will compute this function implicitly via energy relaxation in continuous time, rather than simulating it step-by-step as is done in a digital computer. 

The free equilibrium state may not provide an output which matches a desired value $y$. Therefore, a scalar cost term $C(s,y)$ is introduced which measures the distance of the state from a desired values. The energy of the network is then augmented with a contribution from this cost term, multiplied by the small scalar value $\beta$: 

\begin{equation}
F(\theta, s, \beta) = E(\theta, s) + \beta\, C(s, y)
\end{equation}
With this new contribution added to the energy term, the network descends on $F$ to reach a ``nudged'' fixed point. The central result of EP proves that the gradient of a loss function $L(\theta)$ defined at the fixed free point, $L(\theta) = C(s_0^*(\theta), y)$, can be found as a difference of values found between the free and nudged states:

\begin{equation}\label{eq:eqprop_theorem}
\frac{\partial L}{\partial \theta} = \lim_{\beta \to 0} \frac{1}{\beta}\left[\frac{\partial E}{\partial \theta}\big(\theta, s_\beta^*\big) - \frac{\partial E}{\partial \theta}\big(\theta, s_0^*\big)\right]
\end{equation}
Therefore, by stepping through successive free and nudged equilibrium points, the network dynamics themselves express the information necessary to descend a loss function. 

\subsection{Lock-in Equilibrium Propagation}
\label{sec:LIEP}

Rather than a single algorithm, we may also view EP as a ``recipe'' to develop variational methods applied to other network types and domains --- for instance, holomorphic EP extends it to the case of complex-valued networks \cite{laborieuxHolomorphicEquilibriumPropagation2022}. Following this example, LIEP demonstrates that the principles of EP may also be applied to the domain of oscillators which compute by exchanging relative phase values. To do so, we must first formally define the domain of our phase network's state $z$, define a scalar energy function on this state, and show that descending this energy function will reach a stable state. Next, we define a nudging function and cost term for the network's outputs, and show that the nudged and free states can be used to extract a useful gradient to reduce a loss term. Additionally, while the following sections illuminate the most important aspects of matching these stages to derive the LIEP method, some details have been excluded in the interest of brevity; the full proof is presented in the supplementary materials. 

\subsubsection{Phase Energy Functions}

Following our prior work, we start by considering a network of frequency-locked oscillators arranged in $L$ layers, which produce a vector of complex states $z$ representing the phase of each oscillator with respect to a reference \cite{olin-ammentorpHyperdimensionalComputingProvides2023}. The state of each layer $\ell$ of $d_\ell$ oscillators lies on the unit torus in $\C^{d_\ell}$, $z_\ell \in \T^{d_\ell} = \{z \in \C^{d_\ell} : |z_j| = 1 \ \forall j\}$. To define an energy function on this state, the same cosine-similarity based metric used to calculate the ``closeness'' of two phase vectors may be applied directly as the scalar energy function for a single layer:

\begin{equation}\label{eq:phasor_layer_energy}
    E_\ell = -\Real\langle W_\ell z_{\ell-1}, z_\ell\rangle
\end{equation}
where $W_\ell$ are real-valued weights. Cosine similarity in the complex plane is calculated by taking the real component of the Hermitian inner product $\langle v, w\rangle = v^H w$ between the weighted drive from the previous layer's state ($W_\ell z_{\ell-1}$) and the layer's current state ($z_\ell$); this quantity is maximized when the drive into a layer and its oscillatory states align in phase. Similarly, we may define the cost function at the output based on the same metric:

\begin{equation}\label{eq:phasor-cost}
C(z_L, y) = 1 - \frac{1}{d}\Real\langle z_L, y \rangle,
\end{equation}
where $z_L$ is the network's output layer of $d$ oscillators and $y$ is the desired output expressed in terms of relative phases on the same domain as $z$. This allows us to produce the scalar energy function for the entire network:

\begin{equation}\label{eq:phasor-energy}
\Phi(z;\beta) = E(z) + \beta\,C(z_L,y) = -\sum_{\ell=1}^{L}\Real\langle W_\ell z_{\ell-1}, z_\ell\rangle + \beta\,C(z_L,y),
\end{equation}
where $\beta$ is the ``nudging'' term which is manipulated to move between free states and ``nudged'' states influenced by the cost function.

\subsubsection{Phase Energy Descent}

Given our energy functions, we must next define a settling function which guarantees that an initial state $z$ will descend to a stable fixed point $z^\circ$ given an input drive to the network, $x$. Given our real-valued, scalar energy functions defined in the previous section, we may define Wirtinger derivatives between our complex state and the energy function. The direction of steepest descent for this energy function given our complex state is found via the term $-2\,\partial\Phi/\partial\bar z_\ell$. To descend our energy function through successive steps in state, we first find the following derivatives:

\begin{align}
\frac{\partial \Phi}{\partial \bar z_\ell} = -\tfrac12\big(W_\ell z_{\ell-1} + W_{\ell+1}^\top z_{\ell+1}\big),\qquad \ell = 1,\dots,L-1, \label{eq:grad-hidden}\\
\frac{\partial \Phi}{\partial \bar z_L} = -\tfrac12\big(W_L z_{L-1} + \tfrac{\beta}{d}\,y\big). \label{eq:grad-output}
\end{align}
for the hidden and output layers, respectively. Doubling and negating these terms, we produce the direction of steepest descent for each layer, which we term the \textit{drive}:

\begin{equation}\label{eq:drive}
D_\ell := -2\,\frac{\partial\Phi}{\partial\bar z_\ell} = W_\ell z_{\ell-1} + W_{\ell+1}^\top z_{\ell+1},
\qquad W_{L+1}^\top z_{L+1} \equiv \tfrac{\beta}{d}\,y.
\end{equation}

We note that in the above drive equation for the output, the layer $L+1$ represents the desired output $y$, and is not physically realized in the system other than nudging the output oscillators. The resulting structure is illustrated in Figure~\ref{fig:energy-drive}.

\begin{figure}[t]
\centering
\begin{tikzpicture}[>=Stealth]
\stagestrip{1}{0}
\begin{scope}[every node/.style={font=\scriptsize}]
  \foreach \c/\x in {0/-5.4, 1/-3.9, 2/-2.4}{
    \foreach \r in {1,2,3}{ \node[circle, draw=holo, fill=holo!10, minimum size=0.4cm, inner sep=0pt] (n\c\r) at (\x, {-0.8-0.8*\r}) {}; }
  }
  \foreach \r in {1,2,3}{ \node[circle, draw=probeC, dashed, minimum size=0.4cm, inner sep=0pt] (n3\r) at (-0.9, {-0.8-0.8*\r}) {}; }
  \foreach \c/\d in {0/1, 1/2}{ \foreach \r in {1,2,3}{ \foreach \q in {1,2,3}{ \draw[axisgray!40, thin] (n\c\r) -- (n\d\q); } } }
  \foreach \r in {1,2,3}{ \draw[probeC!60, thin, dashed] (n3\r) -- (n2\r); }
  \foreach \c/\d/\lab in {0/1/{W_1}, 1/2/{W_2}}{
    \draw[->, antiholo, thick] ($(n\c1)+(0.25,0.45)$) -- ($(n\d1)+(-0.25,0.45)$) node[midway, above, antiholo] {$\lab$};
    \draw[->, antiholo, thick, dashed] ($(n\d3)+(-0.25,-0.45)$) -- ($(n\c3)+(0.25,-0.45)$) node[midway, below, antiholo] {$\lab^{\top}$};
  }
  \draw[->, probeC, thick] ($(n32)+(-0.25,0)$) -- ($(n22)+(0.28,0)$);
  \node[probeC, above, font=\scriptsize] at (-1.65,-2.3) {$\tfrac{\beta}{d}\,y$};
  \node at (-5.4,-4.45) {$z_0 = x$};
  \node at (-3.9,-4.45) {$z_1$};
  \node at (-2.4,-4.45) {$z_L$};
  \node[probeC, align=center] at (-0.9,-4.55) {$y$\\[-2pt] {\tiny ``$\ell = L{+}1$''}};
\end{scope}
\begin{scope}[shift={(1.1,-2.5)}]
  \draw[axisgray!50, thin] (n12.north east) -- (-0.85,0.5);
  \draw[axisgray, thick] (0,0) circle (0.9);
  \fill[axisgray] (0,0) circle (0.03);
  \coordinate (ff) at (15:0.62);
  \coordinate (fb) at ($(ff)+(55:0.5)$);
  \draw[->, antiholo, thick] (0,0) -- (ff) node[pos=0.6, below, font=\tiny, antiholo, yshift=-1pt] {$W_\ell z_{\ell-1}$};
  \draw[->, antiholo, thick, dashed] (ff) -- (fb) node[pos=0.5, right, font=\tiny, antiholo, xshift=1pt] {$W_{\ell+1}^{\top} z_{\ell+1}$};
  \draw[->, antiholo, very thick] (0,0) -- (fb) node[pos=1.0, above left, font=\scriptsize, antiholo, xshift=-1pt] {$D_j$};
  \draw[->, holo, very thick] (0,0) -- (120:0.9) node[pos=1, anchor=south east, font=\scriptsize, holo, xshift=-1pt] {$z_j$};
  \draw[axisgray, thin] (120:0.72) arc (120:33:0.72);
  \node[font=\tiny, axisgray] at (66:1.15) {$\theta_j-\theta_{D_j}$};
  \node[font=\tiny, align=center] at (0,-1.25) {drive on one\\oscillator};
\end{scope}
\begin{axis}[at={(3.35cm,-3.8cm)}, anchor=south west, width=3.3cm, height=2.9cm,
  xmin=-180, xmax=180, ymin=-1.25, ymax=1.25, xtick={-180,0,180}, xticklabels={$-\pi$,$0$,$\pi$}, ytick={-1,1},
  xlabel={$\theta_j-\theta_{D_j}$}, ylabel={$E_j/|D_j|$}, tick label style={font=\tiny}, label style={font=\tiny},
  xlabel style={yshift=4pt}, ylabel style={yshift=-6pt}, axis line style={axisgray}]
  \addplot[antiholo, thick, domain=-180:180, samples=80] {-cos(x)};
  \addplot[holo, only marks, mark=*, mark size=1.6pt] coordinates {(72,{-cos(72)})};
  \node[font=\tiny, holo, anchor=west] at (axis cs:80,0.1) {state};
  \node[font=\tiny, antiholo, anchor=north] at (axis cs:0,-1.0) {aligned};
  \draw[->, holo, thin] (axis cs:60,{-cos(60)-0.1}) -- (axis cs:15,{-cos(15)-0.1});
\end{axis}
\end{tikzpicture}
\caption{Layers of neurons with states $z_\ell$ are connected via symmetric weights (left). The energy of each layer is defined as the non-alignment in phases between that layer's current state and its input (Eq. \ref{eq:phasor_layer_energy}). For intermediate layers (e.g. $z_1$), descending this derived energy function can be accomplished by following the ``drive'' function (Eq. \ref{eq:drive}) which is dependent on both the previous layer ($z_{\ell-1}$) and next layer ($z_{\ell+1}$). The phase term of the state $\theta_j$ aligns with the phase of the drive term $\theta_{D_j}$, decreasing the layer's energy over time. The desired outputs $y$ can influence the energy at each layer by contributing an output layer drive from a ``virtual'' layer ($L+1$), normalized by the output dimensionality $d$ and modulated by $\beta$.}
\label{fig:energy-drive}
\end{figure}
 Following the drive at each layer produces a continuous-time gradient flow in $\C^{d_\ell}$ for each oscillator:

\begin{equation}
\frac{\dd z_\ell}{\dd t} = -\gamma z_\ell - 2\frac{\partial\Phi}{\partial\bar z_\ell} = -\gamma z_\ell + D_\ell,\qquad \ell = 1,\dots,L. \label{eq:continuous-flow}
\end{equation}

where $\gamma$ is a linear decay term with $\gamma > 0$. However, to meet our prior requirement that fixed points of the network lie on the unit torus, we must apply this drive to ensure that our energy descent approaches and ``lands'' on this subspace. Therefore, a discretized energy descent utilizing the drive projects updates towards the unit torus with each step $\Delta t$:

\begin{equation}
    z_\ell \leftarrow (1 - \dt)\, z_\ell + \dt\; \unitproject(D_\ell),
    \label{eq:discrete-settle}
\end{equation}

where $\unitproject(g) = g/|g|$ is the unit projection, applied to each oscillator's drive, and $\gamma = 1$. This yields an Euler update step for the oscillatory system to descend to a stable fixed point $z^\circ$ on the unit torus given an initial $z$ and a drive function $D$. This movement towards fixed points aligns the drive of each layer and its oscillators in phase, reducing the energy of the system over time. The rate at which the system converges towards a fixed point is set by this linear update, and we denote this measurable settling rate with the value $\kappa$. Additionally, this projection onto the torus takes the place of a saturating nonlinearity in a standard, real-valued network; rather than applying a function like $\rho(s)$, our update is projected using this update step. Figure~\ref{fig:descent} illustrates the projection and the settling trajectory.

\begin{figure}[t]
\centering
\begin{tikzpicture}[>=Stealth]
\stagestrip{2}{0}
\draw[draw=axisgray!55, fill=axisgray!4, rounded corners=3pt] (-6.70,-1.00) rectangle (-3.75,-3.70);
\node[font=\scriptsize\bfseries, text=axisgray, anchor=north, align=center, text width=2.8cm]
  at (-5.225,-1.08) {traditional\\feedforward network};
\begin{axis}[scale only axis, at={(-6.10cm,-3.45cm)}, anchor=south west, width=2.25cm, height=1.35cm,
  xmin=-2.6, xmax=2.6, ymin=-1.5, ymax=1.5, xtick=\empty, ytick=\empty,
  axis lines=middle, axis line style={axisgray}, clip=false]
  \draw[axisgray!60, dashed] (axis cs:-2.6,1) -- (axis cs:2.6,1);
  \draw[axisgray!60, dashed] (axis cs:-2.6,-1) -- (axis cs:2.6,-1);
  \addplot[holo, thick, domain=-2.6:2.6, samples=60] {tanh(x)};
  \node[font=\tiny, axisgray, anchor=west] at (axis cs:2.65,1) {$+1$};
  \node[font=\tiny, axisgray, anchor=west] at (axis cs:2.65,-1) {$-1$};
  \node[font=\tiny, anchor=north east] at (axis cs:2.6,-0.1) {$s$};
  \node[font=\tiny, holo, anchor=west] at (axis cs:-2.45,0.42) {$\rho(s)$};
\end{axis}
\node[font=\scriptsize, align=center, anchor=north, text width=3.0cm] at (-5.225,-4.05)
  {bounded by a squashing\\nonlinearity $\rho$};
\begin{scope}[shift={(-1.05,-2.35)}]
  \draw[axisgray, thick] (0,0) circle (1.05);
  \fill[axisgray] (0,0) circle (0.03);
  \draw[axisgray!40, thin] (0,0) -- (35:1.45);
  \draw[->, antiholo, thick] (0,0) -- (35:0.52);
  \node[font=\tiny, antiholo, anchor=north west, inner sep=1pt] at (35:0.44) {$D_1$};
  \draw[->, holo!75, thin, dashed, shorten >=2.5pt] (35:0.57) -- (35:1.05);
  \fill[holo] (35:1.05) circle (0.055);
  \node[font=\tiny, holo, anchor=west, inner sep=2.5pt] at (35:1.09) {$\unitproject(D_1)$};
  \draw[axisgray!40, thin] (0,0) -- (130:1.72);
  \draw[->, antiholo, thick] (0,0) -- (130:1.50);
  \node[font=\tiny, antiholo, anchor=south west, inner sep=1pt] at (130:1.50) {$D_2$};
  \draw[->, holo!75, thin, dashed, shorten >=2.5pt] (130:1.45) -- (130:1.05);
  \fill[holo] (130:1.05) circle (0.055);
  \node[font=\tiny, holo, anchor=east, inner sep=2.5pt] at (-1.30,0.82) {$\unitproject(D_2)$};
  \draw[axisgray!40, thin] (0,0) -- (225:1.45);
  \draw[->, antiholo, thick] (0,0) -- (225:0.85);
  \node[font=\tiny, antiholo, anchor=north west, inner sep=1pt] at (225:0.55) {$D_3$};
  \draw[->, holo!75, thin, dashed, shorten >=2.5pt] (225:0.90) -- (225:1.05);
  \fill[holo] (225:1.05) circle (0.055);
  \node[font=\tiny, holo, anchor=north east, inner sep=2.5pt] at (225:1.09) {$\unitproject(D_3)$};
\end{scope}
\node[font=\scriptsize, align=center, anchor=north, text width=3.8cm] at (-1.15,-4.05)
  {a drive of any magnitude\\maps to one phase:\\$\unitproject(D) = D/|D|$ on $|z| = 1$};
\begin{scope}[shift={(3.35,-2.35)}]
  \draw[axisgray, thick] (0,0) circle (1.5);
  \draw[axisgray!40, thin] (-1.8,0) -- (1.8,0);
  \draw[axisgray!40, thin] (0,-1.8) -- (0,1.8);
  \begin{scope}[scale=0.9375]
  \draw[holo, thick] plot[smooth] coordinates {(0,0) (-0.049,0.558) (0.157,0.890) (0.434,1.030) (0.694,1.048) (0.909,1.004) (1.075,0.934) (1.199,0.859) (1.288,0.791) (1.353,0.734) (1.398,0.687) (1.430,0.651) (1.453,0.623) (1.468,0.602) (1.479,0.587) (1.487,0.575)};
  \foreach \pt in {(-0.049,0.558), (0.157,0.890), (0.434,1.030), (0.694,1.048), (0.909,1.004), (1.075,0.934), (1.199,0.859), (1.288,0.791), (1.353,0.734), (1.398,0.687), (1.430,0.651), (1.453,0.623), (1.468,0.602), (1.479,0.587), (1.487,0.575)}{ \fill[holo] \pt circle (0.043); }
  \end{scope}
  \fill[holo] (0,0) circle (0.05);
  \node[font=\tiny, holo, anchor=north east] at (0,-0.05) {$k=0$};
  \draw[->, antiholo, thick] (0,0) -- (20:1.5);
  \node[font=\tiny, antiholo, anchor=south west] at (20:1.52) {$\unitproject(D)$};
  \node[font=\tiny, holo, anchor=west] at (45:1.58) {$z^{(k)}$};
\end{scope}
\node[font=\scriptsize, align=center, anchor=north, text width=5.0cm] at (3.35,-4.05)
  {settle: $z \leftarrow (1-\dt)z + \dt\,\unitproject(D)$\\[1pt]
   once aligned, $1-|z|$ decays at rate $1-\dt$};
\end{tikzpicture}
\caption{In standard neural networks, the state for a layer is reached by multiplying inputs by weights and applying a nonlinear ``squashing'' activation function $\rho$ (left). In EP networks, fixed states are reached by following the drive function $D$ at each layer (Eq. \ref{eq:drive}). In LIEP networks, each layer's oscillatory state $z_\ell$ lies on the unit torus in the complex plane, where each point corresponds to set of phase values. To meet this prerequisite, the projection function $u$ maps the drive ($D_1$, $D_2$, $D_3$) onto the torus (center). By utilizing a linear update of scaled drive steps (Eq. \ref{eq:discrete-settle}), the layer's state converges to a fixed point on the unit torus over time (right).}
\label{fig:descent}
\end{figure}

\subsubsection{Nudged Phases}

We have introduced the cost function $C(z_L,y)$ and nudging term $\beta$, but have not yet established how the resulting free and nudged stable states can be applied to produce useful update gradients; we approach this task next. 

Given an input, output, and weights, we may view the relationship between the nudging term $\beta$ and the resulting fixed point as a map $\beta \mapsto z^*(\beta)$. Therefore the derivative $\dd z^*/\dd\beta$ can be estimated by symmetrically sweeping $\beta$ above and then below the free point, allowing three fixed states to be measured: $z^\circ$, $z_{\beta +}^*$, and $z_{\beta -}^*$. Using these three estimates allows the gradient around the center point to be measured with reduced bias \cite{laborieuxScalingEquilibriumPropagation2021}. If we represent this as a temporal process of measuring three separate states, we may control $\beta$ following a square wave:

\begin{equation}
    \beta_{\text{square}}(t) = \varepsilon\,\mathrm{sgn}\cos(\omega_p t)
\end{equation}

With this square wave probe controlling $\beta$, we can extract the centered difference around $z^\circ$ by multiplying the resulting $z(t)$ by the square-wave probe and averaging the result. However, to avoid the high-frequency system harmonics and long settling times which are introduced by such a square-wave probe, we may instead investigate the case where we use only the fundamental of the square wave in order to control $\beta$:

\begin{equation}
    \beta(t) = \varepsilon\cos(\omega_p t)
\end{equation}

The cosine probe alternates between the positive and negative nudges symmetrically, and excites the network at a single frequency $\omega_p$ which can be similarly demodulated and averaged to recover the central difference estimate of our gradient:

\begin{equation}
z(t) - z^\circ \approx \varepsilon\,\frac{\dd z^*}{\dd\beta}\Big|_{\beta=0}\cos(\omega_p t).
\end{equation}
provided that the probe frequency $\omega_p$ is much less than the settling rate $\kappa$. We will expand on this condition and its implications below (Sec. \ref{sec:adiabatic_regime}).

This estimate also rejects noise and drift at low and high frequencies, similar to the ``lock-in'' measurement technique which gives it its name. Thus, by first reaching a stable, fixed free point $z^\circ$ and then injecting the cost term into our energy with a temporally-varying magnitude controlled by $\beta(t) = \varepsilon\cos(\omega_p t)$, the resulting difference between the free and nudged states provides the information necessary to reduce the loss of the network. Figure~\ref{fig:probes} compares the four nudging protocols.

\begin{figure}[t]
\centering
\include{figures/liep_3_tikz}
\caption{Utilizing the network's energy function, a drive function combined with an appropriate update step allows the network to reach stable, fixed points given an input $x$, desired output $y$, and cost weighting term $\beta$. When $\beta=0$, the network's ``free'' fixed point $z^\circ$ is generated. Varying $\beta$ influences the network's energy, producing different fixed points when $y$ does not align with the network's output. Varying $\beta$ from 0 to a positive value $\varepsilon$ is simple (a), but produces a biased estimate of the local gradient around the free fixed point. Adding a second sampling point at $-\varepsilon$ reduces the error of this estimate (b). Alternatively, we may modulate $\beta$ through time following a square-wave schedule to sweep it through the necessary values (c). We may further simplify the probe by replacing the square wave with a simple sinusoid which sweeps $\beta$ through time (d). This sinusoidal schedule replaces discrete ``settling'' stages, instead injecting the probe as a time-varying signal with a single harmonic component and allowing the local gradient at the free fixed point to be demodulated.}
\label{fig:probes}
\end{figure}

\subsubsection{Loss Descent}

To close the training loop, we must now translate the information available to us by nudging the system into a reduction of its loss function, defined as $\mathcal{L}(W) := C(z^\circ_L(W), y)$: the cost of the network at the free point given input, output, and a set of weights. To establish this connection, we establish the formal relation between the difference in free and nudged phases shown above ($z(t) - z^\circ$) and the gradient of loss with respect to weights ($\partial \mathcal{L} / \partial W$). First, we revisit our energy function (Eqn. \ref{eq:phasor-energy}) and derive it with respect to weights: 

\begin{equation}
\frac{\partial \Phi}{\partial W_\ell} = -\Real\big( z_\ell z_{\ell-1}^H \big) \;\equiv\; -H_\ell.
\end{equation}
where $(\cdot)^H$ is the conjugate transpose. $H_\ell$ represents the Hebbian product for each layer, which we may interpret as the similarity between each pre-synaptic state $z_{\ell-1}$ and post-synaptic state $z_\ell$. When the value $\beta$ controlling the contribution of the cost function to the network's total energy is varied following the cosine probe, the states of the network oscillate, with:

\begin{equation}
    z_\ell(t) \approx z^\circ_\ell + \delta z_\ell(t)
\end{equation}
where $\delta z_\ell(t)$ is proportional to the amplitude of the cosine probe $\varepsilon$ and oscillates at $\omega_p$. Substituting this into our Hebbian per-layer product, we obtain:

\begin{equation}
H_\ell(t) = \Real\big( z_\ell(t) z_{\ell-1}(t)^H \big)
\approx H^\circ_\ell + \Real\big( \delta z_\ell(t) \big(z^\circ_{\ell-1}\big)^{H} + z^\circ_\ell\, \delta z_{\ell-1}(t)^H \big).
\end{equation}
Thus, when the $\beta$ term is varied following the cosine probe, the Hebbian alignment of the entire network fluctuates at the frequency of the probe $\omega_p$ and with an amplitude controlled by the derivative of the Hebbian with respect to the nudge ($\varepsilon\, \dd H_\ell/\dd\beta$). To demodulate this quantity, the per-layer Hebbian values are multiplied by another cosine probe and averaged over several sampling periods:

\begin{equation}
\langle f \rangle_T := \frac{1}{T}\int_0^T f(t)\,\dd t, \qquad T = \frac{2\pi n}{\omega_p},
\qquad
\big\langle H_\ell\cos(\omega_p t)\big\rangle_T = \frac{\varepsilon}{2}\frac{\dd H_\ell}{\dd\beta} + O(\varepsilon^3).
\end{equation}

By the equilibrium propagation theorem (Eqn. \ref{eq:eqprop_theorem}), we may now solve for the gradient of the loss with respect to the weights at $\beta$=0; combining this with the demodulated Hebbian gives the per-layer update:

\begin{equation}\boxed{\;
\frac{\partial \mathcal{L}}{\partial W_\ell}
= -\frac{\dd H_\ell}{\dd \beta}\Big|_{\beta = 0}
= -\frac{2}{\varepsilon}\,\Big\langle \Real\big(z_\ell z_{\ell-1}^H\big)\cos(\omega_p t) \Big\rangle_T
+ O(\varepsilon^2)
\;}\label{eq:lockin-gradient}
\end{equation}

This final formula enables the gradient of the network's loss --- its cost at the free point --- with respect to weights to be measured. As a result of applying the lock-in method, this estimate is additionally free of bias from the free-state Hebbian and even-order harmonics, which average to zero over the sampling period. The demodulation chain is illustrated in Figure~\ref{fig:lockin}.

\begin{figure}[t]
\centering
\pgfplotsset{lockinpanel/.style={scale only axis, width=2.9cm, height=1.5cm, xmin=0, xmax=720, ytick=\empty, xtick=\empty,
                axis line style={axisgray}, axis x line=middle, axis y line=left, tick label style={font=\tiny}, label style={font=\tiny}}}
\begin{tikzpicture}[>=Stealth]
\stagestrip{4}{0}
\node[font=\scriptsize, anchor=west] at (-6.4,-0.6) {(a) probe $\beta(t) = \varepsilon\cos\omega_p t$};
\begin{axis}[lockinpanel, at={(-6.2cm,-0.9cm)}, anchor=north west, ymin=-1.4, ymax=1.4]
  \addplot[probeC, thick, domain=0:720, samples=120] {cos(x)};
\end{axis}
\node[font=\scriptsize, anchor=west] at (-2.3,-0.6) {(b) Hebbians $H_\ell(t)$};
\begin{axis}[lockinpanel, at={(-2.1cm,-0.9cm)}, anchor=north west, ymin=-0.1, ymax=1.0]
  \addplot[hebbC, thick, domain=0:720, samples=120] {0.62+0.17*cos(x)};
  \addplot[hebbC!50, thick, domain=0:720, samples=120] {0.22+0.11*cos(x)};
  \draw[axisgray, dashed] (axis cs:0,0.62) -- (axis cs:720,0.62);
  \draw[axisgray, dashed] (axis cs:0,0.22) -- (axis cs:720,0.22);
  \node[font=\tiny, anchor=south west, fill=white, inner sep=0.5pt] at (axis cs:15,0.63) {$H^\circ_1$};
  \node[font=\tiny, anchor=south west, fill=white, inner sep=0.5pt] at (axis cs:15,0.23) {$H^\circ_2$};
\end{axis}
\node[font=\scriptsize, anchor=west] at (1.85,-0.6) {(c) $(H_\ell - H^\circ_\ell)\cos(\omega_p t)$};
\begin{axis}[lockinpanel, at={(2.05cm,-0.9cm)}, anchor=north west, ymin=-0.1, ymax=0.25]
  \addplot[basinC, thick, domain=0:720, samples=120] {0.17*cos(x)*cos(x)};
  \draw[hebbC, dashed, thick] (axis cs:0,0.085) -- (axis cs:720,0.085);
\end{axis}
\node[font=\tiny, hebbC, anchor=west] at (2.05,-2.75) {dashed: $\langle\cdot\rangle_T = \tfrac{\varepsilon}{2}\,\dd H_\ell/\dd\beta$};
\draw[->, thick, axisgray] (-3.05,-1.65) -- (-2.55,-1.65);
\draw[->, thick, axisgray] (1.1,-1.65) -- (1.6,-1.65);
\node[font=\scriptsize, anchor=north west, align=left, text width=4.7cm] at (-6.4,-2.92)
  {(d) $\partial\mathcal{L}/\partial W_\ell =$\\[1pt] \hspace*{2.2mm}$-\tfrac{2}{\varepsilon}\big\langle (H_\ell - H^\circ_\ell)\cos\omega_p t\big\rangle_T$};
\begin{axis}[scale only axis, width=2.9cm, height=1.6cm, at={(-6.2cm,-4.2cm)}, anchor=north west,
  ybar, bar width=9pt, xmin=0.4, xmax=2.6, ymin=-0.45, ymax=0.05, xtick={1,2}, xticklabels={$W_1$,$W_2$}, ytick=\empty,
  axis line style={axisgray}, axis x line=middle, axis y line=left, tick label style={font=\tiny},
  xticklabel style={anchor=south, yshift=3pt}, xtick style={draw=none}]
  \addplot[fill=hebbC!60, draw=hebbC] coordinates {(1,-0.34) (2,-0.22)};
\end{axis}
\node[font=\scriptsize, anchor=north west] at (-1.3,-2.92) {(e) spectrum of $H_\ell(t)$ and the lock-in window};
\begin{axis}[scale only axis, width=6.4cm, height=1.7cm, at={(-1.3cm,-4.2cm)}, anchor=north west,
  xmin=-0.3, xmax=3.3, ymin=0, ymax=1.15, xtick={0,1,2,3}, xticklabels={$0$,$\omega_p$,$2\omega_p$,$3\omega_p$}, ytick=\empty,
  axis line style={axisgray}, axis x line=bottom, axis y line=left, tick label style={font=\tiny}, xlabel={frequency}, label style={font=\tiny}, xlabel style={yshift=6pt, anchor=north east}]
  \fill[probeC!15] (axis cs:0.85,0) rectangle (axis cs:1.15,1.15);
  \addplot[axisgray!60, ycomb, mark=none, line width=0.6pt] coordinates {(-0.2,0.04) (0.3,0.05) (0.5,0.03) (0.7,0.05) (1.3,0.04) (1.55,0.06) (1.8,0.03) (2.3,0.05) (2.6,0.03) (2.9,0.04)};
  \addplot[axisgray, ycomb, mark=none, line width=1.6pt] coordinates {(0,1.0)};
  \addplot[hebbC, ycomb, mark=none, line width=1.6pt] coordinates {(1,0.42)};
  \addplot[basinC, ycomb, mark=none, line width=1.6pt] coordinates {(2,0.09) (3,0.03)};
  \node[font=\tiny, anchor=west] at (axis cs:0.08,0.72) {free-state Hebbian $H^\circ_\ell$ (DC)};
  \node[font=\tiny, hebbC, anchor=west] at (axis cs:1.05,0.55) {signal $\varepsilon\,\dd H_\ell/\dd\beta$};
  \node[font=\tiny, basinC!80!black, anchor=west] at (axis cs:2.05,0.2) {harmonics $O(\varepsilon^2)$};
  \node[font=\tiny, axisgray, anchor=west] at (axis cs:2.35,0.5) {noise, drift};
  \node[font=\tiny, probeC!80!black, anchor=south] at (axis cs:1.0,1.0) {window};
\end{axis}
\end{tikzpicture}
\caption{Applying a time-varying cosine probe $\beta(t)$ continuously modulates the contribution of the cost function to the network's energy function (a). As a result, the Hebbian energy term of each layer $H_\ell(t)$ varies through time as well (b). Removing the ``DC'' energy term of the network at its free fixed point $H^\circ_\ell$ and multiplying by the probe to ``lock-in,'' the resulting signal (c) can be averaged over multiple periods of the probe to estimate the gradient loss function w.r.t. network weights (Eq. \ref{eq:lockin-gradient}) at each network layer (d). By sampling full periods, the explicit subtraction of the DC signal may also be dropped, as it averages to zero. Inspecting the spectrum of the energy $H_\ell$, the ``DC'' free state sits at zero, with the cosine probe creating a clear sideband at its fundamental frequency $\omega_p$ and higher-order harmonics (e).}
\label{fig:lockin}
\end{figure}

By applying standard methods such as stochastic gradient descent with the gradient estimates provided by LIEP, the phase-based network can be trained to produce specific output phases given a set of inputs and weights. This ability in itself is not novel, as other standard backpropagation-based methods have already established this capability. However, as far as the authors are aware, this is the first published method for training continuously-valued phase networks which utilizes frequency-domain techniques to avoid discrete settling stages \cite{martinEqSpikeSpikedrivenEquilibrium2021, delacourLagrangeOscillatoryNeural2025, laydevantTrainingIsingMachine2024}. A high-level overview of the method is presented in Figure \ref{fig:roadmap}.

\begin{figure}[t]
\centering
\begin{tikzpicture}[>=Stealth,
  stage/.style={draw=holo, fill=holo!8, rounded corners=3pt, text width=2.45cm, minimum height=2.1cm, align=center, inner sep=3pt, font=\scriptsize},
  hand/.style={->, thick, antiholo},
  lab/.style={font=\tiny, align=center}]
\node[stage] (s1) at (-4.9,0) {\textbf{(a) Energy \& drive}\\[2pt]
  {\tiny $\Phi = E + \beta C$ \eqref{eq:phasor-energy}}\\[1pt]
  {\tiny $D_\ell = -2\,\partial\Phi/\partial\bar z_\ell$ \eqref{eq:drive}}};
\node[stage] (s2) at (-1.63,0) {\textbf{(b) Descent on the torus}\\[2pt]
  {\tiny $z \leftarrow (1-\dt)z + \dt\,\unitproject(D)$ \eqref{eq:discrete-settle}}\\[1pt]
  {\tiny fixed point $z^* = \unitproject(D)$}};
\node[stage] (s3) at (1.63,0) {\textbf{(c) Nudge \& probe}\\[2pt]
  {\tiny $\beta(t) = \varepsilon\cos\omega_p t$}\\[1pt]
  {\tiny $z(t) - z^\circ \approx$}\\[-1pt] {\tiny $\varepsilon\,(\dd z^*/\dd\beta)\cos\omega_p t$}};
\node[stage] (s4) at (4.9,0) {\textbf{(d) Lock-in}\\[-1pt] \textbf{demodulation}\\[2pt]
  {\tiny $\partial\mathcal{L}/\partial W_\ell =$}\\[-1pt] {\tiny $-\tfrac{2}{\varepsilon}\big\langle H_\ell\cos\omega_p t\big\rangle_T$ \eqref{eq:lockin-gradient}}};
\draw[hand] (s1) -- (s2) node[midway, above, lab, antiholo] {$D_\ell$};
\draw[hand] (s2) -- (s3) node[midway, above, lab, antiholo] {$z^\circ$};
\draw[hand] (s3) -- (s4) node[midway, above, lab, antiholo] {$H_\ell(t)$};
\node[lab, probeC, anchor=south] (in) at (-4.9,1.45) {input $x$, target $y$, weights $W$};
\draw[->, probeC] (in) -- (s1);
\node[lab, axisgray, anchor=south] at (1.63,1.45) {valid for adiabatic $\omega_p$ and small $\varepsilon$};
\node[lab, axisgray, anchor=south east] at (6.15,1.88) {every stage is proved in Supplementary Proof~A};
\draw[->, thick, hebbC, rounded corners=4pt] (s4.south) -- ++(0,-0.55) -| (s1.south)
  node[pos=0.25, below, lab, hebbC] {update $W_\ell \leftarrow W_\ell - \eta\,\partial\mathcal{L}/\partial W_\ell$; the new weights define the next energy};
\end{tikzpicture}
\caption{We recapitulate the stages of our derivation, and highlight the relevant formulas and quantities passed from each stage to the next. By defining an energy function on the network $\Phi$, we create an associated drive function $D$ which provides ``steps'' which can lower the energy of the network with inputs $x$, target outputs $y$, and weights $W$ (a). The per-layer drive $D_\ell$ is projected onto the unit torus in the complex plane, and a linear update step applies the drive to descend the energy function and ``land'' fixed network points $z^\circ$ on this torus (b). The contribution of the target outputs $y$ to the cost function, $\beta$, is varied following a cosine schedule with amplitude $\varepsilon$ and frequency $\omega_p$ (c). The resulting time-varying per-layer Hebbian energy, $H_\ell(t)$ can then be demodulated to provide an estimate of the loss function gradient w.r.t. network weights at each layer (d). These gradients can then be applied to step the network through an outer loop which reduces the loss function over time.}
\label{fig:roadmap}
\end{figure}

\subsubsection{Adiabatic Regime}
\label{sec:adiabatic_regime}

Our derivation of LIEP rests on several assumptions: one is that as the probe $\beta$ oscillates, the state $z$ still passes near the points $z_{\beta +}^*$ and $z_{\beta -}^*$ which would be reached by a ``static'' probe settling into each state individually. In other words LIEP still requires access to the same regions of the network's state as used by other probe methods; it is the methods of modulating $\beta$ and measuring the resulting gradient information which shift. As a result, both the frequency of the cosine probe $\omega_p$ and its amplitude $\varepsilon$ must be selected with respect to the rate at which the network converges to fixed points, $\kappa$. 

If our condition $\omega_p \ll \kappa$ is met --- that is, the relaxation period of the network is much shorter than the period of the probe --- we may be confident that the probe allows enough time for the network to sweep through the fixed points which would be extracted under a static, two-phase EP approach. However, this necessitates a long sampling time, and a very slow, quasi-DC signal may be difficult to demodulate. Conversely, if $\omega_p$ increases and approaches $\kappa$ --- the period of the probe approaches the relaxation time of the network --- the required path through the fixed states will not be followed, causing bias in any resulting gradient estimates. Additionally, the strength of the probe $\varepsilon$ must be chosen. Too small, and the probe will not produce usably measurable gradients. Too large, and non-linear terms of our estimate will begin to contribute and cause errors. 

As a result, LIEP must operate within an ``adiabatic'' basin: in this region, a gradual probe frequency allows the network sufficient (but not excessive) time and stimulation to relax to equilibrium points as it is influenced by the sinusoidally modulated cost function influencing $z(t)$ through time. We characterize this regime in the following results.

\section{Results}

To demonstrate the learning capabilities of LIEP, we utilize a simple network following the topology of a multi-layer perceptron (MLP) with 1 input layer of 784 oscillators, 1 hidden layer of 256 oscillators, and 1 output layer of 64 oscillators. Following the requirements of LIEP, each layer is symmetrically connected with 1 set of weights. Additionally, as the output of each oscillator is normalized into a phase, we may equivalently consider each oscillator combined with this non-linear topological constraint a ``neuron,'' and will employ this term as the base unit from which networks are constructed.

\subsection{Empirical Demonstrations}\label{sec:fashionmnist}

The FashionMNIST dataset is applied as a benchmark to validate the capability of LIEP to train a multi-layer network on a simple classification task \cite{xiaoFashionMNISTNovelImage2017}. Following the origins of our architecture in hyperdimensional (HD) computing, we define a codebook of pseudo-orthogonal phases which serve as ``labels'' for the 64 output neurons to align to. The dissimilarity of the output neurons to the correct class's code is set as the energy of the network's cost function (see Eqn. \ref{eq:phasor-cost}). For the network to correctly classify an input image encoded into phases, it must reach a fixed point at which the 64 output neurons of the network align to the correct code (e.g., given an image of a T-shirt, the code at the output phases match the codebook entry for ``T-shirt''). 

The LIEP parameters $\omega_p$, $\varepsilon$, and $T$ were swept to establish the adiabatic basin for the network and determine the optimal region for effective training. As shown in Figure \ref{fig:adiabatic_basin}, a slow probe frequency combined with a moderate probe amplitude allows the network to reach stable fixed points and to provide useful gradient signals. These gradients, applied through a standard stochastic gradient descent (SGD) update rule across the network, allow the loss function to smoothly decline to a minimum (Figure \ref{fig:adiabatic_basin}d). For these shallow, small networks, the accuracies of 83-84\% approach those reached by standard backpropagation (BP) applied to phase-based and conventional architectures (Figure \ref{fig:adiabatic_basin}c). However, as the LIEP parameters drift from the adiabatic zone, quality of gradients provided by LIEP compared to standard BP quickly degrade, leading to a high rate of divergent training runs which end in failure. As the probe frequency increases past the natural relaxation rate of the network $\kappa$, the information provided by the lock-in is no longer useful, and every run ends in failure (Figure \ref{fig:adiabatic_basin}b). As a result, $\omega_p$ can be set fairly freely to an arbitrary range of values as long as it does not approach the relaxation rate of the network itself.

\begin{figure}
    \centering
    \includegraphics[width=1\linewidth]{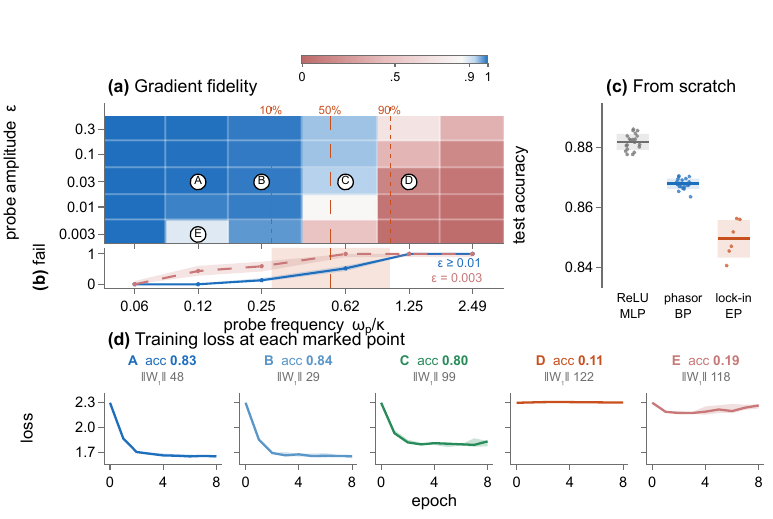}
    \caption{LIEP is controlled by 3 major parameters: the frequency ($\omega_p$) and amplitude ($\varepsilon$) of the cosine probe, and the number of integration periods ($T$) over which the lock-in signal is integrated. The quality of the gradients measured by their cosine similarity to BP-derived values is plotted across a grid where $\omega_p$ and $\varepsilon$ are swept, establishing the existence of an ``adiabatic'' basin. In this regime, the slow probe frequency and moderate probe amplitude allow the network to settle to steady states and produce high-quality gradient information (a). If the probe becomes too fast, approaching the rate at which the network descends to fixed points ($\kappa$), the energy dynamics of the system can no longer produce this information, and the ability of the networks to train successfully collapses, showing high failure rates (b) and non-convergence of the loss function (d). Networks trained \textit{ab-initio} with LIEP in the adiabatic region approach performance levels of standard BP on the same network architectures (c).}
    \label{fig:adiabatic_basin}
\end{figure}
These results provide a simple example of applying LIEP to train oscillatory neural networks ``ab-initio:'' the weights are initialized randomly. Next, we examine a different case which may be more applicable to our envisioned end-domain: tuning existing, trained networks to overcome the device variability which is expected in all manufactured hardware. 

\subsection{Impairment \& Recovery}\label{sec:recovery}

Analog computing generally depends on utilizing computing elements which are capable of many states to carry out a computation physically, rather than emulating it through digital logic. For instance, the matrix-vector-multiplication (MVM) which is ubiquitous within artificial neural networks may be computed in an analog fashion by encoding values in voltages or currents and utilizing crossbar arrays of resistors to physically scale (multiply) and combine (sum) the signals to realize the MVM \cite{xiaTechnologicalExplorationRRAM2016, raoThousandsConductanceLevels2023}. 

While this approach can be highly effective and has been scaled to large demonstrations, two major roadblocks have prevented it from reaching mainstream success. One is the high power and space requirements required to transcode digital/analog signals for computation, and the other is overcoming the variability inherent in the devices used in the computation \cite{mannocciInmemoryComputingEmerging2023, olin-ammentorpAdaptingSpikingNetworks2019}. Here, we focus our attention on the latter issue.

As a result of device-level variability, deploying digitally pre-trained neural networks onto an analog inference platform often requires one or several steps of conventional, BP-based corrections to lead to good performance \cite{wanComputeinmemoryChipBased2022}. While effective, this requires a lengthy and intensive process required for each deployment utilizing digital hardware platforms. Furthermore, this initial corrective tuning does not guarantee long-term performance; device level drifts and failures can commonly occur. In light of this, we propose that rather than viewing LIEP as a first-class algorithm for ab-initio training --- which is highly suited to the large-scale, digital systems massively deployed today --- it may be most effectively applied as an \textit{in-situ} training algorithm to allow analog platforms to self-recover performance rather than resorting to external measurement and correction techniques.

With this goal in mind, we carry out several experiments in which the weights of a network are distorted following a distribution of errors: Gaussian, log-normal scaling, and random shorts/opens (Figure \ref{fig:weight_correction}). Using shallow networks trained via standard BP (which may be converted into EP-compatible networks by symmetrizing their connectivity, with a bounded loss in accuracy --- see Supplementary Proof~B for details), the trained weights are distorted following one of these four patterns at a given level of severity. After a distortion is applied, several methods are used to attempt to recover the damaged network's performance: standard BP on the entire network, BP on only the readout layer, and LIEP. Having previously established the adiabatic zone for effective operation of LIEP, we now apply it with fixed parameters for this and the remaining experiments (Table \ref{tab:probeparams}). 
\begin{table}[tbp]
\centering
\footnotesize
\setlength{\tabcolsep}{3.4pt}
\caption{Lock-in probe settings behind each figure. $T$ is the demodulation
window in integrator steps, $n_\text{cyc}$ periods of the probe; a further
$T_\text{warm}$ periods are discarded before accumulation begins. All runs use
$784\to256\to64$ at initial weight scale $0.4$. With $\kappa=0.080$, the
$\omega_p=0.02$ used for the impairment studies is $\omega_p/\kappa=0.25$ ---
point~B of Fig.~\ref{fig:adiabatic_basin}, well inside the adiabatic basin and about a
factor of two below the fitted boundary. Two settings are \emph{not} shared
across Fig.~\ref{fig:weight_correction}: its panel (b) uses two warm-up periods and the
full $60$k training set, while the paired comparison in (c)--(d) uses one and
$10$k. The paired design is internally controlled, but accuracies are not
directly comparable between the two.}
\label{tab:probeparams}
\begin{tabular}{llccccccrr}
\toprule
Experiment & Fig. & $\varepsilon$ & $\omega_p$ & $n_\text{cyc}$ & $T_\text{warm}$
  & $T_\text{free}$ & $\mathrm{d}t$ & $T$ & Train \\
\midrule
Adiabatic grid        & \ref{fig:adiabatic_basin}a,b & \emph{swept} & \emph{swept} & \emph{swept} & 1 & 200 & 0.5 & \emph{varies} & --- \\
Basin training        & \ref{fig:adiabatic_basin}c,d & 0.003--0.03  & 0.01--0.10   & 4            & 1 & 200 & 0.5 & \emph{varies} & 60k \\
Impairment sweep      & \ref{fig:weight_correction}a--d & 0.03         & 0.02         & 4            & 2 & 200 & 0.5 & 2512 & 60k \\
Three-factor identity & \ref{fig:3factor_rule} & 0.05         & 0.05         & 4            & 2 & 100 & 0.1 & 5028 & --- \\
\bottomrule
\end{tabular}
\end{table}
\begin{figure}
    \centering
    \includegraphics[width=1.0\linewidth]{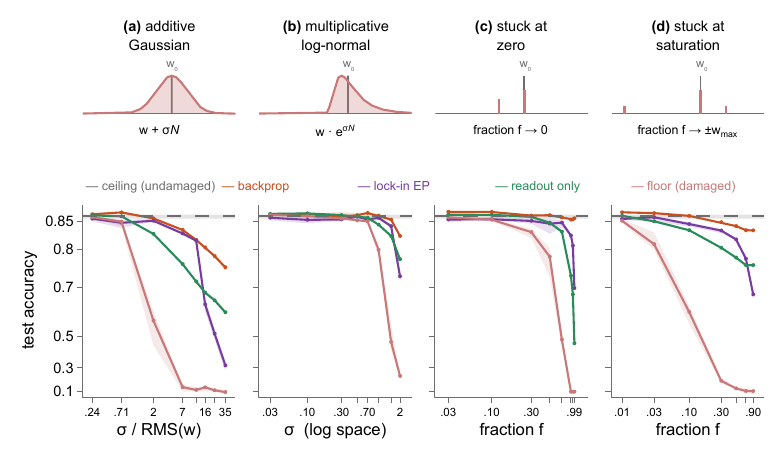}
    \caption{Rather than competing directly with standard backpropagation (BP) for large-scale training of digital networks, LIEP may be best applied as an analog-native training algorithm for correcting device-level distortions as pretrained networks are deployed onto devices. Many analog devices, such as resistive memories, have limited precision which causes the desired weights to be mapped to a variety of states. This mapping may follow a symmetric Gaussian curve (a), a log-normal scaling (b), or map weights to entirely broken ``open'' devices (c) or ``open/shorted'' devices (d). Each of these cases is applied at varying levels of severity, and BP on the entire network or only the readout layer is applied as a control correction. These methods are compared against the experimental LIEP. For many levels of disruption, LIEP can provide recovery approaching that of full-network BP. All accuracies plotted are the median of 8 samples.}
    \label{fig:weight_correction}
\end{figure}

The results of these experiments are plotted in Figure \ref{fig:weight_correction}. For each of the tested distortions, LIEP can effectively recover performance to a level similar to BP on the full network for moderate levels of impairment on all distortion types. As Gaussian noise is added to trained network weights, LIEP remains competitive with full-network BP for a wide range of distortions until the noise introduced by the distortion begins to overwhelm the trained weights, at which point the network essentially must be retrained rather than recovered (Fig. \ref{fig:weight_correction}a). A log-normal scaling, representing the asymmetric updates which are common in many resistive memories \cite{raoThousandsConductanceLevels2023, liuVortexVariationawareTraining2015, chenAcceleratorfriendlyNeuralnetworkTraining2017}, can also be successfully overcome using LIEP, as well as a significant fraction of device opens and shorts (Fig. \ref{fig:weight_correction}b-d). 

\subsection{3-Factor Learning Rule}\label{sec:three-factor}

Having supported the case that LIEP may be deployed on-device to correct networks which have moderate levels of analog distortions, we now return to the formulation of the method. As derived, LIEP is a ``top-down'' approach, where information is still collected across the entire network and broadcasted back to individual weights and parameters. To be compatible with neuromorphic and analog systems, a ``bottom-up'' three-factor formulation of LIEP which can be implemented in a per-element fashion is desirable \cite{fremauxNeuromodulatedSpikeTimingDependentPlasticity2016}. 

Thankfully, many of the quantities required to compute LIEP can be directly tracked at a per-neuron and per-synapse level, rather than via globally synchronized and broadcast values. To begin, we revisit the final update rule (Eqn. \ref{eq:lockin-gradient}) and examine the quantities it depends on: the Hebbian energy term, symmetric synaptic weights, and cosine probe (Fig. \ref{fig:3factor_rule}). The Hebbian energy is available locally to neurons, as it can be reformulated as the cosine of the difference in the phases between the pre- and post-synaptic neurons ($z_{\ell-1,j}(t)$ and $z_{\ell,i}(t)$, respectively) at a given point in time, reduced by the energy at the free point (Fig. \ref{fig:3factor_rule}b). This local energy, $e_{ij}$, is compared against the global probe applied at the output layer, $\beta(t)$ (Fig. \ref{fig:3factor_rule}c). This global signal allows the relative ``direction'' of the local energy term $e_{ij}$ to be demodulated and accumulated into a per-synapse eligibility trace $E_{ij}(t)$ \cite{bellecSolutionLearningDilemma2020}. Locally integrating and applying the resulting updates of this eligibility trace provides an equivalent three-factor formulation of LIEP which matches the original formulation to high numerical precision (Table \ref{tab:threefactor}). 

\begin{figure}
    \centering
    \includegraphics[width=1.0\linewidth]{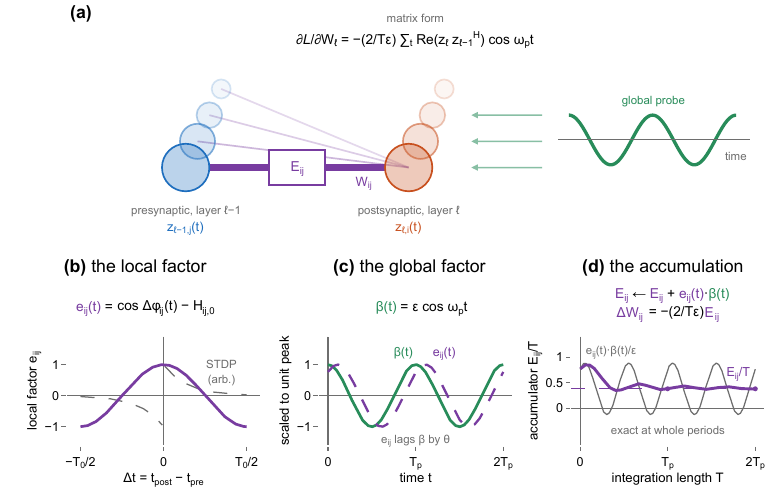}
    \caption{LIEP may be reformulated as a local, 3-factor learning rule. We begin by revisiting the final, matrix form of the loss-reducing weight update step (a). The quantities required for this update can be separated into locally-available energy information at each synapse (b) which measures the phase alignment of the pre- and post-synaptic neurons. This alignment is compared to the current state of the global probe (c), which allows the demodulated weight update to be accumulated and integrated into an eligibility trace (d) that can be used to update each weight locally.}
    \label{fig:3factor_rule}
\end{figure}

\begin{table}[tbp]
\centering
\footnotesize
\caption{The numerical results of the three-factor formulation of LIEP are compared against the original, network-level implementation. The same shallow MLP network used previously (Sec. \ref{sec:fashionmnist}) is optimized using the original LIEP formulation (Sec. \ref{sec:LIEP}) and its 3-factor version derived here. For all components in the network, all match to relative errors less than several parts per thousand.}
\label{tab:threefactor}
\begin{tabular}{@{}lrrrr@{}}
\toprule
& & \multicolumn{1}{c}{Worst} & \multicolumn{2}{c}{Relative error} \\
\cmidrule(l){4-5}
Tensor & Components & $1-\cos$ & median & max \\
\midrule
$W_1$ & 200\,704 & $4\times10^{-6}$ & $1.2\times10^{-3}$ & $2.8\times10^{-3}$ \\
$W_2$ & 16\,384 & $3\times10^{-7}$ & $3.8\times10^{-4}$ & $5.4\times10^{-4}$ \\
\bottomrule
\end{tabular}
\end{table}

\section{Discussion}\label{sec12}

LIEP meets a concrete research need from the oscillatory computing community as well as a neuromorphic training approach which can be implemented and executed locally in neuromorphic and analog computing circuits \cite{todri-sanialComputingOscillatorsTheoretical2024}. Additionally, in contrast to many EP methods which require manually settling to different ``nudged'' points, LIEP can inject a time-varying probe continuously after a free warm-up period, rather than settling to different fixed points discretely \cite{laborieuxScalingEquilibriumPropagation2021}. However, while demonstrating the basic proof-of-concept of LIEP, our extant results presented here leave many points open for further consideration and development. 

In part, LIEP was developed as a complement to our previous training algorithm, the $\varphi$-SSM \cite{olin-ammentorpPhaseStateSpace2026}. The $\varphi$-SSM method is a digitally native approach in which large systems of oscillators may be trained on massively parallel supercomputing hardware incorporating GPUs to compute update to the system at arbitrary points in time via parallel scan. The massive parallelization of the digital training approach can cover the exploration and simulation of physical, analog systems, producing weights and architectures which can be deployed onto real, analog chips. Once deployed LIEP may be used as the ``fine-tuning'' algorithm to correct for device non-ideality, drift, and failures. However, in this work, we have thus far only demonstrated that shallow networks trained via $\varphi$-SSM and operating over a single input converge to the outputs expected from a symmetric, EP-compatible network. More work will be required to establish further equivalences in deeper networks, and investigate whether multi-step, temporal input and output sequences natively handled by $\varphi$-SSMs can be used with the LIEP algorithm. In other words, while we have established that both methods can provide training to reach the same fixed points, $\varphi$-SSMs can train stable orbits, and this has not yet been shown for LIEP. 

Further consideration is required on this point, but we suggest that following the hierarchy of frequency bands and diversity of neurons present in the natural brain, a multi-band carrier signal and networks of neurons with multiple resonant frequencies may be used to construct networks which can natively recreate stable orbits, rather than points, by the combination of multiple fixed points in a Fourier space interfering to produce a shared, temporal signal --- an ``epicyclic'' network, which could utilize the same primitives as LIEP but extend them to a more complex end-use case. 

Furthermore, while the $\varphi$-SSM can be scaled to large networks of thousands or millions of neurons which utilize popular residual and attentional based network architectures to avoid gradient collapse and allow for tractability on large, temporal datasets, four barriers remain to extend LIEP past the small networks we have demonstrated here. 

One is computational: rather than utilizing several discrete steps which are used to calculate an update, LIEP uses \textit{thousands} of simulated steps to sweep through the cosine probe injected at every step and converge to the desired state. In a physical, analog computing system, these steps are calculated intrinsically by the natural energetic dynamics of the system as it descends to its minima; in a digital computer, each step must be calculated and emulated, slowing it down compared to traditional digital approaches. As a result, large-scale \textit{in-silico} simulations of LIEP may be better emulated rather than computed exactly. For instance, reverting to the static, three-point method provides the states needed to estimate LIEP's may be sufficient to simulate it effectively in a digital system --- however, concrete and experimental proofs must be made to show if these methods closely track each other, and what conditions may be necessary to ensure a good match.

Secondly, scaling LIEP past small networks requires extending the methods we have demonstrated previously to avoid gradient collapse in oscillator-based networks to be compatible with EP-based methods. Currently, the shallow phase-based networks utilized here for classification are constructed utilizing the ``bundling'' and ``similarity'' operations of HD computing \cite{kleykoSurveyHyperdimensionalComputing2021, thomasTheoreticalPerspectiveHyperdimensional2021}. From an HD perspective, our derivation established energy functions based on these two operations which are varied to extract the loss gradient. The ``missing ingredient'' to complete the space of computable HD functions is ``binding,'' which is utilized in more advanced and deep architectures to overcome gradient collapse in phase-based networks. We propose that all HD computing terms may be defined in terms of an equivalent energy function, extending the oscillatory computing paradigm to include an oscillatory gradient descent method; this may allow for deep, large, and complex networks and algorithms to be expressed and trained on oscillatory computing hardware. 

Thirdly, the update rule we have derived does remain dependent on a per-neuron reference to a global probe signal, which is generally only applied at the output layers. Physically realizing the distribution of this global signal is costly, and a fully local realization of LIEP would be preferable. However, local ``artifacts'' of the global probe exist in each layer, and by modifying the probe itself and/or the decoding of this local information, the 3-factor formulation may potentially be realized as a fully local rule. 

Lastly, the energetic costs of implementing and executing LIEP on physical, oscillatory hardware should be investigated and characterized. A large spectrum of oscillatory hardware exists, with resonant frequencies from Hz to THz \cite{csabaCoupledOscillatorsComputing2020}. The energetic cost of driving these systems, ability to construct and connect them on large scales, and methods for driving these systems will have a large impact on the ability to practically construct systems which can physically realize and execute LIEP.

\section{Conclusion}\label{sec13}

In this work, we derived and demonstrated a novel method for training phase-based oscillatory networks via a variant of equilibrium propagation. This ``lock-in'' equilibrium propagation method (LIEP) was demonstrated at effectively training networks on a simple but non-trivial classification task, as well as being highly effective to recover a pre-trained system which had weights perturbed by a number of non-idealities corresponding to defects in real emerging memory devices. Additionally, we provided a local, three-factor formulation of LIEP which can allow for the same updates to be made while only referencing one global ``probe'' signal. We believe that the proof-of-concept demonstrations made here open the door to many further variants in the LIEP algorithm itself, as well as underlying networks, algorithms, and hardware implementations. 

\section{Methods}\label{sec11}

All algorithms were implemented in the Julia package, PhasorNetworks.jl. Simulations were run locally on an NVIDIA DGX Spark system, as well as the ALCF Polaris and Aurora systems. Proofs and codes were developed and checked by Claude Models (Fable 4.8-5.0) and Nemotron-3-Ultra. Code is available online at \href{https://github.com/wilkieolin/PhasorNetworks.jl}{https://github.com/wilkieolin/PhasorNetworks.jl}

\backmatter

\bmhead{Supplementary information}

Supplementary Proofs A (Full Derivation of Lock-in Equilibrium Propagation) and B (Backprop-to-Equilibrium-Propagation Transfer Guarantees for Phasor Networks) are included as appendices at the end of this document.

\bmhead{Acknowledgements}

The authors would like to thank Valerie Taylor as well as the ALCF staff for their support. Furthermore, we thank Michel Schanen for his development and support of the OneAPI.jl package. 

This work was supported by DOE ASCR BIA: A Co-Design Methodology to Transform Materials and Computer Architecture Research for Energy Efficiency. This material is based upon work supported by the U.S. Department of Energy, Office of Science, under contract number DE-AC02-06CH11357.

The submitted manuscript has been created by UChicago Argonne, LLC, Operator of Argonne National Laboratory (``Argonne''). Argonne, a U.S. Department of Energy Office of Science laboratory, is operated under Contract No. DE-AC02-06CH11357. The U.S. Government retains for itself, and others acting on its behalf, a paid-up nonexclusive, irrevocable worldwide license in said article to reproduce, prepare derivative works, distribute copies to the public, and perform publicly and display publicly, by or on behalf of the Government. The Department of Energy will provide public access to these results of federally sponsored research in accordance with the DOE Public Access Plan. \href{http://energy.gov/downloads/doe-public-access-plan}{http://energy.gov/downloads/doe-public-access-plan}

AI was applied in this work to generate supplemental proofs, code, and figures, as well as to collect relevant literature and review drafts for typos and other grammatical errors. Exclusive of the supplementary proofs, no AI-generated text is included in this manuscript.

\appendix
\renewcommand{\thesubsection}{A.\arabic{subsection}}
\renewcommand{\thesubsubsection}{A.\arabic{subsection}.\arabic{subsubsection}}
\section*{Supplementary Proof A}
\setcounter{theorem}{0}\setcounter{remark}{0}\setcounter{definition}{0}\setcounter{equation}{0}\setcounter{table}{0}\setcounter{figure}{0}
\renewcommand{\thesection}{A}
\renewcommand{\theequation}{A.\arabic{equation}}\renewcommand{\theHequation}{A.\arabic{equation}}
\renewcommand{\thetheorem}{A.\arabic{theorem}}\renewcommand{\theHtheorem}{A.\arabic{theorem}}
\renewcommand{\theremark}{A.\arabic{remark}}\renewcommand{\theHremark}{A.\arabic{remark}}
\renewcommand{\thedefinition}{A.\arabic{definition}}\renewcommand{\theHdefinition}{A.\arabic{definition}}
\renewcommand{\thetable}{A.\arabic{table}}\renewcommand{\theHtable}{A.\arabic{table}}
\renewcommand{\thefigure}{A.\arabic{figure}}\renewcommand{\theHfigure}{A.\arabic{figure}}
%
%

\section{Full Derivation of Lock-in Equilibrium Propagation}\label{sec:liep_proof}

This appendix supplies the details omitted from Section~\ref{sec:LIEP}, in the same order: the state space, energy and cost (\ref{app:setting}); the Wirtinger gradients that define the drive on each oscillator (\ref{app:wirtinger}); the constrained gradient flow on the torus and its convergence (\ref{app:flow}); the discrete settling update and its fixed points (\ref{app:discrete}); the fixed-point map and the passage from two-phase nudging to a continuous probe (\ref{app:nudged}); the loss gradient and its extraction by lock-in demodulation (\ref{app:lockin}); and the hypotheses that define the operating regime (\ref{app:regime}). Two further results have no counterpart in the main text: the non-holomorphic dependence of a torus-valued equilibrium on a complex nudge, which rules out the contour estimator of holomorphic equilibrium propagation (\ref{app:nonholo}), and the change of frame to rotating oscillators (\ref{app:frame}). Throughout, $E$ denotes a true energy, minimised when oscillators are aligned with their drives, and the nudged energy is $\Phi = E + \beta C$, as in standard equilibrium propagation \cite{scellierEquilibriumPropagationBridging2017}.

\subsection{Setting and Notation}\label{app:setting}

A phasor network has $L$ layers of states $z_\ell \in \T^{d_\ell} \subset \C^{d_\ell}$ on the unit torus
\begin{equation}\label{eq:torus-app}
\T^{n} = \{ z \in \C^{n} : |z_j| = 1 \text{ for all } j \},
\end{equation}
real weight matrices $W_\ell \in \R^{d_\ell \times d_{\ell-1}}$, a fixed input $z_0 = x$, and a target $y \in \T^{d}$ with $d = d_L$ \cite{olin-ammentorpDeepPhasorNetworks2022,olin-ammentorpHyperdimensionalComputingProvides2023}. We use the Hermitian inner product $\langle v, w\rangle = v^H w$, conjugate-linear in its first argument, and write $(\cdot)^H$ for the conjugate transpose. For unit vectors, $\Real\langle v, w\rangle = \sum_j \cos(\theta_{v,j} - \theta_{w,j})$ is the cosine similarity of the two phase patterns.

Subscript $0$ always denotes the input layer. The free equilibrium --- the equilibrium reached at $\beta = 0$ --- is written $z^\circ$, with layers $z^\circ_\ell$ and entries $z^\circ_{\ell,j}$. Coefficients of expansions in the nudge are written $u_1, u_2, u_3$, so that subscripts on $z$ are always layer indices and subscripts on $u$ are always orders in $\beta$. 

The energy of the network is the negative similarity between each layer and its weighted drive from the layer below,
\begin{equation}\label{eq:energy-app}
E(z) = -\sum_{\ell=1}^{L} \Real\langle W_\ell z_{\ell-1}, z_\ell \rangle
= -\sum_{\ell=1}^{L} \tfrac{1}{2}\big( z_\ell^H W_\ell z_{\ell-1} + z_{\ell-1}^H W_\ell^\top z_\ell \big),
\end{equation}
where $W_\ell^H = W_\ell^\top$ because the weights are real. Equation~\eqref{eq:energy-app} is the sum over layers of the per-layer term in Eq.~\eqref{eq:phasor_layer_energy} with the sign chosen so that $E$ is minimised at alignment. The output cost is one minus the similarity to the target,
\begin{equation}\label{eq:cost-app}
C(z_L, y) = 1 - \frac{1}{d}\Real\langle z_L, y \rangle
= 1 - \frac{1}{d}\sum_{j=1}^{d} \cos(\theta_{L,j} - \theta_{y,j}) \in [0, 2],
\end{equation}
as in Eq.~\eqref{eq:phasor-cost}, and the nudged energy and the loss are
\begin{equation}\label{eq:total-energy-app}
\Phi(z; \beta) = E(z) + \beta\, C(z_L, y),
\qquad
\mathcal{L}(W) = C\big(z^\circ_L(W), y\big),
\end{equation}
where $z^\circ(W)$ is the free equilibrium and $z^\circ_L$ its output layer. 
The loss is the cost of the free state; the nudge $\beta$ is a probe used only to extract its gradient.

\begin{remark}[Real weights and the Hermitian form]
The energy \eqref{eq:energy-app} is invariant under a global phase rotation $z_\ell \mapsto e^{\im\varphi} z_\ell$ applied to every layer, the input and the target: $\Real\langle W e^{\im\varphi} v, e^{\im\varphi} w\rangle = \Real\langle W v, w\rangle$ because $W$ is real. Complex weights would break this invariance and make the energy depend on absolute rather than relative phases. The bilinear form $v^\top w$ is holomorphic in $z$ but is not invariant either; the Hermitian form is the one that measures phase alignment.
\end{remark}

\begin{remark}[Relation to XY and Kuramoto models]\label{rem:xy} 
In angle coordinates $z_{\ell,j} = e^{\im\theta_{\ell,j}}$ the energy \eqref{eq:energy-app} reads
\[
E(\theta) = -\sum_{\ell=1}^{L}\sum_{i,j} (W_\ell)_{ij}\,\cos\big(\theta_{\ell,i} - \theta_{\ell-1,j}\big),
\]
a layered XY model with signed coupling matrix $W_\ell$ and no intra-layer coupling; the constrained flow of \ref{app:flow} is its Kuramoto-type gradient dynamics with the carrier of \ref{app:frame} supplying the common natural frequency. This form is also the more convenient one for machine-checked verification, since it is a smooth function on $\R^{n}$ and dispenses with the torus constraint and the tangent projector.
\end{remark}

\begin{remark}[Other costs]
Any cost that is real-differentiable in $z_L$ can replace \eqref{eq:cost-app}; the derivation below uses only its Wirtinger derivative. For classification against a codebook of phase patterns $c_1,\dots,c_K \in \T^{d}$, the cross-entropy $C = -\log \mathrm{softmax}_k\big(\tfrac1d\Real\langle c_k, z_L\rangle\big)_{y}$ is convenient, and its nudge is a softmax-weighted combination of the codewords.
\end{remark}

\subsection{Wirtinger Derivatives and the Drive}\label{app:wirtinger}

For a real-valued function $f$ of $z = x + \im y \in \C$ the Wirtinger derivatives are $\partial f/\partial z = \tfrac12(\partial_x - \im\,\partial_y) f$ and $\partial f/\partial \bar z = \tfrac12(\partial_x + \im\,\partial_y) f$. Identifying $\C \cong \R^2$, the gradient of $f$ is the complex number $\nabla f = \partial_x f + \im\,\partial_y f = 2\,\partial f/\partial \bar z$, so the direction of steepest descent is $-2\,\partial f/\partial\bar z$. Treating $z$ and $\bar z$ as independent, $\partial(v^H w)/\partial \bar w = 0$ and $\partial(v^H w)/\partial \bar v = w$, hence for the layer term
\begin{equation}
\frac{\partial}{\partial \bar z_\ell}\Big(-\Real\langle W_\ell z_{\ell-1}, z_\ell\rangle\Big) = -\tfrac12 W_\ell z_{\ell-1},
\qquad
\frac{\partial}{\partial \bar z_\ell}\Big(-\Real\langle W_{\ell+1} z_{\ell}, z_{\ell+1}\rangle\Big) = -\tfrac12 W_{\ell+1}^\top z_{\ell+1},
\end{equation}
and for the cost $\partial C/\partial \bar z_L = -\tfrac{1}{2d}\,y$. Each hidden state appears in two coupling terms, one from below and one from above, so
\begin{align}
\frac{\partial \Phi}{\partial \bar z_\ell} &= -\tfrac12\big( W_\ell z_{\ell-1} + W_{\ell+1}^\top z_{\ell+1} \big), \qquad \ell = 1,\dots,L-1, \label{eq:grad-hidden-app}\\
\frac{\partial \Phi}{\partial \bar z_L} &= -\tfrac12\big( W_L z_{L-1} + \tfrac{\beta}{d}\, y \big), \label{eq:grad-output-app}
\end{align}
which are Eqs.~\eqref{eq:grad-hidden}--\eqref{eq:grad-output}. The factor $\tfrac12$ is the usual Wirtinger factor and disappears in the real gradient. We call the negative real gradient the \emph{drive} on layer $\ell$,
\begin{equation}\label{eq:drive-app}
D_\ell := -2\,\frac{\partial \Phi}{\partial \bar z_\ell} = W_\ell z_{\ell-1} + W_{\ell+1}^\top z_{\ell+1},
\qquad W_{L+1}^\top z_{L+1} \equiv \tfrac{\beta}{d}\, y,
\end{equation}
the last identity being a notational convention for the output layer: the target acts on layer $L$ like a fictitious layer above it whose feedback is the nudge. This is Eq.~\eqref{eq:drive} of the main text.

\begin{remark}[Scale of the drive]
Because the unit projection used below satisfies $\unitproject(cD) = \unitproject(D)$ for every $c > 0$, an overall positive factor in $D_\ell$ changes neither the equilibria nor the gradient estimate. Only the relative weight of the coupling and nudge terms matters, and both carry the same factor in \eqref{eq:drive-app}.
\end{remark}

\subsection{Constrained Gradient Flow and Convergence}\label{app:flow}

The ambient flow of Eq.~\eqref{eq:continuous-flow}, $\dot z_\ell = -\gamma z_\ell + D_\ell$, does not preserve $|z_{\ell,j}| = 1$. The tangent space of the torus at $z \in \T^n$ and the orthogonal projector onto it are
\begin{equation}\label{eq:tangent-app}
T_z\T^n = \{ v \in \C^n : \Real(\bar z_j v_j) = 0 \ \forall j \},
\qquad
\Pi_z(v) = v - \Real(\bar z \odot v)\odot z,
\end{equation}
with $\odot$ the entrywise product: tangent vectors are purely rotational. Since $\Pi_z(z) = 0$, the decay term is normal to the torus and is removed by the projection, $\Pi_z(-\gamma z + D) = \Pi_z(D)$. The constrained dynamics is therefore the Riemannian gradient flow of $\Phi$ on the torus,
\begin{equation}\label{eq:riemannian-flow-app}
\dot z_\ell = \Pi_{z_\ell}(D_\ell) = -\Pi_{z_\ell}\big(\nabla_{z_\ell}\Phi\big), \qquad \ell = 1,\dots,L .
\end{equation}

\begin{proposition}[$\Phi$ is a Lyapunov function]\label{prop:lyapunov}
Along \eqref{eq:riemannian-flow-app},
$\dd\Phi/\dd t = -\sum_\ell \|\Pi_{z_\ell}(D_\ell)\|^2 \le 0$, with equality exactly at the critical points of $\Phi$ on the torus, $\Pi_{z_\ell}(D_\ell) = 0$ for all $\ell$. Since $\Phi$ is continuous on the compact torus, every trajectory converges to the set of critical points.
\end{proposition}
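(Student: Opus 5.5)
The plan is to compute $\dd\Phi/\dd t$ by the chain rule in real coordinates, identify it with the squared norm of the projected drive, and then pass from the derivative to the asymptotic statement by a LaSalle-type argument.

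\textbf{The derivative.} Identify $\C^{d_\ell}\cong\R^{2d_\ell}$ with real inner product $\Real\langle v,w\rangle$. By \ref{app:wirtinger}, the real gradient of $\Phi$ with respect to $z_\ell$ is $2\,\partial\Phi/\partial\bar z_\ell=-D_\ell$. Hence along any differentiable curve
\[
\frac{\dd\Phi}{\dd t}=\sum_\ell \Real\langle -D_\ell,\dot z_\ell\rangle .
\]
Substituting $\dot z_\ell=\Pi_{z_\ell}(D_\ell)$ from \eqref{eq:riemannian-flow-app}, I use that $\Pi_z$ is the orthogonal projector for this real inner product. Entrywise, it removes the component of $v_j$ along $z_j$, so it is self-adjoint and idempotent. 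This gives
\[
\Real\langle D_\ell,\Pi(D_\ell)\rangle=\Real\langle \Pi(D_\ell),\Pi(D_\ell)\rangle=\|\Pi_{z_\ell}(D_\ell)\|^2 ,
\]
and therefore $\dd\Phi/\dd t=-\sum_\ell\|\Pi_{z_\ell}(D_\ell)\|^2\le 0$. Equality holds iff every $\Pi_{z_\ell}(D_\ell)=0$, which is exactly vanishing of the Riemannian gradient, i.e.\ criticality of $\Phi|_{\T}$.

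\textbf{Invariance of the torus.} I would also note that the flow preserves the torus. We have $\frac{\dd}{\dd t}|z_j|^2=2\Real(\bar z_j\dot z_j)=0$, because $\Pi_z(v)\in T_z\T$ by \eqref{eq:tangent-app}. So trajectories exist globally on the compact manifold, since the vector field is smooth there: $D_\ell$ is linear in $z$.

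\textbf{Convergence to the critical set.} For the asymptotic claim I apply LaSalle's invariance principle. $\Phi$ is continuous on the compact torus, hence bounded below, and it is nonincreasing along trajectories, so $\Phi(z(t))\to\Phi_\infty$. The $\omega$-limit set of a trajectory is nonempty, compact and invariant, and $\Phi\equiv\Phi_\infty$ on it. An invariant set on which $\Phi$ is constant must therefore have $\dd\Phi/\dd t=0$ along its trajectories, so it lies in the critical set. Then $\operatorname{dist}(z(t),\text{crit})\to0$.

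A more elementary alternative avoids LaSalle. Integrating gives $\int_0^\infty\sum_\ell\|\Pi(D_\ell)\|^2\,\dd t\le\Phi(0)-\Phi_\infty<\infty$. The integrand is uniformly continuous in $t$, since the vector field is Lipschitz on the compact set. Barbalat's lemma then yields $\|\Pi(D)\|\to0$, and continuity plus compactness give convergence to the critical set.

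\textbf{Main obstacle.} The delicate point is wording rather than calculation. This argument yields convergence to the \emph{set} of critical points, not necessarily to a single equilibrium. That is exactly what the proposition claims, so I would not attempt pointwise convergence, which would need a Łojasiewicz inequality; $\Phi$ is real-analytic, so that route is available if needed later. I would also double-check the orthogonality/self-adjointness of $\Pi_z$ entrywise, since the whole identity rests on it.
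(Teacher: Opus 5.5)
Your proposal is correct and follows the paper's proof step for step. Both use the chain rule in the real inner product with $\nabla_{z_\ell}\Phi = -D_\ell$, then self-adjointness and idempotence of the orthogonal projector $\Pi_z$ to obtain $-\sum_\ell \|\Pi_{z_\ell}(D_\ell)\|^2$, and finally LaSalle's invariance principle on the compact torus for convergence to the critical set. Your extra checks are sound supplements to details the paper leaves implicit: that the flow preserves the torus, and that Barbalat's lemma gives an alternative route (the drives are affine rather than linear in $z$, which changes nothing).
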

\begin{proof}
With the real inner product $\Real\langle v, w\rangle$ on $\C^n \cong \R^{2n}$, the chain rule and $\nabla_{z_\ell}\Phi = -D_\ell$ give $\dd\Phi/\dd t = \sum_\ell \Real\langle \nabla_{z_\ell}\Phi, \dot z_\ell\rangle = -\sum_\ell \Real\langle D_\ell, \Pi_{z_\ell}(D_\ell)\rangle = -\sum_\ell \|\Pi_{z_\ell}(D_\ell)\|^2$, because $\Pi_z$ is an orthogonal projector and $\Pi_z^2 = \Pi_z$. $\Phi$ is bounded below on the torus and non-increasing, and the invariance principle of LaSalle yields convergence to the largest invariant subset of $\{\dd\Phi/\dd t = 0\}$, which is the critical set.
\end{proof}

\begin{remark}[Convergence to a single point]\label{rem:lojasiewicz} 
Proposition~\ref{prop:lyapunov} gives convergence to the critical \emph{set}, which for a general Lyapunov function need not imply convergence to a single point. Here it does: in the angle coordinates of Remark~\ref{rem:xy}, $\Phi$ is a trigonometric polynomial and hence real-analytic, so the {\L}ojasiewicz gradient inequality applies and every trajectory of the constrained flow has finite length and converges to one critical point. This is the standard argument for analytic gradient flows and is what licenses speaking of \emph{the} free equilibrium $z^\circ$ reached from a given initialisation.
\end{remark}

\begin{remark}[Symmetric feedback]
The drive \eqref{eq:drive-app} is a gradient only because the feedback into layer $\ell$ uses the transpose $W_{\ell+1}^\top$ of the weights that carry its output upward. With an independent feedback matrix the flow is no longer a gradient flow, $\Phi$ is no longer a Lyapunov function, and the equilibrium-propagation theorem of \ref{app:lockin} does not apply. This is the same weight-symmetry requirement as in standard equilibrium propagation \cite{scellierEquilibriumPropagationBridging2017}.
\end{remark}

At a critical point each entry of the drive is real relative to its state, $\Pi_{z}(D) = 0 \iff D_j \in \R\, z_j$, so for $D_j \neq 0$ either $z_j = \unitproject(D_j)$ (aligned) or $z_j = -\unitproject(D_j)$ (anti-aligned). Holding all other entries fixed and writing $D_j = r_j z_j$, the energy varies with the phase $\theta_j$ of $z_j$ as $-|D_j|\cos(\theta_j - \theta_{D_j})$, which is minimised at $\theta_j = \theta_{D_j}$ and maximised at $\theta_j = \theta_{D_j} + \pi$: an anti-aligned entry ($r_j < 0$) sits at the maximum and is therefore a strict local maximum in that coordinate, so the critical point is not a local minimum of $\Phi|_{\T}$ and, the flow being a gradient flow, is not Lyapunov stable. Stable equilibria of \eqref{eq:riemannian-flow-app} therefore have every oscillator aligned with its drive,
\begin{equation}\label{eq:fixed-point-app}
z_{\ell}^* = \unitproject\big(D_\ell(z^*)\big) = \frac{D_\ell(z^*)}{|D_\ell(z^*)|} \quad\text{entrywise}, \qquad \ell = 1,\dots,L .
\end{equation}

\subsection{Discrete Settling Update}\label{app:discrete}

In practice the equilibrium is reached by iterating Eq.~\eqref{eq:discrete-settle}, with all layers updated from the previous iterate,
\begin{equation}\label{eq:settle-app}
z_\ell^{(k+1)} = (1-\dt)\, z_\ell^{(k)} + \dt\; \unitproject\big(D_\ell(z^{(k)})\big),
\qquad 0 < \dt \le 1,
\end{equation}
where $\unitproject(g) = g/|g|$ acts entrywise.

\begin{proposition}[Fixed points of the settling update]\label{prop:fixed-points}
Let $z$ be such that $D_{\ell,j}(z) \neq 0$ for all $\ell, j$. Then $z$ is a fixed point of \eqref{eq:settle-app} if and only if $z_\ell = \unitproject(D_\ell(z))$ for all $\ell$, that is, if and only if $z$ is an aligned critical point \eqref{eq:fixed-point-app} of $\Phi$ on the torus. Moreover the closed unit polydisc is forward invariant: if $|z_{\ell,j}^{(k)}| \le 1$ for all $\ell,j$ then $|z_{\ell,j}^{(k+1)}| \le 1$.
\end{proposition}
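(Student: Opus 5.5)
The plan is to treat the two claims separately, since each reduces to an elementary entrywise computation on the update map \eqref{eq:settle-app}.

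\textbf{Fixed-point characterization.} We use $\dt > 0$. A point $z$ is fixed by \eqref{eq:settle-app} iff
\[
z_\ell = (1-\dt)\,z_\ell + \dt\,\unitproject\big(D_\ell(z)\big),
\]
and subtracting $(1-\dt)z_\ell$ and dividing by $\dt$ this is equivalent to $z_\ell = \unitproject(D_\ell(z))$ entrywise. The hypothesis $D_{\ell,j}(z)\neq 0$ makes $\unitproject$ well defined at $z$, and it is the only place that hypothesis enters.

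For the converse direction we identify this condition with \eqref{eq:fixed-point-app}. If $z_\ell = \unitproject(D_\ell(z))$, then every entry has modulus one, so $z$ lies on the torus. Moreover $D_{\ell,j} = |D_{\ell,j}|\,z_{\ell,j} \in \R_{>0}\, z_{\ell,j}$. Hence
\[
\Pi_{z}(D)_j = D_j - \Real(\bar z_j D_j)\,z_j = |D_j|\,z_j - |D_j|\,z_j = 0,
\]
so $z$ is a critical point of $\Phi$ on the torus, and it is aligned, i.e.\ $r_j>0$ in the notation preceding \eqref{eq:fixed-point-app}.

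Conversely, an aligned critical point satisfies \eqref{eq:fixed-point-app} by definition, and that identity is exactly the fixed-point equation above. One point needs care: a fixed point of the iteration need not a priori lie on $\T$. This is resolved automatically, because the fixed-point equation itself forces $|z_{\ell,j}| = 1$.

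\textbf{Forward invariance of the polydisc.} Suppose $|z^{(k)}_{\ell,j}| \le 1$. For each entry, the triangle inequality with $0<\dt\le 1$ gives
\[
|z^{(k+1)}_{\ell,j}| \le (1-\dt)\,|z^{(k)}_{\ell,j}| + \dt\,\big|\unitproject(D_{\ell,j})\big| \le (1-\dt) + \dt = 1.
\]
Here we use $1-\dt\ge 0$ and $|\unitproject(g)|=1$. So the polydisc is preserved: each new entry is a convex combination of a point in the closed unit disc and a point on the unit circle.

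The one genuine obstacle is the case $D_{\ell,j}(z^{(k)}) = 0$ along an iterate, where $\unitproject$ is undefined. The invariance claim implicitly requires the update to be defined at all. I would handle this by stating the invariance for iterates at which the drive is nonzero. Alternatively, one can adopt any convention $\unitproject(0)\in\overline{\mathbb{D}}$, for instance the regularized $\softproj$ or $\unitproject(0)=0$; the same convex-combination bound then goes through unchanged.
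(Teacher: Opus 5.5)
Your proof is correct and follows the paper's argument essentially step for step. You cancel $(1-\dt)z_\ell$ and divide by $\dt>0$ to get $z_\ell=\unitproject(D_\ell(z))$, observe that this forces unit modulus with each entry a positive multiple of its drive, and bound $|z^{(k+1)}_{\ell,j}|\le(1-\dt)|z^{(k)}_{\ell,j}|+\dt\le 1$ by the triangle inequality; your explicit check that $\Pi_z(D)=0$ and your note on iterates where a drive vanishes are harmless additions that the paper leaves implicit.
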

\begin{proof}
A fixed point satisfies $z_\ell = (1-\dt) z_\ell + \dt\,\unitproject(D_\ell)$, i.e.\ $z_\ell = \unitproject(D_\ell)$ since $\dt > 0$; the converse is immediate. Such a $z$ has unit modulus and each entry is a positive multiple of its drive, which is \eqref{eq:fixed-point-app}. For the invariance, the triangle inequality with $|\unitproject(\cdot)| = 1$ gives $|z^{(k+1)}_{\ell,j}| \le (1-\dt)|z^{(k)}_{\ell,j}| + \dt \le (1-\dt) + \dt = 1$.
\end{proof}

\begin{proposition}[$\Phi$ descends along the relaxation flow]\label{prop:relax-lyapunov}
Let
\begin{equation}\label{eq:relax-flow-app}
\dot z_\ell = -z_\ell + \unitproject\big(D_\ell(z)\big), \qquad \ell = 1,\dots,L,
\end{equation}
be the flow of which \eqref{eq:settle-app} is the forward Euler step, and suppose $D_{\ell,j}(z) \neq 0$ along the trajectory. The closed unit polydisc $\{|z_{\ell,j}| \le 1\}$ is forward invariant for \eqref{eq:relax-flow-app}, and on it
\begin{equation}\label{eq:relax-descent-app}
\frac{\dd\Phi}{\dd t}
= \sum_{\ell,j}\Big[\Real\big(\bar D_{\ell,j}\,z_{\ell,j}\big) - |D_{\ell,j}|\Big]
= -\sum_{\ell,j}|D_{\ell,j}|\,\big(1 - |z_{\ell,j}|\cos\alpha_{\ell,j}\big) \;\le\; 0,
\end{equation}
where $\alpha_{\ell,j}$ is the angle between $z_{\ell,j}$ and $D_{\ell,j}$, with equality exactly when $z_{\ell,j} = \unitproject(D_{\ell,j})$ for every oscillator. Hence $\Phi$ is a Lyapunov function for \eqref{eq:relax-flow-app} on the whole polydisc --- off the torus as well as on it --- and every trajectory converges to the set of aligned fixed points \eqref{eq:fixed-point-app}.
\end{proposition}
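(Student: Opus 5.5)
The plan is to compute $\dd\Phi/\dd t$ by the chain rule in the real inner product on $\C^n \cong \R^{2n}$, exactly as in Proposition~\ref{prop:lyapunov}. Because $\Phi$ is a polynomial in $z,\bar z$, the chain rule makes sense off the torus as well. Its real gradient is $\nabla_{z_\ell}\Phi = 2\,\partial\Phi/\partial\bar z_\ell = -D_\ell$ by \eqref{eq:drive-app}; this identity holds on all of $\C^{n}$, not only on the torus. Hence
\[
\frac{\dd\Phi}{\dd t} = \sum_\ell \Real\langle -D_\ell, -z_\ell + \unitproject(D_\ell)\rangle = \sum_{\ell,j}\Big[\Real(\bar D_{\ell,j} z_{\ell,j}) - \Real(\bar D_{\ell,j}\,D_{\ell,j}/|D_{\ell,j}|)\Big].
\]
The last term equals $|D_{\ell,j}|$, which gives the first equality in \eqref{eq:relax-descent-app}. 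Writing $\Real(\bar D z) = |D||z|\cos\alpha$ gives the second.

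For forward invariance of the polydisc, I would compute $\frac{\dd}{\dd t}|z_{\ell,j}|^2 = 2\Real(\bar z_{\ell,j}\dot z_{\ell,j}) = -2|z_{\ell,j}|^2 + 2\Real(\bar z_{\ell,j}\,\unitproject(D_{\ell,j}))$. This is at most $2|z_{\ell,j}|(1-|z_{\ell,j}|)$, which is $\le 0$ whenever $|z_{\ell,j}|=1$. A standard comparison argument then applies: $r=|z|$ satisfies $\dot r \le 1-r$, so $r(t)\le 1-(1-r(0))e^{-t}\le 1$. This is the continuous analogue of the triangle-inequality step in Proposition~\ref{prop:fixed-points}.

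The sign statement follows directly. On the polydisc, $|z_{\ell,j}|\cos\alpha_{\ell,j}\le 1$, so every summand $-|D_{\ell,j}|(1-|z|\cos\alpha)$ is $\le 0$. Since $D_{\ell,j}\ne 0$ by hypothesis, a summand vanishes iff $|z_{\ell,j}|\cos\alpha_{\ell,j}=1$. This forces both $|z_{\ell,j}|=1$ and $\alpha_{\ell,j}=0$, that is, $z_{\ell,j}=\unitproject(D_{\ell,j})$. The total derivative vanishes iff all summands do, which is exactly the aligned fixed-point condition of Proposition~\ref{prop:fixed-points}.

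Convergence is the step needing care. The polydisc is compact and $\Phi$ is continuous, hence bounded below, and trajectories remain in the polydisc. I would apply LaSalle's invariance principle as in Proposition~\ref{prop:lyapunov}: $\omega$-limit sets are nonempty, compact and invariant, and they lie in $\{\dd\Phi/\dd t=0\}$, which is the aligned set. The obstacle is that $\unitproject$ is discontinuous where some $D_{\ell,j}=0$, so the vector field is only locally Lipschitz on the open set where all drives are nonzero. I would handle this by assuming, as the statement does, that $D_{\ell,j}\ne 0$ along the trajectory. One must also ensure the $\omega$-limit points stay in the region where $D\neq 0$. I would either add that as part of the standing hypothesis, interpreting ``along the trajectory'' to include limit points, or argue that near a zero of $D_{\ell,j}$ the summand $|D_{\ell,j}|(1-\cdots)$ degenerates, and flag this as the one nonrigorous gap. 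Convergence to a single point, as opposed to the set, is not claimed, so no Łojasiewicz argument is needed.
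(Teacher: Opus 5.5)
Your proposal is correct and follows the paper's proof essentially step for step. The shared ingredients are the chain rule with $\nabla_{z_\ell}\Phi = -D_\ell$ valid on all of $\C^{n}$, the identity $\bar D\,\unitproject(D) = |D|$, the sign and equality analysis via $|z_{\ell,j}|\cos\alpha_{\ell,j}\le 1$, a modulus-derivative argument for forward invariance, and LaSalle; your comparison bound $r(t)\le 1-(1-r(0))e^{-t}$ is a slightly tidier version of the paper's boundary check. The paper also leaves implicit the issue you flag about $\omega$-limit points reaching a zero drive, and your reading of the hypothesis as drives bounded away from zero on the closure of the trajectory is what makes the LaSalle step legitimate.
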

\begin{proof}
The Wirtinger computation of \ref{app:wirtinger} nowhere uses $|z_{\ell,j}| = 1$, so $\nabla_{z_\ell}\Phi = -D_\ell$ holds on all of $\C^{n}$. With the real inner product and $\bar D\,\unitproject(D) = |D|$,
\[
\frac{\dd\Phi}{\dd t}
= \sum_{\ell,j}\Real\Big(\overline{(-D_{\ell,j})}\big(-z_{\ell,j} + \unitproject(D_{\ell,j})\big)\Big)
= \sum_{\ell,j}\Big[\Real\big(\bar D_{\ell,j} z_{\ell,j}\big) - |D_{\ell,j}|\Big],
\]
and $\Real(\bar D_{\ell,j} z_{\ell,j}) = |D_{\ell,j}|\,|z_{\ell,j}|\cos\alpha_{\ell,j}$ gives the second form of \eqref{eq:relax-descent-app}. It is non-positive because $|z_{\ell,j}|\cos\alpha_{\ell,j} \le |z_{\ell,j}| \le 1$, with equality iff $|z_{\ell,j}| = 1$ and $\alpha_{\ell,j} = 0$, i.e.\ $z_{\ell,j} = \unitproject(D_{\ell,j})$. Forward invariance follows from $\dd|z_{\ell,j}|/\dd t = -|z_{\ell,j}| + \cos\alpha_{\ell,j} \le 0$ whenever $|z_{\ell,j}| = 1$. Finally, $\dd\Phi/\dd t = 0$ forces $\dot z = 0$, so the set on which $\Phi$ is stationary consists of equilibria and is trivially invariant; $\Phi$ is continuous on the compact polydisc, and LaSalle's invariance principle gives convergence to that set.
\end{proof}

\begin{remark}[Behaviour of the moduli]\label{rem:modulus} 
It is tempting to read a geometric contraction of the modulus deficit off the same triangle inequality, but that inequality bounds $|z^{(k+1)}|$ from \emph{above} and therefore yields $1 - |z^{(k+1)}_{\ell,j}| \ge (1-\dt)\big(1 - |z^{(k)}_{\ell,j}|\big)$, the opposite of a contraction. No contraction holds in general. Writing $\alpha_k$ for the angle between $z^{(k)}_{\ell,j}$ and $\unitproject\big(D_{\ell,j}(z^{(k)})\big)$, the exact identity is
\begin{equation}\label{eq:modulus-identity-app}
\big|z^{(k+1)}_{\ell,j}\big|^{2}
= (1-\dt)^{2}\big|z^{(k)}_{\ell,j}\big|^{2} + \dt^{2}
+ 2\dt(1-\dt)\,\big|z^{(k)}_{\ell,j}\big|\cos\alpha_k ,
\end{equation}
so the modulus decreases whenever the state and its drive are strongly misaligned: with $\dt = \tfrac12$, $z^{(k)}_{\ell,j} = 1$ and $\unitproject(D_{\ell,j}) = -1$ one gets $z^{(k+1)}_{\ell,j} = 0$, and the deficit jumps from $0$ to $1$. Setting $\alpha_k = 0$ in \eqref{eq:modulus-identity-app} recovers the exact relation $1 - |z^{(k+1)}_{\ell,j}| = (1-\dt)\big(1 - |z^{(k)}_{\ell,j}|\big)$, and by continuity the deficit contracts at rate $1-\dt + O(\alpha_k^2)$ once the phases have settled. The correct statement is therefore asymptotic: during the phase transient only forward invariance is guaranteed, while near an aligned equilibrium the moduli return to one geometrically at the fixed rate $1-\dt$, independently of the drive. This is the regime in which the Hebbian is read, and the fixed-point condition of Proposition~\ref{prop:fixed-points} does not involve the moduli at all.
\end{remark}

\begin{remark}[Initialisation]\label{rem:init} 
A zero initialisation is not admissible for \eqref{eq:settle-app}: with $z^{(0)} = 0$ the drive vanishes on every layer except the first, where $D_1 = W_1 x$, and on the output layer, where $D_L = (\beta/d)\,y$ vanishes as well at $\beta = 0$. The projector is undefined at $g = 0$, and any convention for $\unitproject(0)$ installs a preferred phase and breaks the rotation invariance of the energy (Remark~\ref{rem:soft}). Layers are therefore initialised on the torus --- at random phases, or by a single feedforward sweep $z_\ell \leftarrow \unitproject(W_\ell z_{\ell-1})$ from the input --- or the soft projection \eqref{eq:soft-project-app} is used, for which $\unitproject_\epsilon(0) = 0$ is continuous.
\end{remark}

\begin{remark}[Relation to projected Euler integration]\label{rem:integrators}
Equation~\eqref{eq:settle-app} is the forward Euler step of $\dot z_\ell = -z_\ell + \unitproject(D_\ell)$, a relaxation of every oscillator toward the direction of its drive. It is \emph{not} a discretisation of the constrained flow \eqref{eq:riemannian-flow-app}: its transient is a convex combination of the previous state and a unit vector and leaves the torus, whereas the constrained flow stays on it. Leaving the torus costs nothing, however: both flows descend $\Phi$, the constrained one on the torus (Proposition~\ref{prop:lyapunov}) and the relaxation one on the whole polydisc (Proposition~\ref{prop:relax-lyapunov}). The textbook projected Euler step, $z_\ell \leftarrow \unitproject\big(z_\ell + \dt\,\Pi_{z_\ell}(D_\ell)\big)$, is a consistent first-order integrator of \eqref{eq:riemannian-flow-app}. Its fixed points are all the critical points of $\Phi|_{\T}$, aligned and anti-aligned alike, since it is stationary wherever $\Pi_z(D) = 0$; the relaxation update \eqref{eq:settle-app} excludes the anti-aligned ones by construction, because $z = \unitproject(D)$ forces alignment. 
The anti-aligned critical points are unstable for both schemes, and at an aligned critical point the two notions of stability agree: in the angle coordinates of Remark~\ref{rem:xy} the Hessian of $\Phi|_{\T}$ factorises as $\nabla^2\Phi = \Lambda\,(I - J)$, where $\Lambda = \mathrm{diag}\big(|D_{\ell,j}|\big) \succ 0$ under (H2) of \ref{app:regime} and $J$ is the Jacobian of the angle map $\theta \mapsto \arg D(\theta)$, while the relaxation update linearises in the tangential directions to $(1-\dt)I + \dt J = I - \dt\,\Lambda^{-1}\nabla^2\Phi$. The eigenvalues of $\Lambda^{-1}\nabla^2\Phi$ are real and carry the signs of those of $\nabla^2\Phi$, so for $\dt$ small enough the relaxation update is locally asymptotically stable exactly at the local minima of $\Phi|_{\T}$ --- exactly the stable equilibria of the projected scheme. The two \emph{stable} equilibrium sets therefore coincide. Equilibrium propagation uses only the fixed-point map $z^*(\beta, W)$ restricted to stable equilibria (\ref{app:lockin}), so both schemes give identical free and nudged equilibria and identical gradient estimates. The relaxation form is used because it needs no tangent projection and, near an equilibrium, its modulus deficit contracts at the fixed rate $1-\dt$ independently of the drive (Remark~\ref{rem:modulus}).
\end{remark}

\begin{remark}[What is and is not proved about convergence]\label{rem:convergence-caveat} 
Propositions~\ref{prop:lyapunov} and \ref{prop:relax-lyapunov} are statements about continuous-time flows; what is iterated in practice is the forward Euler step \eqref{eq:settle-app}. The gap between them is the discretisation, not the departure from the torus: by Proposition~\ref{prop:relax-lyapunov} the relaxation flow descends $\Phi$ throughout the polydisc, and only a step that overshoots can undo this. The step size at which it does is explicit. Writing $v(z)$ for the right-hand side of \eqref{eq:relax-flow-app} and noting that $\Phi$ is a quadratic polynomial, so that $\Phi(z + \dt\,v) = \Phi(z) + \dt\,\Real\langle\nabla\Phi, v\rangle + \tfrac{\dt^2}{2}\Real\langle v, \nabla^2\Phi\,v\rangle$ exactly, one has $\|v\|^2 \le 2\sum_{\ell,j}\big(1 - |z_{\ell,j}|\cos\alpha_{\ell,j}\big)$ on the polydisc, while $-\Real\langle\nabla\Phi,v\rangle = \sum_{\ell,j}|D_{\ell,j}|\big(1 - |z_{\ell,j}|\cos\alpha_{\ell,j}\big)$ by \eqref{eq:relax-descent-app}. Hence
\begin{equation}\label{eq:step-bound-app}
\dt \;\le\; \frac{\delta}{M}, \qquad \delta := \inf_k \min_{\ell,j}\big|D_{\ell,j}(z^{(k)})\big|, \quad M := \big\|\nabla^2\Phi\big\|,
\end{equation}
is sufficient for $\Phi(z^{(k+1)}) \le \Phi(z^{(k)})$ at every step, where $M$ is the spectral norm of the symmetric coupling operator assembled from $W_2,\dots,W_L$. With descent restored, the discrete form of LaSalle's principle gives convergence of \eqref{eq:settle-app} to the aligned fixed points. The bound \eqref{eq:step-bound-app} is conservative and degenerates as $\delta \to 0$ --- the same degeneracy that (H2) of \ref{app:regime} excludes --- and above the threshold a single step can genuinely raise $\Phi$. Local asymptotic stability at a given aligned equilibrium satisfying (H2), for $\dt$ small enough, is in any case immediate from the linearisation of Remark~\ref{rem:integrators}. None of this is load-bearing for what follows: equilibrium propagation requires only that the settling residual $\max_{\ell,j}\big|z_{\ell,j} - \unitproject(D_{\ell,j}(z))\big|$ is small at the point where the Hebbian is read, and this is monitored directly in the experiments.
\end{remark}

\begin{remark}[No decay inside the projection]\label{rem:no-decay-inside}
It is tempting to carry the ambient decay of Eq.~\eqref{eq:continuous-flow} into the projected term and iterate $z_\ell \leftarrow (1-\dt) z_\ell + \dt\,\unitproject(-z_\ell + D_\ell)$. This is not the scheme used, and it is not innocuous: a fixed point would require $D_{\ell,j} - z_{\ell,j} = r_j z_{\ell,j}$ with $r_j > 0$, i.e.\ $|D_{\ell,j}| > 1$ for every oscillator, and whenever some drive has modulus below one the iteration has no fixed point on the torus. Since the projection makes the dynamics invariant to the scale of the drive, no condition of this kind should arise, and none does for \eqref{eq:settle-app}.
\end{remark}

\begin{remark}[Soft projection]\label{rem:soft}
The projector $\unitproject$ is discontinuous at $g = 0$, and any convention for $\unitproject(0)$ installs a preferred phase. A continuous variant that remains equivariant under rotations, $\unitproject_\epsilon(e^{\im\varphi} g) = e^{\im\varphi}\,\unitproject_\epsilon(g)$, is
\begin{equation}\label{eq:soft-project-app}
\unitproject_\epsilon(g) = \frac{g}{\sqrt{|g|^2 + \epsilon^2}},
\end{equation}
which coincides with $g/|g|$ for $|g| \gg \epsilon$ and vanishes as $g \to 0$, so that weakly driven oscillators settle to small modulus and are down-weighted in the Hebbian product. Soft projections that interpolate the phase, $\exp\big(\im\, s(|g|)\arg g\big)$, are not equivariant and would break both the rotation invariance of $E$ and the change of frame of \ref{app:frame}.

Note, however, that the fixed points of \eqref{eq:settle-app} with $\unitproject_\epsilon$ in place of $\unitproject$ satisfy $|z_{\ell,j}| < 1$ and therefore do \emph{not} lie on $\T^{n}$. 
The proof of Proposition~\ref{prop:ep} below rests on the variations of $z^*$ being tangent to the torus, so it does not cover the softened dynamics: unless the softened fixed points are exhibited as critical points of a modified energy $\Phi_\epsilon$ on the ambient space, the equilibrium-propagation identity \eqref{eq:ep-theorem-app} is not established for $\epsilon > 0$. The soft projection is used here as a numerical regularisation whose bias is empirically $O(\epsilon^2)$ and which is applied with $\epsilon$ far below the typical drive modulus, not as a variant to which Proposition~\ref{prop:ep} applies.
\end{remark}

\subsection{The Fixed-Point Map and Its Probes}\label{app:nudged}

Fix $x$, $y$ and $W$, and let $z^*(\beta)$ denote the equilibrium reached from the free state when the nudge is $\beta$, so that $z^*(0) = z^\circ$. At an isolated stable equilibrium satisfying (H2) of \ref{app:regime} the implicit function theorem makes $z^*$ a smooth function of a real nudge, with expansion
\begin{equation}\label{eq:real-expansion-app}
z^*(\beta) = z^\circ + \beta\, u_1 + \beta^2 u_2 + \beta^3 u_3 + O(\beta^4),
\qquad u_1 = \frac{\dd z^*}{\dd\beta}\Big|_{0},
\end{equation}
and $u_1$ is the quantity that enters the learning rule of \ref{app:lockin}.

\paragraph{Static estimators.} Standard equilibrium propagation reads $u_1$ from two settles as the one-sided difference $\big(z^*(\beta) - z^*(0)\big)/\beta = u_1 + \beta u_2 + O(\beta^2)$, which carries an $O(\beta)$ bias. A symmetric pair of nudges gives the centered difference
\begin{equation}\label{eq:centered-app}
\frac{z^*(\beta) - z^*(-\beta)}{2\beta} = u_1 + \beta^2 u_3 + O(\beta^4),
\end{equation}
which cancels the even orders, at the cost of a third settle.

\paragraph{From two phases to a continuous probe.} Instead of settling separately at $\pm\beta$, alternate between them. Suppose the nudge varies slowly enough that the network tracks its instantaneous equilibrium, $z(t) = z^*(\beta(t))$ (the adiabatic hypothesis of \ref{app:regime}). For a square-wave nudge $\beta(t) = \varepsilon\,\mathrm{sgn}\cos(\omega_p t)$ the state alternates between $z^*(\varepsilon)$ and $z^*(-\varepsilon)$, and multiplying by the same square wave and averaging over a period gives $\tfrac{1}{\varepsilon}\langle z(t)\,\mathrm{sgn}\cos(\omega_p t)\rangle_T = \big(z^*(\varepsilon) - z^*(-\varepsilon)\big)/(2\varepsilon)$, the centered difference \eqref{eq:centered-app} exactly. Replacing the square wave by its fundamental --- of amplitude $4\varepsilon/\pi$, absorbed into the free parameter $\varepsilon$ --- yields the cosine nudge $\beta(t) = \varepsilon\cos(\omega_p t)$, and the demodulator by $\cos(\omega_p t)$; the following proposition shows what this smoothing does.

\begin{proposition}[The cosine probe is an averaged centered difference]\label{prop:centered-average}
Under the adiabatic hypothesis, with $\langle f\rangle_T$ the average over an integer number of probe periods,
\begin{equation}\label{eq:centered-average-app}
\frac{2}{\varepsilon}\big\langle z(t)\cos(\omega_p t)\big\rangle_T
= \int_0^{\varepsilon} w(\beta)\,\frac{z^*(\beta) - z^*(-\beta)}{2\beta}\,\dd\beta,
\qquad
w(\beta) = \frac{4\beta^2}{\pi\varepsilon^2\sqrt{\varepsilon^2 - \beta^2}},
\quad \int_0^{\varepsilon} w = 1,
\end{equation}
and consequently $\frac{2}{\varepsilon}\langle z(t)\cos(\omega_p t)\rangle_T = u_1 + \tfrac34\varepsilon^2 u_3 + O(\varepsilon^4)$.
\end{proposition}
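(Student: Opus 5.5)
Under the adiabatic hypothesis $z(t) = z^*(\varepsilon\cos\omega_p t)$, so the left-hand side is a deterministic function of $\varepsilon$ alone. The plan is to substitute $\phi = \omega_p t$ and use the integer number of periods to reduce the average to one period, $\langle z\cos\omega_p t\rangle_T = \frac{1}{2\pi}\int_0^{2\pi} z^*(\varepsilon\cos\phi)\cos\phi\,\dd\phi$. We then split the period into $\phi\in[0,\pi/2]\cup[3\pi/2,2\pi]$, where $\cos\phi\ge0$, and $\phi\in[\pi/2,3\pi/2]$, where $\cos\phi\le 0$. The reflection $\phi\mapsto\pi-\phi$ maps $\cos\phi\mapsto-\cos\phi$, so both halves can be written over $\phi\in[-\pi/2,\pi/2]$, giving
\[
\frac{1}{2\pi}\int_{-\pi/2}^{\pi/2}\big[z^*(\varepsilon\cos\phi)-z^*(-\varepsilon\cos\phi)\big]\cos\phi\,\dd\phi
= \frac{1}{\pi}\int_0^{\pi/2}\big[z^*(\varepsilon\cos\phi)-z^*(-\varepsilon\cos\phi)\big]\cos\phi\,\dd\phi .
\]
This already shows that only the odd part of $z^*$ survives, which is the mechanism behind the cancellation of the even orders.

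Next we change variables to $\beta=\varepsilon\cos\phi\in[0,\varepsilon]$. Here $\dd\phi = -\dd\beta/\sqrt{\varepsilon^2-\beta^2}$ and $\cos\phi=\beta/\varepsilon$. Multiplying by $2/\varepsilon$ and writing the bracket as $2\beta$ times the centered difference $\frac{z^*(\beta)-z^*(-\beta)}{2\beta}$, the integrand acquires the factor
\[
\frac{2}{\varepsilon}\cdot\frac1\pi\cdot\frac{\beta}{\varepsilon}\cdot 2\beta\cdot\frac{1}{\sqrt{\varepsilon^2-\beta^2}} = w(\beta),
\]
which matches the stated weight exactly. For the normalisation, substitute $\beta=\varepsilon\sin s$, which gives $\int_0^\varepsilon w = \frac{4}{\pi}\int_0^{\pi/2}\sin^2 s\,\dd s = 1$. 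The endpoint singularity at $\beta=\varepsilon$ is integrable, and $w$ is nonnegative, so the left-hand side is a genuine weighted average of centered differences.

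For the expansion, insert the centered-difference expansion \eqref{eq:centered-app}, $u_1+\beta^2u_3+O(\beta^4)$, which follows from the smooth expansion \eqref{eq:real-expansion-app} guaranteed by the implicit function theorem under (H2). Because $w$ is a probability density on $[0,\varepsilon]$, the error term integrates to $O(\varepsilon^4)$. It remains to compute $\int_0^\varepsilon w\,\beta^2\,\dd\beta = \varepsilon^2\frac{4}{\pi}\int_0^{\pi/2}\sin^4 s\,\dd s = \varepsilon^2\cdot\frac{4}{\pi}\cdot\frac{3\pi}{16}=\frac34\varepsilon^2$, which yields $u_1+\tfrac34\varepsilon^2u_3+O(\varepsilon^4)$.

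The only delicate point is justifying the $O(\beta^4)$ remainder uniformly on $[-\varepsilon,\varepsilon]$. This requires $z^*$ to be $C^5$, or at least to have a bounded fifth derivative, near $0$, and it requires the branch $z^*(\beta)$ followed by the adiabatic state to be the same smooth branch for both signs of $\beta$. I would take $\varepsilon$ smaller than the implicit-function-theorem radius and invoke analyticity of $\Phi$ in angle coordinates (Remark~\ref{rem:xy}) to get a smooth, indeed analytic, branch. The rest is exact calculus.
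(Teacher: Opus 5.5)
Your proof is correct and takes essentially the paper's route: fold the period average by a symmetry, substitute $\beta=\varepsilon\cos\phi$, and read off the expansion from a fourth moment. The differences are cosmetic (you fold with $\phi\mapsto\pi-\phi$ onto a quarter period and extract the odd part before substituting, where the paper folds with $\phi\mapsto 2\pi-\phi$, substitutes over $[-\varepsilon,\varepsilon]$, takes the odd part afterwards, and uses $\langle\cos^4\rangle=\tfrac38$ in place of your moment of $w$); your remark that the $O(\varepsilon^4)$ remainder needs one more order of smoothness than \eqref{eq:real-expansion-app} literally states is a correct refinement, and it is supplied by the $C^\infty$ branch of (H2) together with (H3).
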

\begin{proof}
With $\varphi = \omega_p t$ and $z(t) = z^*(\varepsilon\cos\varphi)$, the symmetry $\varphi \mapsto 2\pi - \varphi$ folds the period average onto a half period, and the substitution $\beta = \varepsilon\cos\varphi$, $\dd\varphi = -\dd\beta/\sqrt{\varepsilon^2 - \beta^2}$, then changes variable:
\[
\frac{1}{2\pi}\int_0^{2\pi} z^*(\varepsilon\cos\varphi)\cos\varphi\,\dd\varphi
= \frac{1}{\pi}\int_0^{\pi} z^*(\varepsilon\cos\varphi)\cos\varphi\,\dd\varphi
= \frac{1}{\pi\varepsilon}\int_{-\varepsilon}^{\varepsilon} \frac{z^*(\beta)\,\beta\,\dd\beta}{\sqrt{\varepsilon^2-\beta^2}} .
\]
Only the odd part of $z^*$ survives, and folding the integral onto $[0, \varepsilon]$ and multiplying by $2/\varepsilon$ yields \eqref{eq:centered-average-app}; the normalisation follows from $\int_0^\varepsilon \beta^2\,\dd\beta/\sqrt{\varepsilon^2-\beta^2} = \pi\varepsilon^2/4$. The expansion follows from \eqref{eq:real-expansion-app} with $\langle\cos^2\rangle = \tfrac12$ and $\langle\cos^4\rangle = \tfrac38$.
\end{proof}

The lock-in estimate is therefore a weighted average of centered finite differences over nudge amplitudes up to $\varepsilon$: it inherits the second-order accuracy of the centered estimator, has no $O(\varepsilon)$ bias, and is obtained from one continuous settle rather than three. To first order the state oscillates about the free equilibrium as
\begin{equation}\label{eq:cosine-response-app}
z(t) - z^\circ \approx \varepsilon\, u_1 \cos(\omega_p t),
\end{equation}
so the derivative of the equilibrium with respect to the nudge is carried at the single frequency $\omega_p$, where it can be extracted while the free state, drift and noise at other frequencies are rejected.

\subsubsection{Complex Nudges and Non-Holomorphicity}\label{app:nonholo}

The construction above uses only real nudges. It is natural to ask whether the complex-valued nudge of holomorphic equilibrium propagation \cite{laborieuxHolomorphicEquilibriumPropagation2022}, which reads the derivative from settles at complex $\beta$ on a small circle, can be used instead. The dynamics remain well defined for complex $\beta$: the nudge drive $(\beta/d)\,y$ rotates the effective target by $\arg\beta$. But the projector $g/|g| = g/\sqrt{g\bar g}$ depends on $\bar g$, so $z^*$ is not holomorphic in $\beta$. Linearising about the free equilibrium,
\begin{equation}\label{eq:linearized-map-app}
z^*(\beta) = z^\circ + a\,\beta + b\,\bar\beta + O(|\beta|^2),
\qquad
a = \frac{\partial z^*}{\partial \beta}\Big|_{0},\quad
b = \frac{\partial z^*}{\partial \bar\beta}\Big|_{0},
\end{equation}
and the real derivative of \eqref{eq:real-expansion-app} is the sum of the two Wirtinger components,
\begin{equation}\label{eq:real-derivative-app}
u_1 = \frac{\dd z^*}{\dd \beta_R}\Big|_{0} = a + b .
\end{equation}

\begin{remark}[The obstruction is structural, not an artefact of the projector]\label{rem:maxmod} 
The appeal to $g/\sqrt{g\bar g}$ above is specific to the update \eqref{eq:settle-app}, but the conclusion is not. Suppose $\beta \mapsto z^*_{\ell,j}(\beta)$ were holomorphic on a neighbourhood of $\beta = 0$ while taking values in the unit circle, as any torus-valued equilibrium must. Then $|z^*_{\ell,j}|$ attains its maximum at an interior point, and the maximum modulus principle forces $z^*_{\ell,j}$ to be constant. Hence \emph{any} non-trivial dependence of a torus-valued equilibrium on a complex nudge is non-holomorphic, whatever dynamics produce it and whatever projector is used. The contrast with holomorphic equilibrium propagation is then clean: there the state space is a real vector space, and complexifying $\beta$ genuinely complexifies the equilibrium; here the state is already confined to a real submanifold of $\C^{n}$ and has no room to respond holomorphically.
\end{remark}

\begin{proposition}[The anti-holomorphic part is as large as the holomorphic part]\label{prop:ab}
At every oscillator, $b_j = -\big(z_{j}^{\circ}\big)^{2}\,\bar a_j$; hence $|b_j| = |a_j|$ and $a_j + b_j = 2\im\, z_{j}^{\circ}\,\Imag\big(\bar z_{j}^{\circ} a_j\big)$.
\end{proposition}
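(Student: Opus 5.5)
The approach is to exploit the constraint $|z^*_{j}(\beta)|^2 = 1$, which holds identically for complex $\beta$ near $0$, and differentiate it with Wirtinger calculus. Writing $z^*_j \bar z^*_j \equiv 1$ and applying $\partial/\partial\beta$ at $\beta = 0$ gives
\[
\frac{\partial z^*_j}{\partial\beta}\,\bar z^\circ_j + z^\circ_j\,\frac{\partial \bar z^*_j}{\partial \beta} = 0 .
\]
The conjugation rule $\partial \bar f/\partial\beta = \overline{\partial f/\partial\bar\beta}$ turns this into $a_j \bar z^\circ_j + z^\circ_j \bar b_j = 0$. This says only that $\Real(\bar z^\circ_j u)=0$ for the tangent directions. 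It forces $\bar z^\circ_j a_j = -\overline{\bar z^\circ_j b_j}$, which is one real relation too few to pin $b_j$ down. So the modulus constraint alone will not suffice, and I will need a second ingredient.

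That second ingredient is the structure of how $\beta$ enters. The nudge enters the drive only through the term $(\beta/d)\,y$, while $W$ and $x$ are real-structured and independent of $\beta$. I would write $\beta = \beta_R + \im\beta_I$ and compare the two real directional derivatives. By \eqref{eq:real-derivative-app}, $\partial z^*/\partial\beta_R = a+b = u_1$ and $\partial z^*/\partial\beta_I = \im(a-b)$.

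The key claim to establish is then $\partial z^*/\partial\beta_I = \im\,(\ldots)$ expressed through $u_1$ at each oscillator. I expect this step to be the main obstacle, because the imaginary nudge $\im\beta_I y/d$ is not obviously related to the real one. The route I would try first is the linearised fixed-point equation $\delta z = \Pi$-type projection of $\delta D$. Linearising \eqref{eq:fixed-point-app} gives $\delta z_j = \im z^\circ_j\,\Imag(\bar z^\circ_j\,\delta D_j)/|D_j|$, so each component of $\delta z$ is $\im z^\circ_j$ times a real number, namely a phase increment. I would then solve the linear system for these real phase increments, with real sources $\Imag(\bar z^\circ_{L,j} y_j)\delta\beta/d$ for a real nudge and $\Real(\bar z^\circ_{L,j}y_j)\delta\beta_I/d$ for an imaginary one.

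With that representation in hand, the conclusion reduces to algebra. Every variation has the form $\delta z_j = \im z^\circ_j\,r_j$ with $r_j$ real, so both $a_j$ and $b_j$ lie in the tangent line $\im z^\circ_j\R$ after appropriate combination. Setting $a_j = \im z^\circ_j\,\alpha_j$ with $\alpha_j\in\C$, the tangency of $a+b$ and of $\im(a-b)$ yields $b_j = \im z^\circ_j\,\bar\alpha_j$. Substituting gives $b_j = -(z^\circ_j)^2\,\overline{\im z^\circ_j\alpha_j}\cdot\ldots$, which I would simplify to $b_j = -(z^\circ_j)^2\bar a_j$.

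Once that identity holds, $|b_j| = |a_j|$ is immediate. The final formula follows from $a_j - (z^\circ_j)^2\bar a_j = z^\circ_j(\bar z^\circ_j a_j - \overline{\bar z^\circ_j a_j}) = 2\im z^\circ_j\Imag(\bar z^\circ_j a_j)$.
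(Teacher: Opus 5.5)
Your first paragraph is already a complete proof, and it is the paper's proof written in Wirtinger form. The relation $a_j\bar z^\circ_j + z^\circ_j\bar b_j = 0$ is a \emph{complex} equation, so it carries two real relations, not one. Its real and imaginary parts say exactly that both real directional derivatives, $\partial z^*_j/\partial\beta_R = a_j+b_j$ and $\partial z^*_j/\partial\beta_I = \im(a_j-b_j)$, are tangent to the circle. You counted a single tangency condition $\Real(\bar z^\circ_j u) = 0$ for one direction $u$, but a complex nudge has two real directions. Your own rewriting $\bar z^\circ_j a_j = -\overline{\bar z^\circ_j b_j}$ determines $b_j$ completely: conjugate it and multiply by $z^\circ_j$, using $|z^\circ_j| = 1$, to get $b_j = -(z^\circ_j)^2\bar a_j$. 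The paper does the same thing. It expands $\Real(\bar z^\circ_j\,\delta z_j) = 0$ for all complex $\beta$ into $c\beta + \bar c\bar\beta = 0$ with $c = \bar z^\circ_j a_j + z^\circ_j\bar b_j$, and concludes $c = 0$.

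The detour through how $\beta$ enters the drive is therefore unnecessary. Moreover, the step you single out as the main obstacle would fail as you phrase it: $\partial z^*/\partial\beta_I$ cannot be expressed through $u_1$ oscillator by oscillator. The real and imaginary nudges drive the same linearised system, but with sources proportional to $\Imag(\bar z^\circ_{L,j}y_j)$ and $\Real(\bar z^\circ_{L,j}y_j)$ respectively, which are independent. For example, at a free state with $z^\circ_L = y$ one has $u_1 = 0$ while $\partial z^*/\partial\beta_I \neq 0$.

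Your final algebra never uses that claim, though. With $a_j = \im z^\circ_j\alpha_j$, the conclusion $b_j = \im z^\circ_j\bar\alpha_j$ follows from tangency of $a_j+b_j$ and of $\im(a_j-b_j)$ alone. Your linearisation $\delta z_j = \im z^\circ_j\Imag(\bar z^\circ_j\,\delta D_j)/|D_j|$ supplies that tangency for any direction of $\delta\beta$. That is the same information as the modulus constraint you set aside, so the argument closes only by reusing the ingredient you declared insufficient.

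Drop the linear-system solve. Also note that the substitution is exact, $-(z^\circ_j)^2\,\overline{\im z^\circ_j\alpha_j} = \im z^\circ_j\bar\alpha_j = b_j$, with no trailing factor. Your derivations of $|b_j| = |a_j|$ and of the formula for $a_j+b_j$ are then fine as written.
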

\begin{proof}
The equilibrium stays on the torus for every $\beta$, so its first variation $\delta z_j = a_j\beta + b_j\bar\beta$ is tangent: $\Real(\bar z^\circ_j\,\delta z_j) = 0$ for all complex $\beta$. Expanding, $(\bar z^\circ_j a_j + z^\circ_j \bar b_j)\beta + (\bar z^\circ_j b_j + z^\circ_j \bar a_j)\bar\beta = 0$ identically in $\beta$; the two coefficients are complex conjugates of one another, and vanishing of the real-linear functional forces $\bar z^\circ_j b_j + z^\circ_j \bar a_j = 0$, i.e.\ $b_j = -(z^\circ_j)^2\bar a_j$ using $|z^\circ_j| = 1$. The remaining identities follow by substitution.
\end{proof}

\begin{proposition}[Bias of the contour estimator]\label{prop:contour}
Holomorphic equilibrium propagation estimates the derivative of an equilibrium quantity $f(\beta)$ at $\beta = 0$ by Cauchy's formula $f'(0) = \frac{1}{2\pi\im}\oint_{|\beta| = r} f(\beta)\,\beta^{-2}\,\dd\beta$, in practice $\frac{1}{Nr}\sum_{n} f(r e^{\im\varphi_n}) e^{-\im\varphi_n}$. Applied to $f = z^*$ of \eqref{eq:linearized-map-app} it returns $a + O(r)$, not $a + b$.
\end{proposition}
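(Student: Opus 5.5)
Substitute the linearisation \eqref{eq:linearized-map-app} into the contour formula and evaluate it term by term. The steps below use the continuous contour integral; the discrete sum with $N \ge 3$ equally spaced points behaves the same way, by the orthogonality of the roots of unity.

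First parametrise the circle by $\beta = r e^{\im\varphi}$, so that $\dd\beta = \im r e^{\im\varphi}\dd\varphi$. Cauchy's formula then becomes the Fourier coefficient
\[
\frac{1}{2\pi\im}\oint_{|\beta|=r} f(\beta)\,\beta^{-2}\,\dd\beta = \frac{1}{2\pi r}\int_0^{2\pi} f(r e^{\im\varphi})\,e^{-\im\varphi}\,\dd\varphi ,
\]
which is the continuum version of the stated sum $\frac{1}{Nr}\sum_n f(re^{\im\varphi_n})e^{-\im\varphi_n}$.

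Next, insert $f = z^*(\beta) = z^\circ + a\beta + b\bar\beta + R(\beta)$, with $|R(\beta)| \le K|\beta|^2$ near $0$. This bound holds because $z^*$ is real-smooth in $(\Real\beta,\Imag\beta)$. That smoothness comes from the implicit function theorem at an isolated stable equilibrium satisfying (H2); the argument is the same one used for \eqref{eq:real-expansion-app}, now applied to the two-parameter real nudge. The four terms contribute as follows:
\begin{itemize}
\item the constant $z^\circ$ gives $\frac{z^\circ}{r}\langle e^{-\im\varphi}\rangle = 0$;
\item the holomorphic term gives $\frac{1}{r}\,a r\,\langle e^{\im\varphi}e^{-\im\varphi}\rangle = a$;
\item the anti-holomorphic term gives $\frac{1}{r}\,b r\,\langle e^{-2\im\varphi}\rangle = 0$;
\item the remainder is bounded in modulus by $\frac{1}{r}Kr^2 = O(r)$.
\end{itemize}
The estimator therefore returns $a + O(r)$.

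Finally, to show that this really differs from the needed derivative $u_1 = a + b$ of \eqref{eq:real-derivative-app}, invoke Proposition~\ref{prop:ab}: $|b_j| = |a_j|$ at every oscillator. The bias $-b$ is therefore as large as the signal itself and never vanishes unless $a_j = 0$. For the discrete sum, $\frac{1}{N}\sum_n e^{\im k\varphi_n} = 0$ for $k = -1, -2$ when $N \ge 3$, so the same cancellation holds exactly.

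The only delicate point is justifying the $O(|\beta|^2)$ remainder uniformly on the circle, i.e.\ real $C^2$ dependence of $z^*$ on complex $\beta$ near $0$. I would obtain it by applying the implicit function theorem to the fixed-point equation $z = \unitproject(D(z,\beta))$ in real coordinates. This is legitimate because $\unitproject$ is real-analytic away from $0$ and (H2) keeps the drives bounded away from zero. Invertibility of the linearisation then follows from the stability factorisation $I - \dt\,\Lambda^{-1}\nabla^2\Phi$ of Remark~\ref{rem:integrators}.
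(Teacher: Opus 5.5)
Your proof is correct and follows the paper's argument: the paper uses $\bar\beta = r^2/\beta$ on the circle and evaluates $\oint\beta^{-2}\,\dd\beta$, $\oint\beta^{-1}\,\dd\beta$ and $r^2\oint\beta^{-3}\,\dd\beta$, which is exactly your Fourier-coefficient computation after parametrising $\beta = re^{\im\varphi}$, and it bounds the remainder by $O(r)$ in the same way. Your additions go slightly beyond the paper without changing the approach: the observation that the discrete sum needs $N \ge 3$ points (with $N=2$ points on the real axis it reduces to the centered difference and returns $a+b$), and the implicit-function justification of the $O(|\beta|^2)$ remainder, which the paper simply takes as given in \eqref{eq:linearized-map-app}.
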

\begin{proof}
On the circle $\bar\beta = r^2/\beta$, so $\oint \beta^{-2}\dd\beta = 0$, $\oint \beta\,\beta^{-2}\dd\beta = 2\pi\im$ and $\oint \bar\beta\,\beta^{-2}\dd\beta = r^2\oint\beta^{-3}\dd\beta = 0$. The $b\bar\beta$ term integrates to zero at every radius, and the $O(|\beta|^2)$ remainder contributes $O(r)$.
\end{proof}

By Proposition~\ref{prop:ab} the missing part is as large as the retained one, so the contour estimator would be biased by $O(1)$ for phasor networks. The same holds for its temporal analogue: a complex exponential probe $\beta(t) = \varepsilon e^{\im\omega_p t}$ produces, under the adiabatic hypothesis,
\begin{equation}\label{eq:complex-probe-app}
z(t) - z^\circ \approx a\,\varepsilon e^{\im\omega_p t} + b\,\varepsilon e^{-\im\omega_p t},
\end{equation}
two sidebands with $a$ at $+\omega_p$ and $b$ at $-\omega_p$, and demodulating at $+\omega_p$ recovers $a$ only. The real cosine probe has $\bar\beta = \beta$, so both components add coherently, $z(t) - z^\circ \approx \varepsilon(a+b)\cos(\omega_p t)$, in agreement with \eqref{eq:cosine-response-app}. Table~\ref{tab:probes-app} summarises the three cases.

\begin{table}[h]
\centering
\caption{Response of the equilibrium to three nudging protocols, to first order in the probe amplitude.}\label{tab:probes-app}
\begin{tabular}{@{}llll@{}}
\toprule
Nudge $\beta(t)$ & component at $+\omega_p$ & component at $-\omega_p$ & what is extracted \\
\midrule
$\pm\varepsilon$ (static) & \multicolumn{2}{c}{$\pm\varepsilon(a+b)$, at DC} & $a+b$, from two or three settles \\
$\varepsilon e^{\im\omega_p t}$ (complex) & $\varepsilon a$ & $\varepsilon b$ & $a$ only, if demodulated at $+\omega_p$ \\
$\varepsilon\cos(\omega_p t)$ (real) & $\tfrac{\varepsilon}{2}(a+b)$ & $\tfrac{\varepsilon}{2}(a+b)$ & $a+b$, from one settle \\
\bottomrule
\end{tabular}
\end{table}

\subsection{Loss Gradient via Lock-in Demodulation}\label{app:lockin}

We first restate the equilibrium-propagation theorem \cite{scellierEquilibriumPropagationBridging2017} in the present setting, where equilibria are stationary points of $\Phi$ on the torus rather than in a vector space.

\begin{proposition}[Equilibrium propagation on the torus]\label{prop:ep}
Let $z^*(\beta, W)$ be a smooth family of aligned equilibria \eqref{eq:fixed-point-app} of $\Phi(\cdot\,;\beta)$, taking values in $\T^{n}$. Then
\begin{equation}\label{eq:ep-theorem-app}
\frac{\partial \mathcal{L}}{\partial W_\ell}
= \frac{\dd}{\dd \beta_R}\,\frac{\partial E}{\partial W_\ell}\big(z^*(\beta)\big)\Big|_{\beta = 0}
= \lim_{\beta\to 0}\frac{1}{\beta}\Big[\frac{\partial E}{\partial W_\ell}\big(z^*(\beta)\big) - \frac{\partial E}{\partial W_\ell}\big(z^\circ\big)\Big].
\end{equation}
\end{proposition}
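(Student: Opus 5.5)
The argument is the standard Scellier--Bengio one, transplanted to the torus. The only geometric input needed is that every variation of $z^*$ in $\beta$ or $W$ is tangent to $\T^n$, and that the aligned-equilibrium condition \eqref{eq:fixed-point-app} kills the pairing of $\nabla_z\Phi$ with tangent vectors.

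\textbf{Step 1 (vanishing tangential gradient).} At an aligned equilibrium, $\Pi_{z}(D_\ell)=0$ for every layer, i.e.\ $D_{\ell,j}\in\R\,z_j$. Since $\nabla_{z_\ell}\Phi=-D_\ell$ on all of $\C^n$ (the Wirtinger computation of \ref{app:wirtinger}), we get $\Real\langle\nabla_{z}\Phi(z^*;\beta),v\rangle=0$ for every $v\in T_{z^*}\T^n$. Differentiating $|z^*_j(\beta,W)|^2=1$ shows that $\partial_\beta z^*$ and $\partial_{W} z^*$ satisfy $\Real(\bar z^*_j\,\delta z_j)=0$. Hence both are tangent, as in the proof of Proposition~\ref{prop:ab}.

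\textbf{Step 2 (the Lagrangian $F(\beta,W):=\Phi(z^*(\beta,W);\beta,W)$).} By the chain rule and Step~1, the implicit terms drop out:
\[
\partial_\beta F = C\big(z^*_L(\beta,W),y\big), \qquad \partial_{W_\ell}F=\frac{\partial \Phi}{\partial W_\ell}\big(z^*(\beta,W);\beta\big)=\frac{\partial E}{\partial W_\ell}\big(z^*(\beta,W)\big).
\]
The second identity uses that $C$ does not depend on $W$. Evaluating the first at $\beta=0$ gives $\partial_\beta F|_{0}=\mathcal{L}(W)$.

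\textbf{Step 3 (mixed partials).} We need $F$ to be $C^2$ jointly in $(\beta,W)$. This follows because $\Phi$ is polynomial in $(z,\bar z,\beta,W)$ and $z^*$ is assumed smooth; where needed, cite the implicit function theorem under (H2) as in \ref{app:nudged}. Then $\partial_{W_\ell}\partial_\beta F=\partial_\beta\partial_{W_\ell}F$, and at $\beta=0$
\[
\frac{\partial\mathcal{L}}{\partial W_\ell}=\frac{\dd}{\dd\beta}\,\frac{\partial E}{\partial W_\ell}\big(z^*(\beta)\big)\Big|_{\beta=0}.
\]
The derivative in $\beta$ is the real one, because only real nudges enter $F$; this is the $\beta_R$ in the statement. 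The limit form of \eqref{eq:ep-theorem-app} is just the definition of this derivative. Any relabelling of the sign of $\partial E/\partial W_\ell$ as $-H_\ell$ is left to the subsequent lock-in step.

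\textbf{Main obstacle.} The delicate point is Step~1 together with the smoothness hypothesis. Unlike the vector-space setting, $\nabla_z\Phi$ does not vanish at $z^*$: the drive $D_\ell$ is generally nonzero, but radial. The envelope argument survives only because the derivatives of $z^*$ are confined to the tangent space. I would therefore state explicitly that $\partial_{W}z^*$, and not just $\partial_\beta z^*$, is tangent. I would also note that this is exactly what fails for the soft projection of Remark~\ref{rem:soft}.
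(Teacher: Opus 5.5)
Your proposal is correct and follows the paper's proof essentially step for step. The paper also defines $F(W,\beta)=\Phi(z^*(\beta,W);W,\beta)$ and drops the implicit terms $\Real\langle\nabla_z\Phi,\partial z^*\rangle$, using that the variations of $z^*$ are tangent and that the tangential part of $\nabla_z\Phi$ vanishes at an aligned equilibrium; it then concludes by equality of mixed partials, with $F$ being $C^2$ under (H2), and your remarks on the tangency of $\partial_W z^*$ and the failure for the soft projection match the paper's own comments after the proof.
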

\begin{proof}
Let $F(W, \beta) = \Phi\big(z^*(\beta, W); W, \beta\big)$. Differentiating, $\partial F/\partial\beta = C(z^*_L) + \Real\langle \nabla_z\Phi, \partial z^*/\partial\beta\rangle$ and $\partial F/\partial W_\ell = \partial E/\partial W_\ell(z^*) + \Real\langle \nabla_z\Phi, \partial z^*/\partial W_\ell\rangle$. Both inner products vanish: the variations of $z^*$ are tangent to the torus because $z^*$ takes values in it, and at an aligned equilibrium the tangential component of $\nabla_z\Phi$ is zero by \eqref{eq:fixed-point-app}. Hence $\partial F/\partial \beta = C\big(z^*_L(\beta)\big)$ and $\partial F/\partial W_\ell = \partial E/\partial W_\ell\big(z^*(\beta)\big)$, and equality of mixed partial derivatives --- legitimate because $F$ is $C^2$ in $(W,\beta)$ under (H2) --- gives $\partial_{W_\ell} C\big(z_L^*(\beta)\big)\big|_{\beta=0} = \partial_\beta\,\partial_{W_\ell}E\big(z^*(\beta)\big)\big|_{\beta=0}$. The left-hand side is $\partial\mathcal{L}/\partial W_\ell$ by \eqref{eq:total-energy-app}.
\end{proof}

The proof uses only the equilibrium condition, not the path by which it is reached, which is why the two settling schemes of Remark~\ref{rem:integrators} give the same gradient. It does, however, use the torus constraint essentially, which is why the soft projection of Remark~\ref{rem:soft} is not covered. From \eqref{eq:energy-app}, differentiating with respect to the real entries of $W_\ell$,
\begin{equation}\label{eq:hebbian-app}
\frac{\partial E}{\partial W_\ell} = -\Real\big( z_\ell z_{\ell-1}^H \big) =: -H_\ell,
\qquad (H_\ell)_{ij} = \cos(\theta_{\ell,i} - \theta_{\ell-1,j}),
\end{equation}
where $H_\ell$ is the Hebbian matrix of phase alignments between the post-synaptic layer $\ell$ and the pre-synaptic layer $\ell-1$. Substituting into \eqref{eq:ep-theorem-app},
\begin{equation}\label{eq:ep-gradient-app}
\frac{\partial \mathcal{L}}{\partial W_\ell} = -\frac{\dd H_\ell}{\dd \beta_R}\Big|_{\beta = 0}
= -\Real\Big( u_{1,\ell}\, \big(z^{\circ}_{\ell-1}\big)^{H} + z^{\circ}_\ell\,\big(u_{1,\ell-1}\big)^{H} \Big),
\end{equation}
with $u_{1,\ell} = a_\ell + b_\ell$ the nudge-derivative \eqref{eq:real-derivative-app} of the states of layer $\ell$, and $a_0 = b_0 = 0$ because the input layer $z_0 = x$ is fixed. The loss gradient is the derivative of the Hebbian with respect to the nudge; for a complex nudge it would involve both Wirtinger components, which is why the contour estimator of Proposition~\ref{prop:contour} does not apply.

Now drive the nudge as $\beta(t) = \varepsilon\cos(\omega_p t)$ and define the time average over an integer number $n$ of probe periods,
\begin{equation}\label{eq:time-average-app}
\langle f \rangle_T = \frac{1}{T}\int_0^T f(t)\,\dd t, \qquad T = \frac{2\pi n}{\omega_p}.
\end{equation}
Under the hypotheses of \ref{app:regime} the Hebbian tracks the equilibrium,
\begin{equation}\label{eq:hebbian-oscillation-app}
H_\ell(t) = H^{\circ}_{\ell} + \varepsilon\,\frac{\dd H_\ell}{\dd\beta_R}\cos(\omega_p t) + \varepsilon^2 R_\ell(t) + O(\varepsilon^3),
\end{equation}
where $H^{\circ}_{\ell} = \Real\big(z^\circ_\ell (z^\circ_{\ell-1})^H\big)$ is the free-state Hebbian and $R_\ell$ collects the second-order terms, which by $\cos^2 = \tfrac12(1 + \cos 2\omega_p t)$ lie at DC and at $2\omega_p$. Multiplying by $\cos(\omega_p t)$ and averaging over integer periods, the DC terms and the second harmonic integrate to zero, $\langle\cos^2\rangle_T = \tfrac12$, and the first contribution at $\omega_p$ from higher orders is $O(\varepsilon^3)$:
\begin{equation}\label{eq:lockin-average-app}
\big\langle H_\ell \cos(\omega_p t)\big\rangle_T = \frac{\varepsilon}{2}\,\frac{\dd H_\ell}{\dd\beta_R} + O(\varepsilon^3).
\end{equation}
Combining with \eqref{eq:ep-gradient-app} gives the lock-in gradient of Eq.~\eqref{eq:lockin-gradient},
\begin{equation}\label{eq:lockin-gradient-app}
\boxed{\;
\frac{\partial \mathcal{L}}{\partial W_\ell}
= -\frac{2}{\varepsilon}\,\Big\langle \Real\big(z_\ell z_{\ell-1}^H\big)\cos(\omega_p t) \Big\rangle_T + O(\varepsilon^2) + O\big((\omega_p/\kappa)^2\big).
\;}
\end{equation}
The first error term is the $O(\varepsilon^3)$ of \eqref{eq:lockin-average-app} rescaled by $2/\varepsilon$; the second is the residual of tracking the equilibrium at finite probe frequency, quantified in Proposition~\ref{prop:regime}, and is absent in the exact-tracking computation \eqref{eq:hebbian-oscillation-app}--\eqref{eq:lockin-average-app} above. 
The minus sign is that of \eqref{eq:ep-gradient-app}: increasing the nudge moves the output toward the target, so the Hebbian grows in the direction that lowers the loss. Because the cosine probe is symmetric about $\beta = 0$, all even orders in $\varepsilon$ cancel and the estimator has no $O(\varepsilon)$ bias, unlike a one-sided static finite difference. In practice the free-state Hebbian $H^{\circ}_{\ell}$ is subtracted explicitly before demodulation, so that a non-integer number of periods or an onset transient cannot leak the large DC term into the estimate.

\begin{remark}[Other parameters]
The same argument applies to any parameter that enters $E$ linearly. A complex per-oscillator bias with energy $-\Real\langle b_\ell, z_\ell\rangle$ has Hebbian $z_\ell$ itself: the real and imaginary parts of the demodulated state give the gradients with respect to the real and imaginary parts of the bias. (The symbol $b$ is reused here for the bias; it is unrelated to the anti-holomorphic coefficient of \ref{app:nonholo}.) 
\end{remark}

\begin{remark}[Locality]
Equation~\eqref{eq:lockin-gradient-app} is a local rule: the update of $W_\ell$ requires only the pre-synaptic phases $z_{\ell-1}$, the post-synaptic phases $z_\ell$, and the globally broadcast probe $\cos(\omega_p t)$. No error signal is propagated backward, a property shared with standard equilibrium propagation and a principal motivation for its implementation on neuromorphic and analog hardware \cite{scellierEquilibriumPropagationBridging2017,ororbiaBrainInspiredMachineIntelligence2023}.
\end{remark}

\subsection{Hypotheses and Operating Regime}\label{app:regime}

\begin{proposition}[Validity of the lock-in gradient]\label{prop:regime}
Equation~\eqref{eq:lockin-gradient-app} holds with total relative error $O(\varepsilon^2) + O\big((\omega_p/\kappa)^2\big)$, under the following hypotheses. 
\begin{enumerate}
\item[\textnormal{(H1)}] \emph{Symmetric feedback}: the feedback into layer $\ell$ is $W_{\ell+1}^\top z_{\ell+1}$, so that the dynamics is the gradient flow of $\Phi$ (\ref{app:flow}).
\item[\textnormal{(H2)}] \emph{Non-degenerate free equilibrium}: $z^\circ$ is an aligned equilibrium \eqref{eq:fixed-point-app} with $D_{\ell,j}(z^\circ) \neq 0$ for every oscillator, and the Riemannian Hessian of $\Phi(\cdot\,;0)$ restricted to $\T^{n}$ is non-degenerate at $z^\circ$ --- equivalently, the Jacobian of the fixed-point map at $z^\circ$ has no eigenvalue $1$. The implicit function theorem then makes $z^*(\beta, W)$ a $C^\infty$ function of $(\beta, W)$ near $\beta = 0$, so that \eqref{eq:real-expansion-app} holds and Proposition~\ref{prop:ep} applies. Stability alone is not sufficient, since it permits degenerate directions along which $z^*$ need not be differentiable.
\item[\textnormal{(H3)}] \emph{Linear response within one basin}: $\varepsilon$ is small enough that $z^*(\beta(t))$ remains in the basin of attraction of $z^\circ$ for all $|\beta| \le \varepsilon$, no drive crosses zero over that range, and the expansion \eqref{eq:real-expansion-app} holds uniformly on it. 
\item[\textnormal{(H4)}] \emph{Adiabatic probe}: $\omega_p \ll \kappa$, where $\kappa$ is the slowest relaxation rate of the linearised settling dynamics about $z^\circ$, uniformly over $|\beta| \le \varepsilon$, so that $z(t) = z^*(\beta(t)) + O(\varepsilon\,\omega_p/\kappa)$.
\item[\textnormal{(H5)}] \emph{Demodulation}: the average \eqref{eq:time-average-app} runs over an integer number of periods, after a warm-up long compared with $1/\kappa$, with the free-state Hebbian subtracted.
\end{enumerate}
\end{proposition}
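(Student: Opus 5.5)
The plan is to split the total error into a \emph{static} part and a \emph{dynamic} part. The static part is what one would obtain if the state tracked the instantaneous equilibrium exactly, $z(t)=z^*(\beta(t))$. The dynamic part is the tracking residual $z(t)-z^*(\beta(t))$. The hypotheses divide along the same line.

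First, (H1) and (H2) supply the ingredients. By (H1) the settling dynamics descend $\Phi$ (Propositions~\ref{prop:lyapunov} and \ref{prop:relax-lyapunov}). By (H2) the aligned free equilibrium is non-degenerate, so the implicit function theorem makes $z^*(\beta,W)$ smooth near $\beta=0$. This gives the expansion \eqref{eq:real-expansion-app} and makes Proposition~\ref{prop:ep} applicable, which yields \eqref{eq:ep-gradient-app}.

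(H3) keeps $z^*(\beta)$ on this single smooth branch for $|\beta|\le\varepsilon$, with no drive crossing zero. Consequently $H_\ell(z^*(\beta))$ is smooth in $\beta$ on $[-\varepsilon,\varepsilon]$. Applying Proposition~\ref{prop:centered-average} with $H_\ell\circ z^*$ in place of $z^*$ (its proof only used smoothness in $\beta$) gives
\[
\tfrac{2}{\varepsilon}\langle H_\ell\cos\omega_p t\rangle_T = \tfrac{\dd H_\ell}{\dd\beta_R}+\tfrac34\varepsilon^2 h_3+O(\varepsilon^4).
\]
Here the integer-period averaging of (H5) removes the free Hebbian $H^\circ_\ell$ exactly, along with all even orders. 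Combined with \eqref{eq:ep-gradient-app}, this is the $O(\varepsilon^2)$ term, made relative by dividing by the first-order term.

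Second, the dynamic part is bounded by (H4) via a linearisation. Write $z(t)=z^*(\beta(t))+\eta(t)$. Linearise the settling dynamics about the branch $z^*(\beta)$; the Jacobian has tangential spectrum bounded away from zero by $\kappa$ uniformly in $|\beta|\le\varepsilon$, which is (H4) and relies on the linearisation of Remark~\ref{rem:integrators}. This gives
\[
\dot\eta=-A(\beta(t))\eta-\tfrac{\dd z^*}{\dd\beta}\dot\beta+O(|\eta|^2).
\]
Since $\dot\beta=-\varepsilon\omega_p\sin\omega_p t$, the forcing has size $\varepsilon\omega_p$. After the warm-up of (H5) has killed the transient, the solution expands adiabatically as
\[
\eta=-A^{-1}u_1\dot\beta+A^{-2}u_1\ddot\beta+\cdots .
\]
The leading term is $O(\varepsilon\omega_p/\kappa)$ and is in \emph{quadrature}, proportional to $\sin\omega_p t$, so it averages to zero against $\cos\omega_p t$ at first order in $\eta$. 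The in-phase contribution therefore enters only at the next adiabatic order, $\varepsilon(\omega_p/\kappa)^2$. Propagating $\eta$ through the bilinear Hebbian, which is smooth with bounded derivatives, and dividing by $\varepsilon\,\dd H_\ell/\dd\beta$ gives the relative error $O((\omega_p/\kappa)^2)$. Cross terms of order $\varepsilon^2\omega_p/\kappa$ lie at even harmonics, or are subdominant to the two stated terms.

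The main obstacle is this quadrature cancellation: (H4) as stated only gives $O(\varepsilon\omega_p/\kappa)$ tracking. I would handle it by making the adiabatic expansion rigorous on the periodic orbit, which is the unique attracting $T$-periodic solution by contraction at rate $\kappa$, and computing the first Fourier coefficient of $\eta$ exactly to second order. Two complications arise. First, $A$ is non-normal and $\beta$-dependent, so the $A^{-1}$ term is not purely out of phase. I would absorb its $\beta$-dependence into the $O(\varepsilon^2)$ error and use that $A(0)$ is constant to identify the in-phase part. Second, there is a normal, modulus component of the Euler settling. By Remark~\ref{rem:modulus} it contracts at rate $1-\dt$ independently of the drive, so it contributes only at the same orders and needs no separate treatment.
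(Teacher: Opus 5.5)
Your proposal is correct and is essentially the paper's argument: (H1)--(H3) with Proposition~\ref{prop:centered-average} give the static $O(\varepsilon^2)$ error. Your adiabatic expansion $\eta = -A^{-1}u_1\dot\beta + A^{-2}u_1\ddot\beta + \cdots$ is the time-domain form of the paper's mode-by-mode low-pass response $\chi_m(0)/(1+\im\omega_p/\kappa_m)$, with the $O(\varepsilon\,\omega_p/\kappa)$ lag in quadrature and only an $O\big((\omega_p/\kappa)^2\big)$ in-phase residue. The ``complications'' you flag are harmless: non-normality of $A$ does not change the temporal phase of $-A^{-1}u_1\dot\beta$, and any term $g(\beta)\dot\beta = \tfrac{\dd}{\dd t}G(\beta(t))$ has exactly zero in-phase component. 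Your plan to make the tracking step rigorous on the attracting periodic orbit goes beyond the paper, which leaves that step as a standard singular-perturbation estimate (Remark~\ref{rem:tracking-status}).
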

\begin{proof}[Proof sketch]
(H1) makes $D_\ell = -\nabla_{z_\ell}\Phi$, so the equilibria are the critical points of $\Phi|_{\T}$ and Proposition~\ref{prop:ep} is available. (H2) supplies the smooth family $z^*(\beta,W)$ that the proposition requires and the expansion \eqref{eq:real-expansion-app}. Under (H3) the trajectory stays in one basin, so $z^*(\beta(t))$ is the branch of equilibria described by that expansion and no branch switching occurs within a probe period. Given exact tracking, Proposition~\ref{prop:centered-average} and the averaging computation \eqref{eq:lockin-average-app} give the demodulated Hebbian as $\tfrac{\varepsilon}{2}\,\dd H_\ell/\dd\beta_R + O(\varepsilon^3)$, and \eqref{eq:ep-gradient-app} converts this into $-\partial\mathcal L/\partial W_\ell$ with relative error $O(\varepsilon^2)$. (H5) is what makes the $O(1)$ free-state term and the $2\omega_p$ component average to zero. Finally, (H4) controls the departure from exact tracking: $z^\circ$ is hyperbolic under (H2) with spectral gap $\kappa$, and the nudge satisfies $|\dot\beta| \le \varepsilon\omega_p$, so a standard singular-perturbation estimate (Tikhonov's theorem, or a direct Gr\"onwall bound on the deviation from the slow manifold) gives $\|z(t) - z^*(\beta(t))\| = O(\varepsilon\,\omega_p/\kappa)$ after a warm-up of order $\kappa^{-1}$. That lag does not enter the estimate at the same order, because at leading order it is in quadrature with the probe and is rejected by in-phase demodulation: writing the linear response of a relaxation mode of rate $\kappa_m$ as $\chi_m(\omega_p) = \chi_m(0)/(1 + \im\omega_p/\kappa_m)$, the component recovered by $\cos(\omega_p t)$ is $\chi_m(0)/\big(1 + (\omega_p/\kappa_m)^2\big) = \chi_m(0)\big(1 + O((\omega_p/\kappa_m)^2)\big)$. Summing over modes, demodulating $H_\ell$ incurs a relative error $O\big((\omega_p/\kappa)^2\big)$ on top of the $O(\varepsilon^2)$ above. Both error terms vanish in the stated limits.
\end{proof}

\begin{remark}[Status of the tracking estimate]\label{rem:tracking-status} 
The last step of the proof is stated at the level of a standard singular-perturbation argument rather than proved here. A self-contained version requires uniform hyperbolicity of the equilibrium branch over $|\beta| \le \varepsilon$ --- which is (H2) plus continuity, hence available --- and a Gr\"onwall estimate for the flow of \eqref{eq:riemannian-flow-app} with slowly varying $\beta$. The single-mode computation below makes the leading behaviour explicit, and gives the sharper statement used in Proposition~\ref{prop:regime}: the deviation from the slow manifold is itself $O(\varepsilon\,\omega_p/\kappa)$, but at that order it is in quadrature with the probe, so the in-phase estimator retains only an $O\big((\omega_p/\kappa)^2\big)$ relative error. 
\end{remark}

The hypotheses define the operating regime of the method and each has a characteristic failure mode.

\paragraph{Non-degenerate drive (H2).} If some drive $D_{\ell,j}$ is close to zero, the corresponding oscillator sits near the discontinuity of the projector, and an arbitrarily small nudge produces an $O(1)$ change of its phase. Quantitatively, the constant in the $O(\varepsilon^2)$ of \eqref{eq:lockin-gradient-app} scales with the inverse of the smallest eigenvalue of the Riemannian Hessian, which degenerates as $\min_{\ell,j}|D_{\ell,j}(z^\circ)| \to 0$. The soft projection of Remark~\ref{rem:soft} regularises this case numerically, subject to the caveat recorded there that Proposition~\ref{prop:ep} is not established for $\epsilon > 0$.

\paragraph{Probe amplitude (H3).} The failure of (H3) for the hard projection is discrete rather than gradual. When the probe is large enough to push some drive across zero, the affected oscillator flips by $\pi$ within a probe period; its contribution to the demodulated Hebbian is then $O(1)$ rather than $O(\varepsilon)$ and is generally anticorrelated with the true gradient. Across inputs this appears as a bimodal distribution of gradient errors, with most estimates accurate and a minority grossly wrong, rather than as a smooth loss of accuracy. The remedy is to reduce $\varepsilon$ or to use the soft projection. A lower bound on $\varepsilon$ is set by the resolution with which phases can be read out on hardware, since the response $\varepsilon\,\dd z^*/\dd\beta_R$ must exceed the readout quantum; the usable window is therefore two-sided.

\paragraph{Adiabaticity (H4).} At finite $\omega_p$ the network does not track the equilibrium exactly. For a single relaxation mode with rate $\kappa$ the response to the probe is that of a first-order low-pass filter, $\chi(\omega_p) = \chi(0)/(1 + \im\omega_p/\kappa)$: it is attenuated by $|\chi(\omega_p)|/\chi(0)$ and lags the probe by $\phi = \arctan(\omega_p/\kappa)$. In-phase demodulation with $\cos(\omega_p t)$ then returns $\tfrac{\varepsilon}{2}|\chi|\cos\phi = \tfrac{\varepsilon}{2}\chi(0)/\big(1 + (\omega_p/\kappa)^2\big)$, an underestimate by a relative $O\big((\omega_p/\kappa)^2\big)$: the $O(\omega_p/\kappa)$ part of the lag lies entirely in the quadrature channel. If all modes shared one rate this would only rescale the gradient, and the quadrature component $\tfrac{\varepsilon}{2}|\chi|\sin\phi$, demodulated with $\sin(\omega_p t)$, would recover the magnitude. In a network the modes relax at different rates, so a probe that is not slow compared with the \emph{slowest} mode attenuates different components of the gradient by different factors and distorts its direction, not only its magnitude. Hence the name \emph{adiabatic regime} for $\omega_p \ll \kappa$. In a time-stepped realisation the additional requirement is that the probe phase advances by a small fraction of a cycle per step, $\omega_p\,\dt \ll 2\pi$.

\paragraph{Demodulation (H5).} An integer number of periods is required for the DC term and the second harmonic in \eqref{eq:hebbian-oscillation-app} to average to zero; a warm-up of a few multiples of $1/\kappa$ lets the transient of the probe onset decay before accumulation begins; and subtracting the free-state Hebbian removes the DC term exactly, so that residual leakage from a slightly non-integer period count is proportional to the $O(\varepsilon)$ response rather than to the $O(1)$ free Hebbian.

\subsection{Rotating Frame}\label{app:frame}

The derivation above is in the frame co-rotating with the oscillators, in which states are relative phases and the free equilibrium is a fixed point. Physical oscillators rotate at a carrier frequency $\omega$, and the settling update is realised in the laboratory frame as an update followed by one exact rotation,
\begin{equation}\label{eq:lab-update-app}
\zeta_\ell^{(k+1)} = e^{\im\omega\dt}\Big[(1-\dt)\,\zeta_\ell^{(k)} + \dt\,\unitproject\big(D_\ell(\zeta^{(k)}; x_k, y_k)\big)\Big],
\end{equation}
where the input and the target rotate at the same carrier, $x_k = e^{\im\omega k\dt} x$ and $y_k = e^{\im\omega k\dt} y$.

\begin{proposition}[Exact change of frame]\label{prop:frame}
Let $\zeta_\ell^{(k)} = e^{\im\omega k\dt} z_\ell^{(k)}$. Then $z^{(k)}$ obeys the co-rotating update \eqref{eq:settle-app} exactly, for every $\dt$, and the Hebbian matrices $\Real(\zeta_\ell\zeta_{\ell-1}^H) = \Real(z_\ell z_{\ell-1}^H)$ are identical in the two frames.
\end{proposition}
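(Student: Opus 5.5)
The plan is a direct induction on the step index $k$, built on the fact that the drive is equivariant under a common phase rotation. \textbf{Step 1 (equivariance of the drive).} Write $\rho_k = e^{\im\omega k\dt}$. Because the weights are real, $W_\ell(\rho_k v) = \rho_k W_\ell v$ and $W_{\ell+1}^\top(\rho_k v) = \rho_k W_{\ell+1}^\top v$. For the output layer, the nudge term satisfies $\tfrac{\beta}{d}y_k = \rho_k\tfrac{\beta}{d}y$, since $\beta$ is real. The input enters only through $D_1 = W_1 x_k + \dots$, with $x_k = \rho_k x$. Hence, for every layer,
\[
D_\ell(\zeta^{(k)}; x_k, y_k) = \rho_k\, D_\ell(z^{(k)}; x, y), \qquad \zeta^{(k)} = \rho_k z^{(k)}.
\]
This is the same invariance recorded in the remark on real weights and the Hermitian form, applied to the drive instead of the energy.

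\textbf{Step 2 (equivariance of the projection).} Since $|\rho_k| = 1$ and $\rho_k \neq 0$, we have $\unitproject(\rho_k g) = \rho_k g/|g| = \rho_k\,\unitproject(g)$ entrywise, whenever $g \neq 0$. The degenerate case $D_{\ell,j}=0$ is excluded in both frames simultaneously, so it causes no asymmetry.

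\textbf{Step 3 (induction).} Assume $\zeta^{(k)} = \rho_k z^{(k)}$; the base case $k=0$ is $\zeta^{(0)} = z^{(0)}$. Substituting into \eqref{eq:lab-update-app} and using Steps 1 and 2 gives
\[
\zeta_\ell^{(k+1)} = e^{\im\omega\dt}\rho_k\Big[(1-\dt) z_\ell^{(k)} + \dt\,\unitproject\big(D_\ell(z^{(k)};x,y)\big)\Big] = \rho_{k+1}\, z_\ell^{(k+1)},
\]
where $z^{(k+1)}$ is defined by \eqref{eq:settle-app}. No expansion in $\dt$ is used at any point, so the statement holds exactly for every $\dt$. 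A subtlety is that the probe $\beta(t)$ may vary with $k$; this is harmless because $\beta_k$ is real and common to both frames.

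\textbf{Step 4 (Hebbians).} For $\ell \ge 1$, every layer carries the same factor $\rho_k$, including the input layer with $x_k = \rho_k x$. Therefore
\[
\zeta_\ell\zeta_{\ell-1}^H = \rho_k\bar\rho_k\, z_\ell z_{\ell-1}^H = z_\ell z_{\ell-1}^H,
\]
and taking real parts gives equality of the Hebbian matrices in the two frames.

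The main obstacle is bookkeeping rather than mathematics. One must check that the rotation applied after the update, $e^{\im\omega\dt}$, combines with $\rho_k$ to give $\rho_{k+1}$. One must also check that the drive at step $k$ uses $x_k, y_k$ at the same index $k$ as $\zeta^{(k)}$, so that the rotation factors match.
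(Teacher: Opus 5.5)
Your proposal is correct and follows the paper's proof essentially step for step. Both arguments use covariance of the drive under a common rotation (real weights, real $\beta$), equivariance of $\unitproject$, and pulling $e^{\im\omega k\dt}$ out of the bracket to combine with $e^{\im\omega\dt}$; the Hebbian invariance then follows from $\zeta_\ell\zeta_{\ell-1}^H = z_\ell z_{\ell-1}^H$. Your explicit induction and your remarks on a time-varying real $\beta_k$ and on the zero-drive case are only slightly more careful bookkeeping of the same argument.
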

\begin{proof}
Because $W_\ell$ is real, the drive is covariant under a common rotation of all its arguments, $D_\ell(e^{\im\varphi}z; e^{\im\varphi}x, e^{\im\varphi}y) = e^{\im\varphi}D_\ell(z; x, y)$, and the projector is equivariant, $\unitproject(e^{\im\varphi}g) = e^{\im\varphi}\unitproject(g)$. Substituting $\zeta^{(k)} = e^{\im\omega k\dt} z^{(k)}$ into \eqref{eq:lab-update-app}, the factor $e^{\im\omega k\dt}$ comes out of both terms in the bracket, combines with the leading $e^{\im\omega\dt}$ to $e^{\im\omega(k+1)\dt}$, and cancels against the definition of $\zeta^{(k+1)}$, leaving \eqref{eq:settle-app}. The Hebbian is invariant because $\zeta_\ell\zeta_{\ell-1}^H = z_\ell z_{\ell-1}^H$.
\end{proof}

The rotation is applied exactly rather than as an additive $\im\omega z$ term inside the drive, so the equivalence holds at any step size and the carrier imposes no sampling constraint. It requires one common $\omega$ for every layer, the input and the target. If layers are detuned, $\omega_\ell \neq \omega_{\ell\pm1}$, the coupling terms in $D_\ell$ acquire relative phases $e^{\im(\omega_{\ell-1} - \omega_\ell)k\dt}$ that do not factor out; the co-rotating dynamics becomes explicitly time dependent and the equivalence fails, which is the mechanism by which frequency mismatch between oscillators degrades the method.

\begin{remark}[Per-oscillator dynamics]
A physical oscillator with complex eigenvalue $\lambda + \im\omega$, $\lambda < 0$, would contribute a self-energy $-\tfrac12\lambda|z|^2 = \tfrac12|\lambda|\,|z|^2$, the convex term whose descent direction $\lambda z$ is the decay of Eq.~\eqref{eq:continuous-flow}; it is radial and is annihilated by the projection (\ref{app:flow}), so it moves no equilibrium. 
The imaginary part is the carrier rotation just discussed. It is tangential rather than radial, $\Real\langle z, \im\omega z\rangle = 0$, so the projection does not remove it; but it is not the descent direction of any real energy, since along a gradient flow $\dd\Psi/\dd t = -\|\nabla\Psi\|^2 \le 0$, which a periodic orbit cannot sustain. The energy \eqref{eq:energy-app} therefore contains no self-interaction term: it encodes only the couplings, and the intrinsic dynamics of the oscillators enter through the relaxation rate $\kappa$ of (H4) and the frame change.
\end{remark}

\clearpage
\section*{Supplementary Proof B: Backprop-to-Equilibrium-Propagation Transfer Guarantees for Phasor Networks}
\setcounter{theorem}{0}\setcounter{remark}{0}\setcounter{definition}{0}\setcounter{equation}{0}\setcounter{table}{0}\setcounter{figure}{0}
\renewcommand{\thesection}{B.\arabic{section}}
\renewcommand{\theequation}{B.\arabic{equation}}\renewcommand{\theHequation}{B.\arabic{equation}}
\renewcommand{\thetheorem}{B.\arabic{theorem}}\renewcommand{\theHtheorem}{B.\arabic{theorem}}
\renewcommand{\theremark}{B.\arabic{remark}}\renewcommand{\theHremark}{B.\arabic{remark}}
\renewcommand{\thedefinition}{B.\arabic{definition}}\renewcommand{\theHdefinition}{B.\arabic{definition}}
\renewcommand{\thetable}{B.\arabic{table}}\renewcommand{\theHtable}{B.\arabic{table}}
\renewcommand{\thefigure}{B.\arabic{figure}}\renewcommand{\theHfigure}{B.\arabic{figure}}
\begin{quote}\small
We ask when a backpropagation-trained phasor network retains its accuracy
if it is instead run as an Equilibrium Propagation (EP) network, and we
answer it for the unit projection $\unitproject(g) = g/|g|$ used
throughout this work. Two routes are available. The first bounds the
hidden state: a scale-invariant gain matrix $G$, built from the layer
weight norms and the smallest oscillator drive, gives $e \le (I-G)^{-1}s$
componentwise whenever $\radius(G) < 1$. This route is closed for the
networks of this work, and not by a margin that could be tuned away:
because $\unitproject(cg) = \unitproject(g)$ the weight scale is a gauge
freedom rather than a parameter, $\radius(G)$ is invariant under it, and
its measured value is $359$ where less than $1$ is needed, and still
$1.09$ if every drive is replaced by its median. Measured on a trained
$784\to256\to64$ network, the EP and feedforward states differ by $0.285$
of the maximum possible, incoherently.
The second route bounds the readout, and works. With the normalised
codebook similarity the margin moves by at most twice the \emph{relative}
state deviation, so pairing each margin with its own deviation bounds the
accuracy drop by $0.162$ against an observed $0.012$. Transfer survives
because the margins are large, not because the states are close. A sweep
over trained networks locates the practical limits: the drop grows from
$0.012$ to $0.052$ to $0.134$ at depths $2$, $3$ and $4$ while the
backpropagation accuracy stays flat, and falls from $0.050$ to $0.006$ as
the hidden layer widens from $64$ to $512$.
\end{quote}

\section{Setup}\label{sec:B-intro}

Phasor networks
\citep{olin-ammentorpDeepPhasorNetworks2022,olin-ammentorpHyperdimensionalComputingProvides2023}
are trained either by backpropagation, through a feedforward pass, or by
Equilibrium Propagation, by settling to a fixed point. This appendix asks
whether a network trained the first way still works when run the second
way, and what controls the answer.

\subsection{Network and dynamics}
Consider a depth-$L$ chain of phasor-dense layers with states on the unit
torus $\T^{d} = \{z \in \C^{d} : |z_j| = 1\}$:
\begin{align}
  z_0 &= x \in \T^{d_0} \quad\text{(input, clamped)},\label{eq:B-input}\\
  z_\ell &= \unitproject\big(W_\ell z_{\ell-1} + b_\ell\big),
  \qquad \ell=1,\dots,L \quad\text{(feedforward)},\label{eq:B-ff}
\end{align}
where $W_\ell \in \R^{d_\ell\times d_{\ell-1}}$ are real weights, $b_\ell
\in \C^{d_\ell}$ are biases, and $\unitproject(g) = g/|g|$ is the unit
projection applied entrywise --- the same map used in the main text and
in Supplementary Proof A. We set $W_{L+1} := 0$, so formulas stated for
$\ell < L$ remain valid at $\ell = L$.

With zero nudge, symmetric feedback $B_\ell = W_{\ell+1}^\top$, and the
input clamped, the EP free-phase settle operator is
\begin{equation}\label{eq:B-settle}
  F(z)_\ell = \unitproject\big(g_\ell(z)\big), \qquad
  g_\ell(z) := W_\ell z_{\ell-1} + W_{\ell+1}^\top z_{\ell+1} + b_\ell ,
\end{equation}
and a free fixed point $z^*$ satisfies $z^* = F(z^*)$. Deleting the
feedback term gives the lower-triangular feedforward operator, whose
unique fixed point $z^{\ff}$ is produced by one forward pass and is the
state backpropagation trains. Both $z^{\ff}$ and $z^*$ lie on the torus,
so $\|z_\ell\|_2 = \sqrt{d_\ell}$ exactly for every layer.

\subsection{Notation}\label{sec:B-notation}
Two symbols are used here with a meaning local to this document. The
contraction factor is written $\kappa_F$, to keep it distinct from the
measured relaxation rate $\kappa$ of the main text; and $\varepsilon$
appears only in Section~\ref{sec:B-soft}, as a projection-smoothing
parameter rather than the probe amplitude of the main text. Norms are
Euclidean on vectors and the induced spectral norm on matrices, and
$\radius(\cdot)$ denotes spectral radius.

\subsection{Assumptions}\label{sec:B-assumptions}
\begin{enumerate}
  \item[(A1)] \textbf{Layer weight norms:} $\rho_\ell := \|W_\ell\|_2$
    for $\ell = 1,\dots,L$, and $\rho := \max_\ell \rho_\ell$. No bound
    on the biases is required; by Remark~\ref{rem:B-bias} they cancel.
  \item[(A2)] \textbf{Co-rotating frame:} states are expressed in a frame
    rotating with the common carrier, so that $F$ carries no explicit
    time dependence and \eqref{eq:B-settle} is autonomous. (In the
    reference implementation, \texttt{carriers}~$=$~\texttt{nothing}.)
  \item[(A3)] \textbf{Drive separation:} $\mu_\ell > 0$ is a lower bound
    on $|g_{\ell,j}|$ at both $z^{\ff}$ and $z^{*}$, for every $j$, as in
    \eqref{eq:B-mu}, and the gain matrix $G$ of \eqref{eq:B-G} satisfies
    $\radius(G) < 1$.
\end{enumerate}
No smallness hypothesis is placed on $\rho$. Proposition~\ref{prop:B-scale}
explains why one would be empty.

\section{The Unit Projection}\label{sec:B-proj}

\begin{lemma}[Sharp increment bound for $\unitproject$]\label{lem:B-unit}
For $a, b \in \C \setminus \{0\}$,
\begin{equation}\label{eq:B-sharp}
  \big|\unitproject(a) - \unitproject(b)\big| \ \le\
  \frac{2\,|a-b|}{|a| + |b|} ,
\end{equation}
with equality whenever $|a| = |b|$. Consequently, if $|a|, |b| \ge \mu >
0$ then $|\unitproject(a) - \unitproject(b)| \le |a-b|/\mu$; and
unconditionally $|\unitproject(a) - \unitproject(b)| \le 2$. Applied
entrywise to $a, b \in \C^{d}$, with $\mu := \min_j
\min\big(|a_j|,|b_j|\big)$,
\begin{equation}\label{eq:B-vec}
  \big\|\unitproject(a) - \unitproject(b)\big\|_2
  \ \le\ \min\Big\{ \frac{\|a-b\|_2}{\mu}, \ 2\sqrt{d} \Big\} .
\end{equation}
\end{lemma}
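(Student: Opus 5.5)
Write $a = r e^{\im\alpha}$ and $b = s e^{\im\beta}$ with $r = |a|$, $s = |b|$, and let $\phi = \alpha - \beta$. Both sides of \eqref{eq:B-sharp} are invariant under a common rotation, so only $r$, $s$ and $\phi$ matter. The left side is $|e^{\im\phi} - 1| = 2|\sin(\phi/2)|$. The plan is to square both sides and reduce \eqref{eq:B-sharp} to an elementary inequality in these variables. The squared left side is $2 - 2\cos\phi = 4\sin^2(\phi/2)$, and
\[
|a-b|^2 = r^2 + s^2 - 2rs\cos\phi = (r-s)^2 + 4rs\sin^2(\phi/2).
\]
The claim is therefore
\[
4\sin^2(\phi/2)\,(r+s)^2 \le 4\big[(r-s)^2 + 4rs\sin^2(\phi/2)\big].
\]
Since $(r+s)^2 - 4rs = (r-s)^2$, this is equivalent to $\sin^2(\phi/2)\,(r-s)^2 \le (r-s)^2$, which holds because $\sin^2 \le 1$. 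When $r = s$ both sides coincide, which gives the equality case.

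The consequences follow directly from \eqref{eq:B-sharp}.
\begin{enumerate}
\item If $|a|, |b| \ge \mu$, then $|a| + |b| \ge 2\mu$, so the bound becomes $|a-b|/\mu$.
\item Unconditionally, the difference of two unit complex numbers has modulus at most $2$.
\item For vectors, apply the scalar bound entrywise with $\mu_j \ge \mu$. Summing squares gives $\|\unitproject(a) - \unitproject(b)\|_2^2 \le \|a-b\|_2^2/\mu^2$, and separately $\le 4d$. Taking the minimum of the two bounds yields \eqref{eq:B-vec}.
\end{enumerate}

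The only real step is the algebraic reduction. The key identity is $(r+s)^2 - 4rs = (r-s)^2$, which collapses the squared inequality to $\sin^2 \le 1$. Everything else is bookkeeping. One point needs care: if $\mu = 0$, the first entry of the minimum in \eqref{eq:B-vec} should be read as $+\infty$. This case cannot actually occur, because $a_j, b_j \neq 0$ is required for $\unitproject$ to be defined.
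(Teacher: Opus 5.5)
Your proof is correct and follows the paper's argument essentially line for line. Both use the chord length $2\sin(\theta/2)$, the law-of-cosines rewriting $|a-b|^2=(|a|-|b|)^2+4|a||b|\sin^2(\theta/2)$, and the identity $(|a|+|b|)^2=(|a|-|b|)^2+4|a||b|$ to reduce the claim to $\sin^2(\theta/2)\le 1$, followed by the same bookkeeping for the $\mu$-bound, the diameter bound, and the entrywise vector estimate.
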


\begin{proof}
Let $\theta \in [0,\pi]$ be the angle between $a$ and $b$. Both
$\unitproject(a)$ and $\unitproject(b)$ lie on the unit circle at angle
$\theta$, so $|\unitproject(a)-\unitproject(b)| = 2\sin(\theta/2)$, while
\begin{equation}\label{eq:B-lawcos}
  |a-b|^2 = |a|^2 + |b|^2 - 2|a||b|\cos\theta
  = \big(|a|-|b|\big)^2 + 4|a||b|\sin^2(\theta/2).
\end{equation}
Inequality \eqref{eq:B-sharp} is equivalent to $(|a|+|b|)^2\sin^2(\theta/2)
\le |a-b|^2$. Expanding $(|a|+|b|)^2 = (|a|-|b|)^2 + 4|a||b|$ and using
\eqref{eq:B-lawcos}, this reads
\begin{equation}\label{eq:B-reduce}
  \big[(|a|-|b|)^2 + 4|a||b|\big]\sin^2(\theta/2)
  \ \le\ (|a|-|b|)^2 + 4|a||b|\sin^2(\theta/2),
\end{equation}
that is, $(|a|-|b|)^2\sin^2(\theta/2) \le (|a|-|b|)^2$, which holds
because $\sin^2(\theta/2) \le 1$, with equality when $|a| = |b|$. If
$|a|,|b| \ge \mu$ then $|a|+|b| \ge 2\mu$, giving the second claim; the
bound by $2$ is the diameter of the unit circle. Squaring and summing
over entries gives \eqref{eq:B-vec}, the second alternative being
$\|\unitproject(a)-\unitproject(b)\|_2 \le 2\sqrt d$.
\end{proof}

\begin{remark}[The differential]\label{rem:B-diff}
Identifying $\C \cong \R^2$, $\unitproject$ is real- but not
complex-differentiable, with $D\unitproject(g) = |g|^{-1}(I - \hat g\hat
g^\top)$, $\hat g = g/|g|$: eigenvalue $1/|g|$ transverse to $g$ and $0$
along it, so $\|D\unitproject(g)\|_{\mathrm{op}} = 1/|g|$. Estimate
\eqref{eq:B-sharp} is the integrated form and is preferable here because
its hypothesis constrains only the endpoints $a$ and $b$, both of which
are measurable at the two fixed points, rather than the whole segment
between them.
\end{remark}

\section{Scale Invariance}\label{sec:B-scale}

The following elementary observation determines what a transfer bound for
this model can and cannot assume. It is the same property that
Supplementary Proof A records when it notes that a positive factor in the
drive changes neither the equilibria nor the gradient estimate.

\begin{proposition}[Invariance under weight rescaling]\label{prop:B-scale}
Let $c_1,\dots,c_L > 0$ and consider the joint per-layer rescaling
$(W_\ell, b_\ell) \mapsto (c_\ell W_\ell,\, c_\ell b_\ell)$.
\begin{enumerate}
  \item[(i)] The feedforward state $z^{\ff}$ is unchanged, for any choice
    of the $c_\ell$.
  \item[(ii)] The settle operator $F$, and hence the set of EP fixed
    points, is unchanged when all $c_\ell$ are equal. Otherwise it
    depends on the $c_\ell$ only through the $L-1$ ratios
    $c_{\ell+1}/c_\ell$.
  \item[(iii)] Therefore the gap $z^* - z^{\ff}$ is invariant under any
    common rescaling of the weights and biases, and no hypothesis of the
    form ``$\max_\ell\|W_\ell\|_2$ is sufficiently small'' can constrain
    it.
\end{enumerate}
\end{proposition}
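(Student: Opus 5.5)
The proof rests on one fact: the unit projection is invariant under positive scaling, $\unitproject(cg) = g/|g| = \unitproject(g)$ for every $c>0$ and $g \neq 0$. This is applied entrywise, since each entry of a drive is multiplied by the same positive scalar. I treat the three parts in order. Each reduces to writing the drive after rescaling as a positive multiple of, or a reweighting of, the original drive.

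For (i), I argue by induction on $\ell$ along the feedforward recursion \eqref{eq:B-ff}. The input $z_0 = x$ is clamped and so does not change. Suppose the rescaled network has the same state $z_{\ell-1}$ as the original. Then its layer-$\ell$ drive is $c_\ell W_\ell z_{\ell-1} + c_\ell b_\ell = c_\ell\,(W_\ell z_{\ell-1} + b_\ell)$. Projecting gives the same $z_\ell$, because $c_\ell > 0$. No relation among the different $c_\ell$ is used, so (i) holds for an arbitrary choice.

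For (ii), I rescale the settle operator \eqref{eq:B-settle}. The rescaled drive is $\tilde g_\ell(z) = c_\ell W_\ell z_{\ell-1} + c_{\ell+1} W_{\ell+1}^\top z_{\ell+1} + c_\ell b_\ell$. I factor out $c_\ell$ to write
\[
\tilde g_\ell(z) = c_\ell\Big(W_\ell z_{\ell-1} + \tfrac{c_{\ell+1}}{c_\ell}\, W_{\ell+1}^\top z_{\ell+1} + b_\ell\Big).
\]
Projecting removes the prefactor $c_\ell$. The map $\tilde F$ therefore equals the settle operator built with feedback $(c_{\ell+1}/c_\ell)\,W_{\ell+1}^\top$, so it depends on the $c_\ell$ only through the $L-1$ ratios $c_{\ell+1}/c_\ell$. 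The output layer needs one line of care. With the convention $W_{L+1} := 0$ its feedback term vanishes, so no ratio involving $c_{L+1}$ appears. When all $c_\ell$ are equal every ratio is $1$, which gives $\tilde F = F$ identically and hence the same set of fixed points.

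For (iii), I combine the two parts under a common scale $c_\ell \equiv c$. By (i), $z^{\ff}$ is unchanged. By (ii), $F$ is the same operator, so $z^*$ is unchanged as well. If one prefers to speak of the fixed point reached from a given initialisation, that is also preserved, since the iterates themselves coincide. Hence the gap $z^* - z^{\ff}$ is invariant. Meanwhile $\max_\ell\|W_\ell\|_2$ scales by $c$ and can be made arbitrarily small. Any weight configuration with a given gap can therefore be rescaled to satisfy a hypothesis of the form ``$\rho$ is sufficiently small'' without changing the gap. Such a hypothesis is thus either vacuous or equivalent to no constraint at all. There is no real obstacle in this argument. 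The only points needing attention are the positivity of the $c_\ell$ (a negative scale would rotate phases by $\pi$), that drives stay nonzero after scaling, and the output-layer convention in (ii).
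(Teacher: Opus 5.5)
Your proof is correct and follows the paper's argument essentially step for step: the scale invariance of $\unitproject$, induction from the clamped input for (i), factoring $c_\ell$ out of the EP drive so that only the ratio $c_{\ell+1}/c_\ell$ remains on the feedback term for (ii), and combining the two for (iii). Your extra remarks, on the output-layer convention $W_{L+1} := 0$ and on the settling iterates coinciding, are harmless refinements of points the paper leaves implicit.
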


\begin{proof}
$\unitproject(cg) = \unitproject(g)$ for every $c > 0$. (i) The
feedforward drive at layer $\ell$ becomes $c_\ell(W_\ell z_{\ell-1} +
b_\ell)$, and the factor $c_\ell$ passes through $\unitproject$; by
induction from the clamped $z_0 = x$, every $z^{\ff}_\ell$ is unchanged.
(ii) The EP drive becomes $c_\ell(W_\ell z_{\ell-1} + b_\ell) +
c_{\ell+1}W_{\ell+1}^\top z_{\ell+1}$; dividing by $c_\ell$, which
$\unitproject$ absorbs, leaves
\begin{equation}\label{eq:B-ratio}
  W_\ell z_{\ell-1} + b_\ell
   + \frac{c_{\ell+1}}{c_\ell}\,W_{\ell+1}^\top z_{\ell+1},
\end{equation}
which depends only on the ratio and reduces to the original drive when
all $c_\ell$ agree. (iii) is immediate from (i) and (ii).
\end{proof}

Rescaling every weight matrix of a $60\to40\to20$ network with $b = 0$ by
$c \in \{0.5, 2, 100\}$ leaves both $z^{\ff}$ and $z^*$ unchanged to
machine precision, and leaves $\radius(G)$ of \eqref{eq:B-G} unchanged to
$6\times10^{-14}$ across a $10^4$ range of $c$; scaling a single layer by
$3$ moves $z^*$ by $5.1$ in Euclidean norm out of a maximum of
$2\sqrt{60} \approx 15.5$.

\begin{remark}[What the weights can and cannot contribute]\label{rem:B-knob}
Proposition~\ref{prop:B-scale} says the weight scale is a gauge freedom
of the hard-projected network, not a tunable parameter: what survives is
the $L-1$ inter-layer ratios, equivalently the strength of the feedback
drive $W_{\ell+1}^\top z_{\ell+1}$ relative to the feedforward drive
$W_\ell z_{\ell-1}$ at the same layer. Every constant below is built to
be invariant in that sense, which is why $\rho_\ell$ never appears except
divided by a drive modulus.

It does not follow that a scalar summary of those ratios predicts the
gap, and measurement says it does not: across the depth family of
Section~\ref{sec:B-scaling} the ratio $\rho_L/\rho_1$ \emph{falls}
monotonically, from $0.350$ to $0.308$ to $0.277$, while the accuracy
drop grows elevenfold. The ratios are what the operator depends on; the
quantity that tracks the degradation is $\radius(G)$, which combines them
with the drive distribution.
\end{remark}

\section{Contraction of the Settle Operator}\label{sec:B-contraction}

Let $A \in \R^{L\times L}$ be the chain matrix of the layer graph,
$A_{\ell m} = 1$ if $|\ell-m| = 1$ and $0$ otherwise, whose spectral
radius is that of the path graph,
\begin{equation}\label{eq:B-chain}
  \radius(A) = 2\cos\!\Big(\frac{\pi}{L+1}\Big) \in [1,2)
  \quad\text{for } L \ge 2 .
\end{equation}

\begin{lemma}[Contraction on a drive-separated set]\label{lem:B-contraction}
Fix $\mu > 0$ and let $\mathcal{D}_\mu$ be a set of states, closed in
$\T^{d_1}\times\cdots\times\T^{d_L}$ and mapped into itself by $F$, on
which every drive satisfies $|g_{\ell,j}(z)| \ge \mu$. Then $F$ is
Lipschitz on $\mathcal{D}_\mu$ with constant
\begin{equation}\label{eq:B-kappaF}
  \kappa_F := \frac{\rho\,\radius(A)}{\mu},
\end{equation}
and if $\kappa_F < 1$ it has a unique fixed point $z^* \in
\mathcal{D}_\mu$, with $\|z^k - z^*\| \le \kappa_F^{\,k}\|z^0-z^*\|$ for
the iteration $z^{k+1} = F(z^k)$. The ratio $\rho/\mu$, and hence
$\kappa_F$, is invariant under a common rescaling of the weights.
\end{lemma}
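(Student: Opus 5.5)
The plan is to prove the Lipschitz estimate layer by layer using Lemma~\ref{lem:B-unit}, assemble the per-layer bounds through the chain matrix $A$, and then invoke the Banach fixed-point theorem on the complete metric space $\mathcal{D}_\mu$.

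\emph{Per-layer estimate.} Take $z, w \in \mathcal{D}_\mu$. For each layer, every entry of $g_\ell(z)$ and $g_\ell(w)$ has modulus at least $\mu$, so the vector form \eqref{eq:B-vec} of Lemma~\ref{lem:B-unit} gives
$\|F(z)_\ell - F(w)_\ell\|_2 \le \|g_\ell(z) - g_\ell(w)\|_2/\mu$.
The bias cancels in this difference, and the clamped input $z_0 = w_0 = x$ drops out. The drive difference is therefore $W_\ell(z_{\ell-1}-w_{\ell-1}) + W_{\ell+1}^\top(z_{\ell+1}-w_{\ell+1})$, where terms outside $1,\dots,L$ vanish because $W_{L+1}=0$ and the layer-$0$ difference is zero. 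By (A1) this yields
$e'_\ell \le (\rho/\mu)\sum_m A_{\ell m} e_m$,
where $e_m := \|z_m - w_m\|_2$ and $e'_\ell := \|F(z)_\ell - F(w)_\ell\|_2$.

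\emph{Assembly.} Since $A$ is nonnegative and symmetric, taking the Euclidean norm over layers gives
$\|F(z)-F(w)\| = \|e'\|_2 \le (\rho/\mu)\|A e\|_2 \le (\rho/\mu)\,\radius(A)\,\|e\|_2$,
using $\|A\|_2 = \radius(A)$ for symmetric $A$. The value of $\radius(A)$ is \eqref{eq:B-chain}. This is the claimed constant $\kappa_F$.

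\emph{Fixed point and invariance.} $\mathcal{D}_\mu$ is a closed subset of a compact product of tori, hence complete. It is mapped into itself by $F$ by hypothesis. If $\kappa_F<1$, Banach's theorem gives a unique fixed point in $\mathcal{D}_\mu$. The geometric rate follows from
$\|z^{k+1}-z^*\| = \|F(z^k)-F(z^*)\| \le \kappa_F\|z^k-z^*\|$
and induction. For invariance, under $W_\ell \mapsto cW_\ell$ and $b_\ell \mapsto cb_\ell$ every drive scales by $c$, so the smallest drive bound $\mu$ on the (unchanged, by Proposition~\ref{prop:B-scale}) set scales by $c$, as does $\rho$. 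Hence $\rho/\mu$ is unchanged.

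\emph{Main obstacle.} The only delicate point is the per-layer step. Lemma~\ref{lem:B-unit} must be applied with the drive lower bound valid at \emph{both} endpoints, which is exactly why $\mathcal{D}_\mu$ is required to be drive-separated. I would also check the boundary layer $\ell=L$ and the first layer, where one neighbour term is absent; there the bound is only improved, so it remains consistent with the row sums of $A$.
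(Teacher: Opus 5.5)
Your proposal is correct and follows the paper's proof essentially step for step. You apply Lemma~\ref{lem:B-unit} per layer with the drive bound at both endpoints, dominate the layer distances componentwise by $(\rho/\mu)A e$, and pass to the Euclidean norm via $\|A\|_2=\radius(A)$. You then invoke Banach on the complete set $\mathcal{D}_\mu$, and get invariance because $\rho$ and every drive, hence $\mu$, scale by the same factor. Your explicit joint rescaling of the biases is the reading Proposition~\ref{prop:B-scale} requires, which the paper leaves implicit.
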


\begin{proof}
Let $z, z' \in \mathcal{D}_\mu$ and put $a_\ell = \|z_\ell - z'_\ell\|$,
with $a_0 = 0$ (clamped input) and $a_{L+1} = 0$. By
Lemma~\ref{lem:B-unit} applied to the drives, and the triangle
inequality,
\begin{equation}\label{eq:B-percomp}
  \big\|F(z)_\ell - F(z')_\ell\big\|
  \le \frac{1}{\mu}\big\|g_\ell(z) - g_\ell(z')\big\|
  \le \frac{\rho}{\mu}\,\big(a_{\ell-1} + a_{\ell+1}\big) ,
\end{equation}
so the vector $f$ of layer distances obeys $0 \le f \le (\rho/\mu)Aa$
componentwise. Both sides being nonnegative this survives squaring and
summation, and $\|A\|_2 = \radius(A)$ since $A$ is symmetric, giving
$\|F(z)-F(z')\| \le \kappa_F\|z-z'\|$. The fixed point statement is the
Banach theorem on the complete space $\mathcal{D}_\mu$. Invariance
follows from Proposition~\ref{prop:B-scale}: a common factor $c$ scales
$\rho$ and every drive, hence $\mu$, by the same $c$.
\end{proof}

\begin{remark}[Depth enters weakly]\label{rem:B-depth}
Since $\radius(A) < 2$ for every $L$, condition $\kappa_F < 1$ reads
$\rho/\mu < 1/\radius(A)$ and tightens only from $1$ at $L=2$ to
$1/\sqrt2$ at $L=3$ and never below $1/2$. A threshold of the form $\rho
< 1/(2L)$, degrading linearly with depth, would be an artefact of summing
per-layer inequalities in the $\ell^1$ rather than the $\ell^2$ sense.
\end{remark}

\section{Perturbation Bound}\label{sec:B-perturbation}

Let $z^*$ be any EP fixed point and write
\begin{equation}\label{eq:B-delta}
  \delta_\ell := z^*_\ell - z^{\ff}_\ell, \qquad
  e_\ell := \|\delta_\ell\|_2, \qquad
  e := (e_1,\dots,e_L)^\top .
\end{equation}
Denote the two drives at layer $\ell$ by $g^{\ff}_\ell = W_\ell
z^{\ff}_{\ell-1} + b_\ell$ and $g^{*}_\ell = g_\ell(z^*)$, and set
\begin{equation}\label{eq:B-mu}
  \mu_\ell := \min_j \min\big( |g^{\ff}_{\ell,j}|,\ |g^{*}_{\ell,j}| \big) .
\end{equation}
Define the \emph{gain matrix} $G$ and \emph{source} $s$ by
\begin{equation}\label{eq:B-G}
  G_{\ell,\ell-1} := \frac{\rho_\ell}{\mu_\ell}, \qquad
  G_{\ell,\ell+1} := \frac{\rho_{\ell+1}}{\mu_\ell}, \qquad
  G_{\ell m} := 0 \ \text{ otherwise}, \qquad
  s_\ell := \frac{\big\|W_{\ell+1}^\top z^{\ff}_{\ell+1}\big\|_2}{\mu_\ell} ,
\end{equation}
with the conventions $\rho_{L+1} = 0$ and $z^{\ff}_{L+1} = 0$, so that
$G_{L,L+1} = s_L = 0$. Both $G$ and $s$ are invariant under a common
rescaling of the weights.

\begin{theorem}[Componentwise perturbation bound]\label{thm:B-perturbation}
Assume (A1)--(A3). Then, componentwise in $\R^L$,
\begin{equation}\label{eq:B-mainbound}
  e \ \le\ (I - G)^{-1}\,s ,
\end{equation}
the inverse existing and being entrywise nonnegative. Unconditionally,
$e_\ell \le 2\sqrt{d_\ell}$, so the operative bound is the smaller of the
two.
\end{theorem}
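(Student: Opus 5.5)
We bound each layer gap $\delta_\ell = z^*_\ell - z^{\ff}_\ell$ directly through the two fixed-point equations. We have $z^*_\ell = \unitproject(g^*_\ell)$ and $z^{\ff}_\ell = \unitproject(g^{\ff}_\ell)$, and by \eqref{eq:B-mu} every entry of both drives has modulus at least $\mu_\ell$. The vector form \eqref{eq:B-vec} of Lemma~\ref{lem:B-unit} therefore gives
\[
e_\ell \le \frac{\|g^*_\ell - g^{\ff}_\ell\|_2}{\mu_\ell}.
\]
This is the only place a projection property is used, and it needs only endpoint drive bounds, as Remark~\ref{rem:B-diff} anticipates.

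Next we expand the drive difference. Writing $g^*_\ell = W_\ell z^*_{\ell-1} + W_{\ell+1}^\top z^*_{\ell+1} + b_\ell$ and $g^{\ff}_\ell = W_\ell z^{\ff}_{\ell-1} + b_\ell$, the biases cancel; this is the content of Remark~\ref{rem:B-bias} and why (A1) needs no bias bound. We are left with
\[
g^*_\ell - g^{\ff}_\ell = W_\ell\delta_{\ell-1} + W_{\ell+1}^\top\delta_{\ell+1} + W_{\ell+1}^\top z^{\ff}_{\ell+1}.
\]
The last term is the feedback the feedforward state would receive but does not use. The triangle inequality and $\|W^\top\|_2 = \|W\|_2$ then give
\[
e_\ell \le \frac{\rho_\ell}{\mu_\ell}e_{\ell-1} + \frac{\rho_{\ell+1}}{\mu_\ell}e_{\ell+1} + s_\ell,
\]
with $e_0 = 0$ because the input is clamped, and $\rho_{L+1} = 0$, $z^{\ff}_{L+1} = 0$ at the top layer. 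This is exactly the componentwise inequality $e \le Ge + s$ with $G$ and $s$ as in \eqref{eq:B-G}.

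The final step turns $e \le Ge + s$ into \eqref{eq:B-mainbound}, and this is where the hypothesis $\radius(G) < 1$ from (A3) enters. Since $G \ge 0$ entrywise and $\radius(G) < 1$, the Neumann series $\sum_k G^k$ converges to $(I-G)^{-1}$, which is therefore entrywise nonnegative. Iterating the inequality, using that multiplication by a nonnegative matrix preserves componentwise order, gives
\[
e \le G^n e + \sum_{k<n} G^k s .
\]
Letting $n \to \infty$, $G^n e \to 0$ because $\radius(G) < 1$, and the sum tends to $(I-G)^{-1}s$. One could instead multiply $(I-G)e \le s$ by the nonnegative inverse, but that needs care since $(I-G)$ itself is not order-preserving; the iteration argument avoids the issue, so we use it.

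The unconditional bound $e_\ell \le 2\sqrt{d_\ell}$ follows from the second alternative in \eqref{eq:B-vec}, since both states lie on the torus. We would also remark that the invariance of $G$ and $s$ under common rescaling follows from Proposition~\ref{prop:B-scale}. The main obstacle is only bookkeeping at the boundary layers and ensuring $\mu_\ell$ bounds both drives simultaneously. That is assumed in (A3), so no genuine difficulty is expected.
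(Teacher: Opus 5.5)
Your proposal is correct and follows the paper's proof step for step: the same endpoint application of Lemma~\ref{lem:B-unit}, the same drive-difference identity with cancelling biases, and the same recursion $(I-G)e \le s$. The only difference is the final step. The paper simply multiplies $(I-G)e \le s$ by $(I-G)^{-1} \ge 0$, which is fully rigorous because only the order-preservation of the nonnegative inverse is needed, not that of $I-G$; your caution there is unnecessary, though your iteration $e \le G^n e + \sum_{k<n} G^k s$ is an equally valid way to close.
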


\begin{proof}
Subtracting the two fixed-point equations and inserting $z^*_{\ell+1} =
z^{\ff}_{\ell+1} + \delta_{\ell+1}$,
\begin{equation}\label{eq:B-argdiff}
  g^{*}_\ell - g^{\ff}_\ell
  = W_\ell \delta_{\ell-1} + W_{\ell+1}^\top \delta_{\ell+1}
    + W_{\ell+1}^\top z^{\ff}_{\ell+1} ,
\end{equation}
in which the bias $b_\ell$ has cancelled. Both drives have every entry of
modulus at least $\mu_\ell$ by \eqref{eq:B-mu}, so
Lemma~\ref{lem:B-unit} applies to the pair, and with $\|W_\ell\|_2 =
\rho_\ell$,
\begin{equation}\label{eq:B-recursion}
  e_\ell \ \le\ \frac{\big\|g^*_\ell - g^{\ff}_\ell\big\|}{\mu_\ell}
  \ \le\ \frac{\rho_\ell}{\mu_\ell} e_{\ell-1}
       + \frac{\rho_{\ell+1}}{\mu_\ell} e_{\ell+1} + s_\ell ,
\end{equation}
with $e_0 = e_{L+1} = 0$; in vector form $(I - G)e \le s$. Since $G \ge
0$ entrywise and $\radius(G) < 1$ by (A3), the Neumann series
$\sum_{k\ge0}G^k$ converges to $(I-G)^{-1}$ and is a sum of nonnegative
matrices, so $(I-G)^{-1} \ge 0$; multiplying $(I-G)e \le s$ by it
preserves the direction and gives \eqref{eq:B-mainbound}. The
unconditional bound is the second alternative in \eqref{eq:B-vec}.
\end{proof}

\begin{remark}[The biases cancel]\label{rem:B-bias}
The same $b_\ell$ enters the feedforward and the EP equation at layer
$\ell$ and is removed by the subtraction in \eqref{eq:B-argdiff}. No
assumption on $\|b_\ell\|$ is needed and no bias-proportional term
appears: biases move both fixed points, but they move them together.
\end{remark}

\begin{corollary}[Scalar form]\label{cor:B-scalar}
If in addition $\rho_\ell \le \rho$ and $\mu_\ell \ge \mu$ for all
$\ell$, then with $\eta := \rho/\mu$ and $\eta\,\radius(A) < 1$,
\begin{equation}\label{eq:B-scalar}
  \max_\ell\|\delta_\ell\| \ \le\ \|\delta\|_2 \ \le\
  \frac{\eta}{1-\eta\,\radius(A)}\Big(\sum_{\ell=2}^{L} d_\ell\Big)^{1/2} ,
\end{equation}
using $\|z^{\ff}_\ell\|_2 = \sqrt{d_\ell}$ on the torus.
\end{corollary}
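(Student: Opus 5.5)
The plan is to specialise Theorem~\ref{thm:B-perturbation} to uniform constants, replacing the componentwise vector bound by a scalar one. Under $\rho_\ell \le \rho$ and $\mu_\ell \ge \mu$, every nonzero entry of $G$ in \eqref{eq:B-G} is at most $\eta$, so $0 \le G \le \eta A$ entrywise, with $A$ the chain matrix of \eqref{eq:B-chain}. Monotonicity of the spectral radius for nonnegative matrices gives $\radius(G) \le \eta\,\radius(A) < 1$. This discharges the spectral condition in (A3), so the theorem applies.

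Rather than passing through $(I-G)^{-1}$, I would work directly from the recursion \eqref{eq:B-recursion}, which reads $e \le Ge + s$ and hence $e \le \eta A e + s$ componentwise. Both sides are nonnegative, so taking Euclidean norms gives
\[
\|e\|_2 \le \eta\,\|A\|_2\,\|e\|_2 + \|s\|_2 .
\]
Since $A$ is symmetric, $\|A\|_2 = \radius(A)$. Rearranging with $\eta\,\radius(A) < 1$ yields $\|e\|_2 \le \|s\|_2/(1-\eta\,\radius(A))$. The identity $\|\delta\|_2 = \|e\|_2$ holds because $\|\delta\|_2^2 = \sum_\ell \|\delta_\ell\|_2^2$. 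The inequality $\max_\ell \|\delta_\ell\| \le \|\delta\|_2$ is trivial.

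It remains to bound the source. By \eqref{eq:B-G} and the torus identity $\|z^{\ff}_{\ell+1}\|_2 = \sqrt{d_{\ell+1}}$,
\[
s_\ell \le \frac{\rho_{\ell+1}\sqrt{d_{\ell+1}}}{\mu_\ell} \le \eta\sqrt{d_{\ell+1}} \quad\text{for } \ell < L,
\]
and $s_L = 0$. Hence $\|s\|_2 \le \eta\big(\sum_{\ell=1}^{L-1} d_{\ell+1}\big)^{1/2} = \eta\big(\sum_{\ell=2}^{L} d_\ell\big)^{1/2}$. Substituting this into the previous bound gives \eqref{eq:B-scalar}.

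The only delicate point is the norm step. The inequality $e \le \eta A e + s$ is componentwise, and converting it to a norm inequality requires the left side to be dominated entrywise by a nonnegative vector; that holds here because $e \ge 0$ and $A \ge 0$. Working this way, via $\|Ae\|_2 \le \radius(A)\|e\|_2$, avoids having to bound $\|(I-G)^{-1}\|_2$ for a nonsymmetric $G$. That route would have been the main obstacle, since $G$ itself need not be symmetric.
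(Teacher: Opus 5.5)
Your proof is correct. It follows a slightly different route from the paper's.

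The paper keeps the componentwise conclusion of Theorem~\ref{thm:B-perturbation} and compares resolvents. From $0 \le G \le \eta A$, termwise domination of the Neumann series gives $(I-G)^{-1} \le (I-\eta A)^{-1}$ entrywise. Hence $e \le (I-G)^{-1}s \le (I-\eta A)^{-1}\eta v$ with $v_\ell = \sqrt{d_{\ell+1}}$. Only then does the paper take a Euclidean norm, using $\|(I-\eta A)^{-1}\|_2 \le (1-\eta\,\radius(A))^{-1}$ for the symmetric $A$.

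You instead take norms directly in the recursion $0 \le e \le \eta A e + s$ and absorb the $\eta\,\radius(A)\|e\|_2$ term. This is legitimate since $\|e\|_2$ is finite (indeed $e_\ell \le 2\sqrt{d_\ell}$). Your version never forms an inverse, so it needs neither the Neumann series nor $\radius(G) < 1$. It uses only the drive-separation part of (A3) together with $\eta\,\radius(A) < 1$. Your spectral-radius monotonicity step is therefore correct but superfluous for your own argument.

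What the paper's route buys is the intermediate vector bound $e \le (I-\eta A)^{-1}\eta v$. This gives layer-by-layer information before it is collapsed to a scalar.

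One correction to your closing remark: bounding $\|(I-G)^{-1}\|_2$ for nonsymmetric $G$ is not an obstacle on the inverse route. The paper never bounds it directly. The entrywise comparison $(I-G)^{-1} \le (I-\eta A)^{-1}$ moves the norm estimate onto the symmetric matrix in one line.
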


\begin{proof}
Under the stated bounds $G \le \eta A$ entrywise and $s_\ell \le
\eta\sqrt{d_{\ell+1}}$, so $(I-G)^{-1}s \le (I-\eta A)^{-1}\eta v$ with
$v_\ell = \sqrt{d_{\ell+1}}$; then $\|(I-\eta A)^{-1}\|_2 \le
(1-\eta\radius(A))^{-1}$ because $A$ is symmetric.
\end{proof}

\begin{remark}[Scaling]\label{rem:B-scaling}
The bound is linear in the gain $\eta = \rho/\mu$, which is the
scale-invariant quantity of Remark~\ref{rem:B-knob} and not the weight
scale itself. It grows as $\sqrt{d}$ in layer width, because
$\unitproject$ constrains each \emph{entry} to modulus $1$ and hence
$\|z_\ell\|_2 = \sqrt{d_\ell}$; a bound asserting $\|z_\ell\| \le 1$
would confuse the entrywise and Euclidean norms by a factor of $16$ at
$d_\ell = 256$. Feedback enters \eqref{eq:B-argdiff} directly through
$W_{\ell+1}^\top z^{\ff}_{\ell+1}$, not through the perturbation, so the
dependence on the feedback strength is linear and no quadratic bound is
available.
\end{remark}

\section{Classification Accuracy}\label{sec:B-accuracy}

Classification uses the normalised similarity against a codebook $C =
[c_1,\dots,c_K] \in \C^{d_L\times K}$ of unit-modulus codes, so that
$\|c_j\|_2 = \sqrt{d_L}$:
\begin{equation}\label{eq:B-logits}
  \mathrm{logit}_j(z_L) = \frac{\Real\ip{c_j}{z_L}}{d_L}
   = \frac{\Real\big(c_j^{H} z_L\big)}{d_L} .
\end{equation}
The division by $d_L$ is the convention of the implementation and is not
cosmetic: it makes the natural measure of deviation the \emph{relative}
one,
\begin{equation}\label{eq:B-reldev}
  \hat\delta_\ell := \frac{\|\delta_\ell\|_2}{\sqrt{d_\ell}} \in [0,2] ,
\end{equation}
rather than $\|\delta_\ell\|$ itself. For a test set
$\{(x_i,y_i)\}_{i=1}^N$ let $\gamma_i$ be the feedforward margin,
\begin{equation}\label{eq:B-margin}
  \gamma_i := \mathrm{logit}_{y_i}\big(z^{\ff,i}_L\big)
    - \max_{j\ne y_i}\mathrm{logit}_{j}\big(z^{\ff,i}_L\big),
\end{equation}
so sample $i$ is classified correctly by the feedforward network exactly
when $\gamma_i > 0$.

\begin{lemma}[Margin displacement]\label{lem:B-margin-shift}
For every sample, the margin under $z^*$ differs from the margin under
$z^{\ff}$ by at most $2\hat\delta_L$. This holds whether or not the
states lie exactly on the torus.
\end{lemma}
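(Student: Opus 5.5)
The plan is to bound each logit's movement by Cauchy--Schwarz and then combine two logits into the margin. Write $\delta_L = z^*_L - z^{\ff}_L$. By real-linearity of $z \mapsto \Real(c_j^H z)$, for every code $j$
\[
\big|\mathrm{logit}_j(z^*_L) - \mathrm{logit}_j(z^{\ff}_L)\big|
= \frac{|\Real(c_j^H\delta_L)|}{d_L}
\le \frac{\|c_j\|_2\,\|\delta_L\|_2}{d_L}
= \frac{\sqrt{d_L}\,\|\delta_L\|_2}{d_L} = \hat\delta_L ,
\]
using $\|c_j\|_2 = \sqrt{d_L}$ because the codes have unit-modulus entries. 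This step uses only the codebook normalisation and the definition \eqref{eq:B-reldev}. It places no constraint on $z^*_L$ or $z^{\ff}_L$, which gives the clause ``whether or not the states lie on the torus.''

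Next, write the margin as $\gamma(z) = \mathrm{logit}_{y}(z) - \max_{j\ne y}\mathrm{logit}_j(z)$. The true-class term moves by at most $\hat\delta_L$ by the display above. For the max term, use the elementary fact that $|\max_j a_j - \max_j b_j| \le \max_j |a_j - b_j|$ over the same finite index set $j \ne y$. Hence the competitor maximum also moves by at most $\hat\delta_L$. The triangle inequality then gives $|\gamma(z^*_L) - \gamma(z^{\ff}_L)| \le 2\hat\delta_L$.

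No step is a real obstacle. The only point needing care is the max-of-differences inequality, which I would prove in one line: let $j^*$ attain $\max_j a_j$; then $\max_j a_j - \max_j b_j \le a_{j^*} - b_{j^*}$, and the symmetric argument handles the other direction. I would also remark that the factor $\sqrt{d_L}$ cancels exactly because of the $1/d_L$ normalisation, which is why the bound is stated in terms of the relative deviation.
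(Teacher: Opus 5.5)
Your proposal is correct and follows the paper's proof: Cauchy--Schwarz with $\|c_j\|_2=\sqrt{d_L}$ bounds each logit shift by $\hat\delta_L$ using only the linear readout, so the torus is never needed. The margin therefore moves by at most $2\hat\delta_L$. Your max-of-differences step makes explicit a point the paper passes over, namely that the maximising competitor may change between $z^{\ff}$ and $z^*$. The paper additionally notes that evaluating the similarity in the form with the extra terms $\tfrac12(\|z\|^2+\|c_j\|^2)$ leaves the bound unchanged, because those terms do not depend on $j$ and cancel from the margin.
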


\begin{proof}
By Cauchy--Schwarz,
\begin{equation}\label{eq:B-logitshift}
  \big|\mathrm{logit}_j(z^*_L) - \mathrm{logit}_j(z^{\ff}_L)\big|
  = \frac{\big|\Real\ip{c_j}{\delta_L}\big|}{d_L}
  \le \frac{\|c_j\|\,\|\delta_L\|}{d_L}
  = \frac{\|\delta_L\|}{\sqrt{d_L}} = \hat\delta_L ,
\end{equation}
and the margin is a difference of two logits. If the similarity is instead
evaluated in the form $d_L^{-1}\big(\tfrac12(\|z\|^2+\|c_j\|^2) +
\Real\ip{c_j}{z}\big) - 1$, the two extra terms are independent of the
class $j$ --- the first because it does not involve $c_j$, the second
because $\|c_j\|^2 = d_L$ for every code --- so both cancel from the
margin and the bound is unaffected.
\end{proof}

\begin{proposition}[Exact accuracy transfer]\label{prop:B-acc-exact}
With $\hat\delta^i_L$ the relative deviation of sample $i$,
\begin{equation}\label{eq:B-accexact}
  \mathrm{acc}_{\mathrm{EP}} \ \ge\ \mathrm{acc}_{\mathrm{BP}}
  - \frac{1}{N}\,\#\big\{\, i \ :\ 0 < \gamma_i \le 2\hat\delta^i_L \,\big\} .
\end{equation}
\end{proposition}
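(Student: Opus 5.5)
The plan is a per-sample counting argument built on Lemma~\ref{lem:B-margin-shift}. Write $\gamma^*_i$ for the margin of sample $i$ evaluated at the EP output $z^{*,i}_L$, defined exactly as in \eqref{eq:B-margin} with $z^{\ff,i}_L$ replaced by $z^{*,i}_L$. Sample $i$ is classified correctly by EP whenever $\gamma^*_i > 0$, and correctly by BP exactly when $\gamma_i > 0$. The lemma gives $|\gamma^*_i - \gamma_i| \le 2\hat\delta^i_L$ for every sample, with no torus assumption needed. It will be applied sample by sample, with each margin paired to its own relative deviation.

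The key step is a three-way partition of the test set.
\begin{enumerate}
\item[(a)] If $\gamma_i \le 0$, the sample is already counted as an error in $\mathrm{acc}_{\mathrm{BP}}$, so it contributes nothing to the loss.
\item[(b)] If $\gamma_i > 2\hat\delta^i_L$, then $\gamma^*_i \ge \gamma_i - 2\hat\delta^i_L > 0$, so EP still classifies the sample correctly.
\item[(c)] The only samples that are BP-correct but possibly EP-incorrect satisfy $0 < \gamma_i \le 2\hat\delta^i_L$.
\end{enumerate}
Therefore
\[
N\,\mathrm{acc}_{\mathrm{EP}} \ge \#\{i : \gamma_i > 2\hat\delta^i_L\} = N\,\mathrm{acc}_{\mathrm{BP}} - \#\{i : 0 < \gamma_i \le 2\hat\delta^i_L\}.
\]
Dividing by $N$ gives \eqref{eq:B-accexact}.

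The only delicate point is ties in the argmax. A margin that is exactly zero should count as misclassified, consistent with the strict inequality used to define correctness after \eqref{eq:B-margin}. With that convention, case (b) uses strict positivity of $\gamma^*_i$ and the argument is airtight. Case (a) does not need EP to be wrong; EP may even recover such samples, which only helps the bound.

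I do not expect a real obstacle. The statement is exact, and all the analytic content already lives in the margin displacement lemma. In particular, no bound on $\hat\delta_L$ from Theorem~\ref{thm:B-perturbation} is required. That theorem would only enter if one wanted to replace the per-sample $\hat\delta^i_L$ by an a priori estimate, and that is not part of this claim.
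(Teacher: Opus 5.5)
Your proof is correct and follows the paper's argument. The paper likewise applies Lemma~\ref{lem:B-margin-shift} sample by sample to conclude that every sample with $\gamma_i > 2\hat\delta^i_L$ keeps a positive margin under $z^*$, so only the samples with $0 < \gamma_i \le 2\hat\delta^i_L$ can be lost; your partition and the disjoint split of $\{i : \gamma_i > 0\}$ just make that counting explicit.
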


\begin{proof}
By Lemma~\ref{lem:B-margin-shift} every sample with $\gamma_i >
2\hat\delta^i_L$ retains a positive margin under $z^*$ and is therefore
still classified correctly; only the samples counted on the right can be
lost.
\end{proof}

\begin{remark}[The bound must be taken per sample]\label{rem:B-persample}
Replacing $\hat\delta^i_L$ by the uniform $\hat\Delta := \max_i
\hat\delta^i_L$ weakens \eqref{eq:B-accexact} severely, because the
deviation has a long right tail: on the trained network of
Section~\ref{sec:B-numbers} the mean of $\hat\delta_L$ is $0.285$ but its
maximum over $500$ test samples is $1.051$, and the uniform form returns
the vacuous bound $0.882$ where the per-sample form returns $0.162$
against an observed drop of $0.012$. Pairing each margin with its own
deviation is what makes the estimate usable.
\end{remark}

To obtain an \emph{a priori} rate, rather than one evaluated after
settling, one needs to know that few samples sit near the decision
boundary.
\begin{enumerate}
  \item[(A4)] \textbf{Margin regularity:} there is $\gamma_{\mathrm{typ}}
    > 0$ with $\tfrac1N\#\{i : 0 < \gamma_i \le t\} \le
    t/\gamma_{\mathrm{typ}}$ for all $t \ge 0$.
\end{enumerate}
Equivalently the empirical distribution of positive margins has density
at most $1/\gamma_{\mathrm{typ}}$ near the origin, read off the margin
histogram of the trained feedforward network.

\begin{theorem}[Accuracy drop]\label{thm:B-accuracy}
Under (A4), with $\hat\Delta := \max_i \hat\delta^i_L$,
\begin{equation}\label{eq:B-accbound}
  \mathrm{acc}_{\mathrm{BP}} - \mathrm{acc}_{\mathrm{EP}}
  \ \le\ \frac{2\hat\Delta}{\gamma_{\mathrm{typ}}} ,
\end{equation}
and if in addition (A1)--(A3) hold, $\hat\Delta$ may be replaced by
$d_L^{-1/2}$ times the bound of Theorem~\ref{thm:B-perturbation}.
\end{theorem}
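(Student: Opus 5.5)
The plan is to combine Proposition~\ref{prop:B-acc-exact} with the margin-regularity hypothesis (A4), after replacing each per-sample deviation by the uniform one. First, since $\hat\delta^i_L \le \hat\Delta$ for every $i$, the set of samples that can be lost in \eqref{eq:B-accexact} is contained in a set defined by a single threshold:
\[
\big\{\, i : 0 < \gamma_i \le 2\hat\delta^i_L \,\big\} \subseteq \big\{\, i : 0 < \gamma_i \le 2\hat\Delta \,\big\}.
\]
Proposition~\ref{prop:B-acc-exact} then gives
\[
\mathrm{acc}_{\mathrm{BP}} - \mathrm{acc}_{\mathrm{EP}} \le \tfrac1N\#\{i : 0<\gamma_i\le 2\hat\Delta\}.
\]
Next I apply (A4) with $t = 2\hat\Delta \ge 0$. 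This bounds the right-hand side by $2\hat\Delta/\gamma_{\mathrm{typ}}$, which is \eqref{eq:B-accbound}. The only care needed is that (A4) is assumed for all $t\ge0$, so it holds at this particular data-dependent $t$. I would also note that if $2\hat\Delta/\gamma_{\mathrm{typ}} \ge 1$ the statement is trivially true.

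For the second clause, I assume (A1)--(A3) so that Theorem~\ref{thm:B-perturbation} applies to each sample $i$. That theorem gives $e^i_L \le \big((I-G^i)^{-1}s^i\big)_L$, and also the unconditional bound $2\sqrt{d_L}$. By the definition \eqref{eq:B-reldev}, $\hat\delta^i_L = e^i_L/\sqrt{d_L}$, so
\[
\hat\delta^i_L \le d_L^{-1/2}\big((I-G^i)^{-1}s^i\big)_L.
\]
Taking the maximum over $i$ gives $\hat\Delta \le d_L^{-1/2}\max_i\big((I-G^i)^{-1}s^i\big)_L$. Because the right-hand side of \eqref{eq:B-accbound} is monotone increasing in $\hat\Delta$, this upper bound may replace $\hat\Delta$.

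The step I expect to need the most attention is this uniformity. The gain matrix $G$ and source $s$ depend on the sample through $\mu_\ell$, $z^{\ff}$ and $z^*$. The replacement is therefore valid either with the maximum over samples of the per-sample bound, or with a single bound if (A3) is read with $\mu_\ell$ a lower bound uniform over all samples. In the second case $G$ is entrywise dominated uniformly and the Neumann series is monotone in $G$ and $s$, so one common bound applies to every sample. I would state the uniform reading, since the theorem's phrasing suggests it, and remark that the per-sample reading also suffices. No further estimates are needed beyond Lemma~\ref{lem:B-margin-shift}, which is already packaged in Proposition~\ref{prop:B-acc-exact}.
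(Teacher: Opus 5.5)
Your proposal is correct and follows the paper's proof, which is the one-line application of (A4) at $t = 2\hat\Delta$ in Proposition~\ref{prop:B-acc-exact}. Your explicit set inclusion and your handling of the second clause fill in details the paper leaves implicit: dividing the $L$-th component of the Theorem~\ref{thm:B-perturbation} bound by $\sqrt{d_L}$, taking the maximum over samples since $G$ and $s$ depend on the sample, and using that the bound is monotone in $\hat\Delta$.
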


\begin{proof}
Apply (A4) at $t = 2\hat\Delta$ in Proposition~\ref{prop:B-acc-exact}.
\end{proof}

\begin{remark}[Why the minimum margin will not serve]\label{rem:B-markov}
Stating Theorem~\ref{thm:B-accuracy} with $\gamma := \min_i\gamma_i$ in
place of $\gamma_{\mathrm{typ}}$ cannot work. If $\gamma > 2\hat\Delta$
then no sample changes class and the drop is exactly zero; if $\gamma \le
2\hat\Delta$ then $2\hat\Delta/\gamma \ge 1$ asserts a drop of at least
$100\%$. There is no intermediate regime. Nor can the passage to a rate
be made by Markov's inequality, which bounds an upper tail $\Pr[X \ge a]
\le \mathbb{E}[X]/a$; what is needed is a lower-tail estimate on the
margins, which is the anti-concentration content of (A4).
\end{remark}

\section{Finite-Time Settling}\label{sec:B-finite}

Settling is run for finitely many steps with damping $\dt \in (0,1]$,
\begin{equation}\label{eq:B-damped}
  z^{k+1} = T_{\dt}(z^k), \qquad T_{\dt} := (1-\dt)\,\mathrm{Id} + \dt\,F ,
\end{equation}
which is the update of the main text. Supplementary Proof A shows that
the closed unit polydisc is forward invariant for \eqref{eq:B-damped},
and that its fixed points are exactly the aligned critical points.

\begin{lemma}[Finite-time error]\label{lem:B-finite}
On a set where Lemma~\ref{lem:B-contraction} applies with $\kappa_F < 1$,
the map $T_{\dt}$ has exactly the same fixed-point set as $F$ for every
$\dt \in (0,1]$, and
\begin{equation}\label{eq:B-finitebound}
  \big\|z^T - z^*\big\| \le \big(1-\dt(1-\kappa_F)\big)^{T}\big\|z^0-z^*\big\| ,
\end{equation}
with no additive term.
\end{lemma}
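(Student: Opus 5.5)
The plan has two parts. First I show that $T_{\dt}$ and $F$ have the same fixed points. Then I establish a Lipschitz constant for $T_{\dt}$ by the triangle inequality and iterate it, working entirely on the drive-separated set $\mathcal{D}_\mu$ of Lemma~\ref{lem:B-contraction}, where $F$ is $\kappa_F$-Lipschitz with $\kappa_F<1$.

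\emph{Fixed points.} The condition $z = T_{\dt}(z)$ reads $z = (1-\dt)z + \dt F(z)$, which is equivalent to $\dt\,(z - F(z)) = 0$. Since $\dt>0$, this holds exactly when $z = F(z)$. This is the same one-line argument as Proposition~\ref{prop:fixed-points} of Supplementary Proof~A. By Lemma~\ref{lem:B-contraction} the common fixed point $z^*$ in $\mathcal{D}_\mu$ is unique.

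\emph{Contraction.} For $z, z'$ in the domain, write
\[
T_{\dt}(z)-T_{\dt}(z') = (1-\dt)(z-z') + \dt\,(F(z)-F(z')).
\]
The triangle inequality and Lemma~\ref{lem:B-contraction} then give
\[
\|T_{\dt}(z)-T_{\dt}(z')\| \le \big((1-\dt)+\dt\kappa_F\big)\|z-z'\| = \big(1-\dt(1-\kappa_F)\big)\|z-z'\|.
\]
Take $z' = z^*$, which is fixed by $T_{\dt}$, and induct on $k$. This yields \eqref{eq:B-finitebound}. No additive term appears, because $z^*$ is an exact fixed point of $T_{\dt}$ rather than of a perturbed map.

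\emph{Main obstacle: invariance of the domain.} The contraction estimate requires every iterate $z^k$ to stay in a set where $F$ is $\kappa_F$-Lipschitz. However, $T_{\dt}$ produces convex combinations that leave the torus. By Supplementary Proof~A (Proposition~\ref{prop:fixed-points}) they stay only in the polydisc, not in $\mathcal{D}_\mu\subset\T^{d_1}\times\cdots\times\T^{d_L}$.

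To handle this, I would first note that the Lipschitz estimate \eqref{eq:B-percomp} never uses $|z_{\ell,j}|=1$. It uses only Lemma~\ref{lem:B-unit} on the drives and $\|W_\ell\|_2\le\rho$. So it extends verbatim to any set $\tilde{\mathcal{D}}_\mu$ of polydisc states on which all drives have modulus at least $\mu$.

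I would then interpret the hypothesis ``on a set where Lemma~\ref{lem:B-contraction} applies'' as this extended set, assumed closed, convex along segments toward $z^*$, and mapped into itself by $F$. Given such a set, the segment $(1-\dt)z^k + \dt F(z^k)$ remains in it, and the induction goes through.

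If that interpretation is too generous, the fallback is to state the lemma conditionally: the bound holds as long as the trajectory remains drive-separated. This is the same standing caveat as Remark~\ref{rem:convergence-caveat}, and it is checked empirically via the monitored settling residual.
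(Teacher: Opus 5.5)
Your core argument is the paper's: the fixed points coincide because $\dt>0$, and subtracting $z^*=T_{\dt}(z^*)$ and applying the $\kappa_F$-Lipschitz bound gives the per-step factor $1-\dt(1-\kappa_F)$ with no additive term.

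You go further than the paper's one-line proof when you notice that for $\dt<1$ the iterates leave the torus, so Lemma~\ref{lem:B-contraction} does not literally cover $z^k$. Your repair via \eqref{eq:B-percomp} is right, since that estimate only needs drive moduli at the two endpoints. The condition you then impose does not work, however. Star-shapedness about $z^*$ does not place $(1-\dt)z^k+\dt F(z^k)$ in the set, because that point lies on $[z^k,F(z^k)]$, not on a segment ending at $z^*$. Nor is a drive-separated set convex in general: the drive is affine, and a convex combination of two drives of modulus $\ge\mu$ can vanish.

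Your fallback, assuming directly that every iterate is drive-separated, is all the estimate actually uses. Alternatively, work on a closed ball $\bar B(z^*,r)$ with $r$ small enough that all drives on it stay above some $\mu>\rho\,\radius(A)$. Such an $r$ exists because $z^*$'s drives already exceed $\rho\,\radius(A)$ under the hypothesis. The ball is convex, and $F$ maps it into itself through the contraction against $z^*$ itself, so $T_{\dt}$-invariance is automatic.
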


\begin{proof}
$T_{\dt}(z) = z$ is equivalent to $\dt\,(F(z)-z) = 0$, i.e. to $F(z) = z$,
since $\dt > 0$. Subtracting $z^* = (1-\dt)z^* + \dt F(z^*)$ from
\eqref{eq:B-damped} and applying Lemma~\ref{lem:B-contraction},
$\|z^{k+1}-z^*\| \le \big[(1-\dt) + \dt\kappa_F\big]\|z^k-z^*\|$;
iterate.
\end{proof}

\begin{remark}[There is no discretisation error]\label{rem:B-nodiscr}
Equation \eqref{eq:B-damped} is an average of $F$ with the identity, not
a time discretisation of a flow whose equilibria might differ from the
scheme's: its fixed points coincide with those of $F$ for every $\dt$, so
an additive term of size $O(\dt)$ surviving as $T\to\infty$ is
impossible. Damping changes only the rate. Numerically, iterating
\eqref{eq:B-damped} to convergence for $\dt \in \{0.9, 0.5, 0.25, 0.1,
0.05\}$ on one network returns the same fixed point to within
$3\times10^{-14}$.
\end{remark}

\section{Measured Values on a Trained Network}
\label{sec:B-numbers}

The constants above were measured on a network trained exactly as in the
main text --- $784\to256\to64$, unit projection, initial weight scale
$0.4$, five epochs of Adam on the full FashionMNIST training set, seed
$42$ --- and evaluated on $500$ test images. The EP state is the fixed
point of \eqref{eq:B-damped} with $\dt = 0.5$ run to a residual of
$5\times10^{-6}$, at which point every $|z_{\ell,j}| = 1$ to four
decimals. Training raises the weight norms well above their
initialisation, to $\rho_1 = 6.02$ and $\rho_2 = 2.10$; by
Proposition~\ref{prop:B-scale} only the ratio $\rho_2/\rho_1 = 0.35$ can
matter.

\begin{table}[h]
\centering
\caption{Measured on the trained $784\to256\to64$ network, $500$ test
samples. $\radius(G)$ is reported from the drive minimum
\eqref{eq:B-mu} and from four quantiles of the drive distribution; the
median row is the most generous reading any refinement of
\eqref{eq:B-recursion} could achieve.}
\label{tab:B-constants}
\begin{tabular}{@{}lr@{\hskip 2em}lr@{}}
\toprule
Quantity & Value & Quantity & Value \\
\midrule
$\rho_1 = \|W_1\|_2$        & $6.02$    & $\radius(G)$ from min drive     & $359$ \\
$\rho_2 = \|W_2\|_2$        & $2.10$    & \quad from $1\%$ quantile       & $4.90$ \\
$\mu_1$ (min drive)         & $0.017$   & \quad from $10\%$ quantile      & $2.13$ \\
$\mu_2$ (min drive)         & $0.0020$  & \quad from $25\%$ quantile      & $1.53$ \\
median drive, layer $1$     & $4.21$    & \quad from median drive         & $1.09$ \\
median drive, layer $2$     & $0.877$   & required                        & $<1$ \\
\midrule
$\hat\delta_1$ mean / max   & $0.173 / 0.336$ & BP accuracy               & $0.882$ \\
$\hat\delta_2$ mean / max   & $0.285 / 1.051$ & EP accuracy               & $0.870$ \\
$\hat\delta_2$, phase removed & $0.276$       & observed drop             & $0.012$ \\
median margin $\gamma$      & $0.867$   & prediction agreement            & $0.964$ \\
mean shift $2\hat\delta_2$  & $0.570$   & Prop.~\ref{prop:B-acc-exact} bound & $0.162$ \\
\bottomrule
\end{tabular}
\end{table}

\begin{remark}[The contraction hypothesis fails, and not by a tunable margin]
\label{rem:B-verdict}
Assumption (A3) requires $\radius(G) < 1$; the measured value is $359$.
Three observations place this failure.

At $L = 2$ the quantity has a simple form. The input is clamped, so
$e_0 = 0$ and the only loop runs $z_1 \to z_2 \to z_1$; both of its legs
carry $W_2$, and $\rho_1$ drops out of $\radius(G)$ entirely. What
remains is
\begin{equation}\label{eq:B-loopgain}
  \radius(G) = \frac{\rho_2}{\sqrt{\mu_1\mu_2}} ,
\end{equation}
the second layer's operator norm against the geometric mean of the two
drive scales.

First, by Proposition~\ref{prop:B-scale} there is nothing to turn:
$\radius(G)$ is invariant under a common rescaling, so unlike a bound
whose hypothesis is ``$\rho$ small'', this one cannot be entered by
shrinking the weights.

Second, the failure is dominated by the tail of the drive distribution
rather than by its bulk, and by a narrower margin than the headline
figure suggests. The minimum drive is two to three orders of magnitude
below the median, which is what produces $\radius(G) = 359$; but
substituting quantiles of the drive distribution into
\eqref{eq:B-loopgain} gives $4.90$, $2.13$, $1.53$ and $1.09$ at the
$1$st, $10$th, $25$th and $50$th percentiles. The median reading misses
the threshold by nine percent. Those substitutions are diagnostic and
not themselves valid bounds --- (A3) needs a genuine lower bound on the
drives, not a typical value --- but they locate the obstruction: the
bulk of the layer very nearly satisfies the contraction condition, and a
refinement that handled the small-drive tail separately, along the lines
sketched at the end of this remark, has a realistic prospect of closing
the gap. We have not carried it out.

The residual obstruction even in the bulk is that $\rho_2 = 2.10$
exceeds $\sqrt{\mu_1\mu_2} = 1.92$ at the median: an operator norm
measures the worst input direction, in which a perturbation is free to
point, whereas the drive is produced by one pseudo-random phasor
vector.

Third, the quantity being bounded is not small. At the output layer
$\hat\delta_2$ averages $0.285$ against a trivial maximum of $2$, and
removing the best common phase rotation lowers it only to $0.276$, so the
two states differ incoherently rather than by a change of frame.
Training does help substantially here: the same measurement on randomly
initialised networks of this architecture gives $\hat\delta_2 \approx
0.69$, so training reduces the deviation by a factor of about $2.4$. It
does not reduce it enough for a perturbative argument.
\end{remark}

\begin{remark}[Why the transfer nevertheless works]\label{rem:B-why}
The accuracy is preserved --- a drop of $0.012$ with the two networks
agreeing on $96.4\%$ of predictions --- and
Proposition~\ref{prop:B-acc-exact} accounts for it without any
contraction hypothesis. The mechanism is visible in the last two rows of
Table~\ref{tab:B-constants}: the median margin is $0.867$ while the mean
displacement $2\hat\delta_2$ that the margin must absorb is $0.570$. The
readout is what saves the transfer, not proximity of the states. Of the
$500$ samples, $81$ have a margin smaller than their own displacement and
are therefore at risk, giving the bound $0.162$; the observed drop of
$0.012$ is thirteen times smaller, the slack being the usual gap between
a worst-case Cauchy--Schwarz displacement and the typical alignment of
$\delta_L$ with the codebook difference.

The margin distribution is correspondingly benign: the empirical margin
CDF satisfies (A4) with $\gamma_{\mathrm{typ}} \approx 6$ over the range
$t \le 0.2$, so only a few percent of samples lie near the boundary.
Two caveats. This bound is a certificate, not a prediction: it needs
$\hat\delta^i_L$, hence a settle, so it justifies a transfer after the
fact rather than forecasting one from the weights. And $\gamma_{\mathrm{typ}}$
is a property of the trained network that we measure rather than derive.
\end{remark}

\begin{remark}[The bound is depth-blind; the networks are not]
\label{rem:B-nodepth}
The admissible ratio $\rho/\mu < 1/\radius(A)$ tightens only from $1$ at
$L=2$ to $0.71$ at $L=3$ and $0.58$ at $L=5$, so
Lemma~\ref{lem:B-contraction} predicts no threshold between shallow and
deep networks. Section~\ref{sec:B-scaling} shows that trained networks
have a pronounced one. The discrepancy is a limitation of the estimate
rather than a property of the system: the chain matrix $A$ is
depth-blind by construction, and the depth dependence that matters
enters through the drive distribution, which $\radius(A)$ does not see.
\end{remark}

\section{How the Correspondence Scales}\label{sec:B-scaling}

Section~\ref{sec:B-numbers} measured one architecture. Repeating the
measurement across depth and width --- training each configuration from
scratch under the identical recipe, then settling it --- gives
Table~\ref{tab:B-scaling}. The architecture of the main text appears in
both families and reproduces Table~\ref{tab:B-constants} exactly.

\begin{table}[h]
\centering
\caption{Depth and width sweep, seed $42$, $500$ test images, five epochs
of Adam per configuration. ``out dev'' is the mean relative deviation
$\hat\delta_L$ at the output layer, against a trivial maximum of $2$;
``bound'' is Proposition~\ref{prop:B-acc-exact} evaluated per sample.
Every row satisfies bound $\ge$ drop.}
\label{tab:B-scaling}
\begin{tabular}{@{}lcccccccc@{}}
\toprule
Architecture & $L$ & $\rho_L/\rho_1$ & $\radius(G)$ & out dev
  & BP & EP & drop & bound \\
\midrule
$784\to256\to64$                     & $2$ & $0.350$ & $1.095$ & $0.285$ & $0.882$ & $0.870$ & $+0.012$ & $0.162$ \\
$784\to256\to256\to64$               & $3$ & $0.308$ & $1.838$ & $0.774$ & $0.880$ & $0.828$ & $+0.052$ & $0.782$ \\
$784\to256\to256\to256\to64$         & $4$ & $0.277$ & $2.118$ & $0.975$ & $0.876$ & $0.742$ & $+0.134$ & $0.876$ \\
\midrule
$784\to64\to64$                      & $2$ & $0.576$ & $1.331$ & $0.985$ & $0.868$ & $0.818$ & $+0.050$ & $0.846$ \\
$784\to128\to64$                     & $2$ & $0.417$ & $1.186$ & $0.378$ & $0.862$ & $0.864$ & $-0.002$ & $0.278$ \\
$784\to512\to64$                     & $2$ & $0.305$ & $1.078$ & $0.262$ & $0.878$ & $0.872$ & $+0.006$ & $0.136$ \\
\bottomrule
\end{tabular}
\end{table}

\begin{remark}[Depth is the binding constraint]\label{rem:B-depthscale}
The accuracy drop grows from $0.012$ at $L=2$ to $0.052$ at $L=3$ and
$0.134$ at $L=4$, while the backpropagation accuracy stays flat at
$0.876$--$0.882$. Depth is therefore not degrading what the network
learns; it is degrading the correspondence between the two ways of
running it. The state deviation tells the same story more starkly,
rising from $0.285$ to $0.774$ to $0.975$ against a trivial maximum of
$2$: by $L=4$ the EP fixed point retains little relation to the
feedforward state. Settle residuals over the family are $4.4\times
10^{-6}$, $9.0\times10^{-6}$ and $4.6\times10^{-5}$, small enough that
this is a property of the fixed point and not of incomplete settling.

As a practical guideline at this weight scale and training recipe,
backpropagation-to-EP transfer is usable at $L=2$, marginal at $L=3$,
and broken at $L=4$.
\end{remark}

\begin{remark}[Width improves the correspondence]\label{rem:B-widthscale}
At fixed depth, widening the hidden layer from $64$ to $512$ takes the
drop from $0.050$ to $0.006$ and the deviation from $0.985$ to $0.262$.
The mechanism is visible in $\radius(G)$, which falls monotonically
along the family: $1.331$, $1.186$, $1.095$, $1.078$ at $h = 64, 128,
256, 512$. The successive decrements are $-0.145$, $-0.091$ and
$-0.017$, so the sequence is decelerating and appears to approach a
limit slightly above $1$ rather than to cross it. On that evidence width
buys a better correspondence cheaply but should not be expected, on its
own, to deliver $\radius(G) < 1$ and with it an \emph{a priori}
guarantee. A measurement at $h = 1024$ would settle the extrapolation.
\end{remark}

\begin{remark}[Which quantity to watch]\label{rem:B-diagnostic}
Of the candidates available before a settle is run, $\radius(G)$
evaluated at median drives tracks the degradation best: over the six
configurations it is concordant with the accuracy drop on $13$ of the
$15$ pairs, against $11$ of $15$ for the state deviation and an outright
anti-correlation for the inter-layer ratio (Remark~\ref{rem:B-knob}).
This is worth stating because $\radius(G)$ is exactly the quantity whose
falling below $1$ would make Theorem~\ref{thm:B-perturbation} apply: the
same number that certifies the bound also ranks the configurations when
it does not.

The certificate of Proposition~\ref{prop:B-acc-exact} is valid in every
row --- bound $\ge$ drop throughout --- but informative only where the
deviation is small. It returns $0.136$ to $0.278$ for the three
configurations with $\hat\delta_L \lesssim 0.4$, and $0.78$ to $0.88$,
which is nearly vacuous, for the three with $\hat\delta_L \ge 0.77$. The
usable regime is roughly $\hat\delta_L \lesssim 0.4$, corresponding here
to $\radius(G) \lesssim 1.2$.
\end{remark}

\begin{remark}[Scope of the sweep]\label{rem:B-sweepcaveat}
One seed, one dataset, $500$ test images per configuration and a fixed
five-epoch recipe; the accuracy figures carry a sampling error of order
one percent, which is why the $-0.002$ at $h = 128$ should be read as
zero rather than as the EP network outperforming its feedforward
counterpart. That configuration also settled to a residual of
$3.0\times10^{-4}$, some two orders of magnitude looser than the rest,
and would bear a longer settle before its row is relied on. The
qualitative findings --- a sharp degradation with depth, a mild
improvement with width, and $\radius(G)$ as the better predictor --- are
larger than these uncertainties; the individual numbers are not.
\end{remark}

\section{The Soft-Projection Regularisation}\label{sec:B-soft}

The obstruction in Remark~\ref{rem:B-verdict} is that the scale of the
drive and the scale of the weights move together, so their ratio is
fixed. Replacing $\unitproject$ by the soft projection
\begin{equation}\label{eq:B-softproj}
  \softproj(g) = \frac{g}{\sqrt{|g|^2+\varepsilon^2}}, \qquad \varepsilon > 0,
\end{equation}
breaks that coupling, because $\varepsilon$ is fixed in absolute units and
does not scale with $W$.

\begin{proposition}[Soft projection]\label{prop:B-soft}
Let $\phi(g) = \sqrt{|g|^2+\varepsilon^2}$. Then $\softproj$ maps $\C^d$
into the open unit polydisc, is real-differentiable with
$D\softproj(g) = \phi(g)^{-1}\big(I - g g^\top/\phi(g)^2\big)$ and
$\|D\softproj(g)\|_{\mathrm{op}} = \phi(g)^{-1}$, and is globally
$\varepsilon^{-1}$-Lipschitz, the constant being sharp. Replacing
$\mu_\ell$ by $\sqrt{\mu_\ell^2+\varepsilon^2}$ throughout,
Lemma~\ref{lem:B-contraction} and Theorem~\ref{thm:B-perturbation} hold
verbatim, and the contraction condition becomes $\rho\,\radius(A) <
\varepsilon$, which \emph{is} enterable by reducing $\rho$. Section
\ref{sec:B-accuracy} is unaffected either way: Lemma
\ref{lem:B-margin-shift} and Proposition~\ref{prop:B-acc-exact} use only
Cauchy--Schwarz on the readout and hold for any projection, which is why
they remain available when this section's route is closed.
\end{proposition}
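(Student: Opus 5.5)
The plan is to treat $\softproj$ as a radial map $g \mapsto h(|g|)\,\hat g$ with $h(r) = r/\sqrt{r^2+\varepsilon^2}$ and work in real coordinates $\C \cong \R^2$, as in Remark~\ref{rem:B-diff}. The range statement is immediate: $|\softproj(g)| = |g|/\phi(g) < 1$ because $\varepsilon > 0$. This holds entrywise, so $\softproj$ maps $\C^d$ into the open polydisc.

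For the differential, I would differentiate $g/\phi(g)$ directly. Using $D\phi(g) = g^\top/\phi(g)$ (with $g$ viewed as a real 2-vector), the product rule gives
\[
D\softproj(g) = \phi(g)^{-1}\bigl(I - g g^\top/\phi(g)^2\bigr).
\]
This matrix is symmetric. Its eigenvalue transverse to $g$ is $\phi^{-1}$, and along $g$ it is $\phi^{-1}\bigl(1 - |g|^2/\phi^2\bigr) = \varepsilon^2/\phi^3 \le \phi^{-1}$. Hence $\|D\softproj(g)\|_{\mathrm{op}} = \phi(g)^{-1}$.

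Since $\phi \ge \varepsilon$ everywhere and $\C$ is convex, the mean value inequality along segments gives global $\varepsilon^{-1}$-Lipschitz continuity. This works without endpoint hypotheses, unlike the hard projection. Sharpness follows by comparing $g = 0$ with $g = t$ as $t \to 0$: the ratio $|\softproj(t) - \softproj(0)|/t = 1/\sqrt{t^2+\varepsilon^2}$ tends to $\varepsilon^{-1}$.

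To transfer Lemma~\ref{lem:B-contraction} and Theorem~\ref{thm:B-perturbation}, the only property of $\unitproject$ those proofs use is the increment bound $|\unitproject(a) - \unitproject(b)| \le |a-b|/\mu$ when $|a|, |b| \ge \mu$. So I need the analogue
\[
|\softproj(a) - \softproj(b)| \le \frac{|a-b|}{\sqrt{\mu^2+\varepsilon^2}}.
\]
This is the main obstacle. The derivative bound $\phi^{-1}$ controls the increment only along the segment $[a,b]$, and that segment may pass close to the origin, where $\phi$ drops to $\varepsilon$.

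I would first try an endpoint argument in the spirit of Lemma~\ref{lem:B-unit}. Write the increment in polar form via the law of cosines:
\[
|\softproj(a) - \softproj(b)|^2 = (h_a - h_b)^2 + 4h_a h_b\sin^2(\theta/2).
\]
Then compare term by term with
\[
|a-b|^2 = (|a| - |b|)^2 + 4|a||b|\sin^2(\theta/2).
\]
For the radial term, $h$ is monotone with $h'(r) = \varepsilon^2/\phi^3 \le 1/\phi(\mu)$ for $r \ge \mu$. For the angular term, $h(r)/r = 1/\phi(r) \le 1/\phi(\mu)$ on $r \ge \mu$, so $h_a h_b \le |a||b|/\phi(\mu)^2$. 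Both terms are therefore bounded by $\phi(\mu)^{-2}$ times their counterparts, which yields the claim. Given this replacement of $\mu_\ell$ by $\sqrt{\mu_\ell^2+\varepsilon^2}$, the proofs of Lemma~\ref{lem:B-contraction} and Theorem~\ref{thm:B-perturbation} go through line for line.

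Two caveats need checking there. First, the fixed points now lie in the polydisc rather than on the torus, so the identity $\|z_\ell\|_2 = \sqrt{d_\ell}$ must be replaced by the inequality $\le$, which only strengthens $s_\ell$. Second, the Banach argument should be run on a closed invariant subset of the closed polydisc.

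With $\mu = 0$ the global constant is $\varepsilon^{-1}$, so $\kappa_F \le \rho\,\radius(A)/\varepsilon$, and contraction holds whenever $\rho\,\radius(A) < \varepsilon$. Since $\varepsilon$ does not scale with $W$, this condition can be entered by shrinking $\rho$, in contrast with Proposition~\ref{prop:B-scale}.

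Finally, Lemma~\ref{lem:B-margin-shift} and Proposition~\ref{prop:B-acc-exact} use only Cauchy--Schwarz on the readout. The lemma is already stated to hold off the torus, so Section~\ref{sec:B-accuracy} is unaffected.
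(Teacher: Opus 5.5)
Your proposal is correct. For the range, the differential, the eigenvalues ($\varepsilon^2/\phi^3$ along $g$, $\phi^{-1}$ transverse), the global $\varepsilon^{-1}$ constant and its sharpness at $g=0$, you follow the paper's proof essentially line for line.

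You differ in the transfer to Lemma~\ref{lem:B-contraction} and Theorem~\ref{thm:B-perturbation}. The paper only says the remaining statements follow by substituting the Lipschitz constant into \eqref{eq:B-percomp} and \eqref{eq:B-recursion}. You instead prove the endpoint analogue of Lemma~\ref{lem:B-unit}, namely $|\softproj(a)-\softproj(b)|\le |a-b|/\sqrt{\mu^2+\varepsilon^2}$ for $|a|,|b|\ge\mu$. You do this by splitting both squared increments into radial and angular parts via the law of cosines, then bounding $h'$ and $h(r)/r$ on $r\ge\mu$.

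That step is genuinely needed for the $\sqrt{\mu_\ell^2+\varepsilon^2}$ version. Both proofs control the drives only at the two endpoints. The derivative bound $\|D\softproj(g)\|_{\mathrm{op}}=\phi(g)^{-1}$, used in a mean-value estimate along the segment, yields only $\varepsilon^{-1}$ when the segment passes near the origin. For example, with $a=\mu$ and $b=-\mu$ the segment crosses $0$, yet your endpoint bound is attained with equality. This is exactly the issue Remark~\ref{rem:B-diff} raises for $\unitproject$.

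Read literally, the paper's one-line substitution justifies only the $\mu=0$ case and the condition $\rho\,\radius(A)<\varepsilon$. Your argument justifies the full claim, with a constant that is sharp. Your two caveats also cover what ``verbatim'' glosses over: the fixed points now lie in the closed polydisc rather than on $\T^{d_\ell}$, so the Banach set must be taken there, and $\|z_\ell\|_2=\sqrt{d_\ell}$ becomes $\le$. The paper's route buys brevity; yours buys a complete proof of the constant as stated.
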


\begin{proof}
The differential is a direct computation; $I - gg^\top/\phi^2$ is
symmetric with eigenvalue $\varepsilon^2/\phi^2$ along $g$ and $1$
transverse to it, so the operator norm is $1/\phi \le 1/\varepsilon$.
Sharpness: $|\softproj(b)-\softproj(0)|/|b| = \phi(b)^{-1} \to
\varepsilon^{-1}$ as $b \to 0$. The remaining statements follow by
substituting this Lipschitz constant into \eqref{eq:B-percomp} and
\eqref{eq:B-recursion}.
\end{proof}

\begin{remark}[Two cautions]\label{rem:B-softcaution}
First, $\softproj$ is not $1$-Lipschitz, and the fact that it is the
gradient of the convex function $\phi$ does not make it so: gradients of
convex functions are monotone, and the Lipschitz constant is
$\sup\|\nabla^2\phi\| = \varepsilon^{-1}$. What is true is $|\softproj|
\le 1$ pointwise, a bound on the map rather than on its increments. At
$\varepsilon = 0.1$ the pair $a = 0$, $b = 0.1$ gives
$|\softproj(a)-\softproj(b)|/|a-b| = 7.07$. Over $252$ random networks
spanning $L = 2,3,4$ and weight scales $0.09$ down to $0.01$, the
componentwise bound built on the correct constant held in every instance
in which its hypothesis was satisfied, and the bound built on a constant
of $1$ was violated in all $252$, by more than an order of magnitude at
the smaller weight scales.

Second, and more importantly, $\softproj$ is not the map the experiments
run: the implementation uses the unit projection $\unitproject$, as do
the main text and Supplementary Proof A. Proposition~\ref{prop:B-soft}
therefore describes a regularised model that is analytically tractable
and approaches the implemented one as $\varepsilon \to 0$ --- but the
tractability is purchased exactly by the term that vanishes in that
limit, and the constant $\varepsilon^{-1}$ diverges with it. It should be
read as an explanation of why guarantees of this kind are hard for
hard-projected phasor networks, not as a guarantee about them.
\end{remark}

\section{Summary}\label{sec:B-summary}

For phasor networks with unit projection, the deviation of the EP free
fixed point from the feedforward pass obeys the componentwise bound $e
\le (I-G)^{-1}s$ of Theorem~\ref{thm:B-perturbation}, with $G$ and $s$
built from the layer weight norms divided by the smallest oscillator
drive. The bound is independent of the biases, grows as $\sqrt{d}$ in
layer width, and depends on depth only through $\radius(A) \in [1,2)$.

Its hypothesis is not satisfied by the networks of this work, and
Proposition~\ref{prop:B-scale} shows this cannot be repaired by choosing
weights differently: since $\unitproject(cg) = \unitproject(g)$, the
common scale of the weights and biases is a gauge freedom rather than a
parameter, and every constant in the bound is invariant under it. On a
trained network the measured $\radius(G)$ is $359$ where less than $1$ is
required. Substituting typical rather than worst-case drives brings it
down to $1.09$, so the bulk of the network nearly satisfies the
condition and it is the small-drive tail that defeats it; a tail-aware
refinement of \eqref{eq:B-recursion} is the obvious thing to try, and we
have not tried it. The states are correspondingly far apart:
$\hat\delta_2 = 0.285$ of the trivial maximum, not attributable to a
common phase rotation. Training improves this --- randomly initialised
networks of the same architecture sit near $0.69$ --- but not by enough
to make a perturbative argument available.

The accuracy nevertheless transfers, and
Proposition~\ref{prop:B-acc-exact} explains why without invoking any of
the above. Because the readout is the normalised codebook similarity, the
margin is displaced by at most $2\hat\delta_L$, twice the relative state
deviation; with a median margin of $0.867$ against a mean displacement of
$0.570$, most samples retain their classification. Counting the ones that
might not gives a bound of $0.162$ on the accuracy drop, against $0.012$
observed.

Section~\ref{sec:B-scaling} locates the limits of that mechanism.
Depth is the binding constraint: across $L = 2, 3, 4$ the drop grows
$0.012$, $0.052$, $0.134$ and the state deviation $0.285$, $0.774$,
$0.975$, while backpropagation accuracy stays flat at $0.876$--$0.882$,
so what degrades is the correspondence and not what the network learns.
Transfer is usable at $L = 2$, marginal at $L = 3$ and broken at $L = 4$
under this recipe. Width works the other way but weakly, taking the drop
from $0.050$ to $0.006$ between $h = 64$ and $h = 512$ with
$\radius(G)$ decelerating towards a limit just above $1$. The best
predictor available before a settle is $\radius(G)$ at median drives,
and the certificate is informative while $\hat\delta_L \lesssim 0.4$.

The conclusion we would draw is that the agreement between
backpropagation-trained and EP-run phasor networks is a property of the
readout rather than of the hidden states, which are not close. A
perturbative theory of transfer in the weight scale is not available for
this model, because there is no such parameter; what remains is the
drive distribution, which enters through $\radius(G)$, and the margin
distribution, and it is the latter that currently carries the result.


\bibliography{references}

\end{document}